\definecolor{darkscarlet}{rgb}{0.34, 0.01, 0.1}
\definecolor{yaleblue}{rgb}{0.06, 0.3, 0.57}
\definecolor{darkpowderblue}{rgb}{0.0, 0.2, 0.6}
\definecolor{midnightblue}{HTML}{0059b3}
\definecolor{noonblue}{HTML}{e5eef7}
\definecolor{chromered}{HTML}{f14233}
\definecolor{olivedrab}{HTML}{6b8e23}
\definecolor{bgcolor}{rgb}{0.8,1,1}
\renewcommand*{\backref}[1]{}
\renewcommand*{\backrefalt}[4]{%
   \ifcase #1 %
     \footnotesize{(Not cited.)}%
   \or
     \footnotesize{(Cited on page~#2)}%
   \else
     \footnotesize{(Cited on page~#2)}%
\fi }
\newcommand{\norm}[1]{\left\| #1 \right\|}
\newcommand{\inp}[2]{\left\langle#1,#2\right\rangle} 
\newcommand{\parens}[1]{\left( #1 \right)}
\newcommand{\brac}[1]{\left\{ #1 \right\}}
\newcommand{\cB}{\mathcal{B}}
\newcommand{\cC}{\mathcal{C}}
\newcommand{\cD}{\mathcal{D}}
\newcommand{\cS}{\mathcal{S}}
\newcommand{\del}[1]{}
\newcommand{\R}{\mathbb{R}} 
\newcommand{\eqdef}{:=} 
\newcommand{\Exp}[1]{{\mathbb{E}}\left[#1\right]}
\newcommand{\ExpCond}[2]{{\mathbb{E}}\left[\left.#1\right\vert#2\right]}
\newcommand{\ExpSub}[2]{{\mathbb{E}}_{#1}\left[#2\right]}
\DeclareMathOperator*{\argmin}{arg\,min}
\definecolor{mydarkgreen}{RGB}{39,130,67}
\definecolor{mydarkred}{RGB}{192,47,25}
\definecolor{mydarkorange}{RGB}{39,130,67}
\newcommand{\green}{\color{mydarkgreen}}
\newcommand{\red}{\color{mydarkred}}
\newcommand{\cmark}{\green\ding{51}}%
\newcommand{\xmark}{\red\ding{55}}%
\newcommand{\algnametiny}[1]{{\tiny \red\sf #1}}
\newcommand{\algnamefootnote}[1]{{\footnotesize \red\sf #1}}
\newcommand{\algname}[1]{{\small \red \sf #1}}
\newtheorem{assumption}{Assumption}
\newtheorem{lemma}{Lemma}
\newtheorem{theorem}{Theorem}
\theoremstyle{plain}
\theoremstyle{definition}
\newtheorem{remark}[theorem]{Remark}
\newtheorem{definition}[theorem]{Definition}
\definecolor{codegreen}{rgb}{0,0.6,0}
\definecolor{codegray}{rgb}{0.5,0.5,0.5}
\definecolor{codepurple}{rgb}{0.58,0,0.82}
\definecolor{backcolour}{rgb}{0.95,0.95,0.92}
\lstdefinestyle{mystyle}{
    backgroundcolor=\color{backcolour},   
    commentstyle=\color{codegreen},
    keywordstyle=\color{magenta},
    numberstyle=\tiny\color{codegray},
    stringstyle=\color{codepurple},
    basicstyle=\ttfamily\footnotesize,
    breakatwhitespace=false,         
    breaklines=true,                 
    captionpos=b,                    
    keepspaces=true,                 
    numbers=left,                    
    numbersep=5pt,                  
    showspaces=false,                
    showstringspaces=false,
    showtabs=false,                  
    tabsize=2
}
\newcommand{\lmo}[2]{{\rm LMO}_{#1}(#2)}
\definecolor{midnightblue}{rgb}{0.1, 0.1, 0.44}
\renewcommand{\algname}[1]{{\small \color{midnightblue!70!black} \sf #1}}
\title{Gluon: Making Muon \& Scion Great Again! \\(Bridging Theory and Practice of LMO-based Optimizers for LLMs)}
\author{
Artem Riabinin \qquad Egor Shulgin \qquad Kaja Gruntkowska \qquad Peter Richt{\'a}rik \\
\phantom{x}
    \\
    King Abdullah University of Science and Technology (KAUST) \\
    Thuwal, Saudi Arabia
}
\date{}
\begin{document}

\maketitle

\begin{abstract}
    Recent developments in deep learning optimization have brought about radically new algorithms based on the Linear Minimization Oracle (LMO) framework, such as \algname{Muon} \citep{jordan2024muon} and \algname{Scion} \citep{pethick2025training}. After over a decade of \algname{Adam}'s dominance, these LMO-based methods are emerging as viable replacements, offering several practical advantages such as improved memory efficiency, better hyperparameter transferability, and most importantly, superior empirical performance on large-scale tasks,  including LLM training. However, a significant gap remains between their practical use and our current theoretical understanding: prior analyses (1) overlook the layer-wise LMO application of these optimizers in practice, and (2) rely on an unrealistic smoothness assumption, leading to impractically small stepsizes. To address both, we propose a new LMO-based method called \algname{Gluon}, capturing prior theoretically analyzed methods as special cases, and introduce a new refined generalized smoothness model that captures the layer-wise geometry of neural networks, matches the layer-wise practical implementation of \algname{Muon} and \algname{Scion}, and leads to convergence guarantees with strong practical predictive power. Unlike prior results, our theoretical stepsizes closely match the fine-tuned values reported by \citet{pethick2025training}. Our experiments with \texttt{NanoGPT} and \texttt{CNN} confirm that our assumption holds along the optimization trajectory, ultimately closing the gap between theory and practice.
\end{abstract}

\section{Introduction}

The success of deep learning models across a wide range of challenging domains is inseparable from the optimization algorithms used to train them. As neural networks have grown deeper and datasets larger, optimization has quietly become one of the most consequential components of modern machine learning (ML). Nowhere is this more evident than in the training of large language models (LLMs), which routinely consume thousands of GPU-hours.
\algname{Adam} \citep{kingma2015adam} (and lately  \algname{AdamW} \citep{loshchilov2019decoupled})---being effective, relatively reliable, and widely adopted---has for over a decade served as the default choice for this task. While this reliance has powered much of deep learning’s progress, it has also exposed the shortcomings of adaptive moment estimation as a one-size-fits-all solution--namely, sensitivity to learning rate schedules, heavy tuning requirements \citep{wilson2017marginal}, and poor generalization when not carefully calibrated~\citep{zou2021understanding}. However, a shift may now be underway. Recent optimizers, such as \algname{Muon} \citep{jordan2024muon} and \algname{Scion} \citep{pethick2025training}, represent a significant departure from \algname{Adam}-type methods: they forgo the adaptive moment estimation in favor of a geometry-aware approach inspired by Frank-Wolfe algorithms \citep{frank1965algorithm, pokutta2024frank}. These optimizers are not only simpler to implement and easier to tune, but also appear empirically stronger, outperforming \algname{AdamW} in LLM training \citep{liu2025muon, pethick2025training}.

Yet, despite their potential, these new methods are still in their infancy, and our understanding of their theoretical foundations and practical utility in LLM training remains incomplete. Prior convergence guarantees in realistic nonconvex regimes are still far from satisfactory. Indeed, as we argue in \Cref{sec:impl_vs_an}, the (very few) existing theoretical analyses \emph{fail to capture the true algorithms used in practice}, focusing instead on simplified variants that diverge from actual implementations.
We identify two key mismatches---\emph{neglect of layer-wise structure} (\Cref{sec:layerwise}) and flawed stepsize choices stemming from an \emph{inaccurate smoothness model} (\Cref{sec:orig_steps})---and close this gap with a \emph{solution to both}. We elaborate on these advances in the remainder of the paper.

\begin{algorithm}[t]
\caption{\algname{Gluon:} Stochastic Adaptive Layer-Wise LMO-based Optimizer with Momentum}
\label{algo:stochastic}
\begin{algorithmic}[1]
\State \textbf{Input:} Initial model parameters $X^0 = [X_1^0, \dots, X_p^0] \in \cS$, momentum $M^0=[M_1^0, \dots, M_p^0]\in \cS$, momentum decay factors $\beta^k \in [0,1)$ for all iterations $k\geq 0$
\For{$k = 0, 1, 2, \dots, K-1$}
    \State Sample $\xi^k \sim \mathcal{D}$
    \For{$i = 1, 2, \dots, p$}
        \State Compute stochastic gradient $\nabla_i f_{\xi^k}(X^k)$ for layer $i$
        \State \label{step:momentum}Update momentum \ $M_i^k=\beta^k M^{k-1}_i+(1-\beta^k)\nabla_i f_{\xi^k}(X^k)$ for layer $i$
        \State Choose adaptive stepsize/radius $t_i^k>0$ for layer $i$
        \State Update parameters for layer $i$ via LMO over $\mathcal{B}_i^k \eqdef \{ X_i \in \cS_i : \|X_i - X_i^k\|_{(i)} \leq t_i^k \}$:
        \begin{equation}
            \label{eq:update_rule_stoch}
            X_i^{k+1} = \mathrm{LMO}_{\mathcal{B}_i^k} \left( M_i^k \right)
            := \underset{X_i \in \mathcal{B}_i^k}{\arg\min} \ \langle M_i^k, X_i \rangle_{(i)}
        \end{equation}
    \EndFor
    \State Update full parameter vector \
    $
    X^{k+1} = [X_1^{k+1}, \ldots, X_p^{k+1}]
    $
\EndFor
\end{algorithmic}
\end{algorithm}

Our goal is to solve the general optimization problem
\begin{eqnarray}\label{eq:problem}
    \min_{X \in\cS} \brac{f(X) := \ExpSub{\xi \sim \mathcal{D}}{f_{\xi}(X)}},
\end{eqnarray}
where $\cS$ is a finite-dimensional vector space and $f_{\xi}: \cS \mapsto \mathbb{R}$ are potentially non-convex and non-smooth but continuously differentiable functions. Here, $f_\xi(X)$ represents the loss of model parameterized by $X$ associated with training data point $\xi$ sampled from probability distribution~$\mathcal{D}$. To make the problem meaningful, we  assume that $f^{\inf} \eqdef \inf_{X \in \cS} f(X) > - \infty$. In this work we are particularly interested in the scenario when the parameter vector $X\in \cS$ is obtained by collecting the matrices $X_i \in \cS_i \eqdef \R^{m_i \times n_i}$ of trainable parameters across all layers $i = 1,\ldots, p$ of a deep model. For simplicity, we therefore write $X = [X_1, \ldots, X_p]$. This means that, formally, $\cS$ is the $d$-dimensional product space
\begin{align*}
    \textstyle \cS \eqdef \bigotimes_{i = 1}^{p} \cS_i \equiv \cS_1 \otimes \cdots \otimes \cS_p,
\end{align*}
where $d\eqdef \sum_{i=1}^p m_i n_i $. With each space $\cS_i$ we associate the trace inner product  $\langle X_i, Y_i \rangle_{(i)} \eqdef \operatorname{tr}(X_i^{\top} Y_i)$ for $X_i,Y_i \in \cS_i$, and an arbitrary norm $\|\cdot\|_{(i)}$, not necessarily induced by the inner product.

\section{Theory vs. practice of Muon and Scion}\label{sec:impl_vs_an}

In this work, we focus on an algorithm based on iteratively calling linear minimization oracles (LMOs) across all layers, formalized in \Cref{algo:stochastic}, for which we coin the name \algname{Gluon}. In particular, for each layer $i$, independently across all layers, \algname{Gluon} iteratively updates the parameters  via
\begin{align*}
    X_i^{k+1} = \lmo{\mathcal{B}_i^k}{M_i^k}
    \eqdef \underset{X_i \in \mathcal{B}_i^k}{\arg\min} \ \langle M_i^k, X_i \rangle_{(i)}, \; \text{where} \; \cB_i^k \eqdef \{X_i \in  \cS_i : \|X_i - X_i^k\|_{(i)} \leq t_i^k \},
\end{align*}
where $t_i^k>0$ is an adaptively chosen stepsize/radius/learning rate.\footnote{In this context, the radii defining the norm balls in the LMOs effectively act as stepsizes--see \Cref{sec:step_note}. Accordingly, we use the terms \emph{radius}, \emph{stepsize}, and \emph{learning rate} interchangeably throughout.}
Note that the momentum  $M^k = [M_1^k, \dots, M_p^k] \in \cS$ accumulates the contributions from the stochastic gradients $\nabla f_{\xi^k}(X^k) = [\nabla_1 f_{\xi^k}(X^k),\ldots,\nabla_p f_{\xi^k}(X^k)] \in \cS$ (see Step \ref{step:momentum} of \Cref{algo:stochastic}). 

The \algname{Gluon} framework generalizes a range of methods, including \algname{Muon} and \algname{Scion}, which are recovered as special cases under specific norm choices (see \Cref{sec:det_examples} and \Cref{sec:lmo_examples}).
Beyond their ability to outperform \algname{AdamW} on large-scale benchmarks, these optimizers offer a number of attractive properties: improved memory efficiency, greater robustness to hyperparameter settings, and the ability to transfer those settings across model sizes \citep{pethick2025training, shah2025practical}. Moreover, in contrast to \algname{Adam}, they were theoretically analyzed shortly after release and are guaranteed to converge under standard assumptions of Lipschitz smoothness\footnote{A function $f:\cS\mapsto\R$ is $L$-smooth if $\norm{\nabla f(x) - \nabla f(y)}_{\star} \leq L \norm{x-y}$ for all $x,y\in \cS$, where $\cS$ is a finite-dimensional vector space equipped with a norm $\|\cdot\|$ and $\|\cdot\|_{\star}$ is the dual norm associated with $\|\cdot\|$.} and bounded variance of stochastic gradients \citep{kovalev2025muon, li2025noteconvergencemuon, pethick2025training}. 

\algname{Gluon} presents the method that is deployed in practice \citep{jordan2024moddednanogpt, code_scion} and has proven highly effective. That said, we argue that existing analyses \citep{kovalev2025muon, li2025noteconvergencemuon, pethick2025training} do \emph{not} accurately reflect this implementation, diverging from it in two key ways. As such, they \emph{fail to explain why the algorithm performs so well}. Let us detail why.

\subsection{Layer-wise structure}\label{sec:layerwise}

First, we briefly walk through the theoretical understanding offered by previous studies. \algname{Muon} is an optimizer specifically designed for hidden layers, leaving the first and last layers to be handled by some other optimizer, e.g., \algname{Adam(W)}. Its original introduction by \citet{jordan2024muon} was purely empirical, with no attempt at theoretical analysis. The first convergence result came from \citet{li2025noteconvergencemuon}, who analyzed the smooth nonconvex setting but focused solely on problem~\eqref{eq:problem} with $p = 1$, effectively limiting the scope to the single-layer case. The \algname{Scion}\footnote{\citet{pethick2025training} introduce two variants of the \algnamefootnote{Scion} optimizer: one for constrained optimization, called simply ``\algnamefootnote{Scion}'', and another for unconstrained problems, referred to as ``unconstrained \algnamefootnote{Scion}''. In this work, ``\algnamefootnote{Scion}'' refers to either variant, and ``\algnamefootnote{unScion}'' is used when referring to the unconstrained version.} optimizer (a special case of \algname{Gluon}) proposed by \citet{pethick2025training} improves upon \algname{Muon} by applying the LMO-based rule to all layers, ultimately achieving better empirical performance. Both this work and that of \citet{kovalev2025muon} analyze (a variant of) the general update rule
\begin{align} \label{eq:orig_muon_upd}
    \begin{split}
    M^k &= \beta^k M^{k-1} + (1-\beta^k) \nabla f_{\xi^k}(X^k), \\
    X^{k+1} &= \lmo{\cB^k}{M^k},
    \end{split}
\end{align}
where $\beta^k \in [0,1)$ is momentum, $\nabla f_{\xi^k}(X^k)$ is the stochastic gradient sampled at iteration $k$, and $\cB^k \eqdef \{X \in \cS: \|X - X^k\| \leq t^k \}$ is a norm ball centered at $X^k$ with stepsize $t^k>0$. This setup closely resembles the structure of \algname{Gluon}, but is \emph{not} exactly the same.
Indeed, \algname{Gluon} updates the parameters \emph{layer-wise}, not jointly over the full vector~$X$.
This distinction is critical since for practical, extremely high-dimensional models, calculating a single global LMO for the entire parameter vector is prohibitively expensive, while breaking the problem into ``smaller'', per‑layer {\rm LMOs} restores computational feasibility.

Motivated by this disconnect, we formulate our analysis in the matrix product space $\cS$, explicitly honoring the layer-wise structure. This enables us to study the actual per-layer updates \eqref{eq:update_rule_stoch}, with assumptions and hyperparameters adapted to each layer.

\subsection{A theory with predictive power}\label{sec:orig_steps}

\begin{figure}[t]
    \centering
    \begin{subfigure}{0.33\textwidth}
        \includegraphics[width=\textwidth]{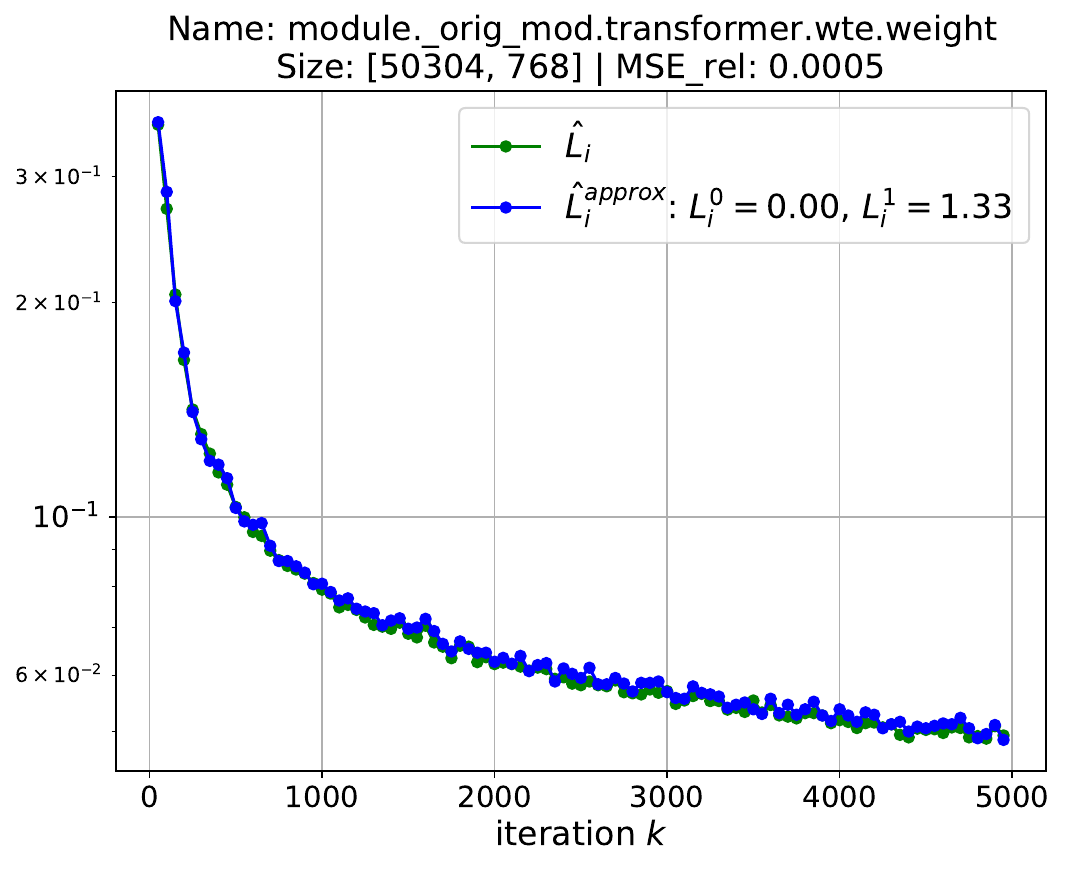}
        \caption{Token embedding matrix from the first/last layer.}
        \label{fig:embedding_ex}
    \end{subfigure}
    \hfill
    \begin{subfigure}{0.315\textwidth}
        \includegraphics[width=\textwidth]{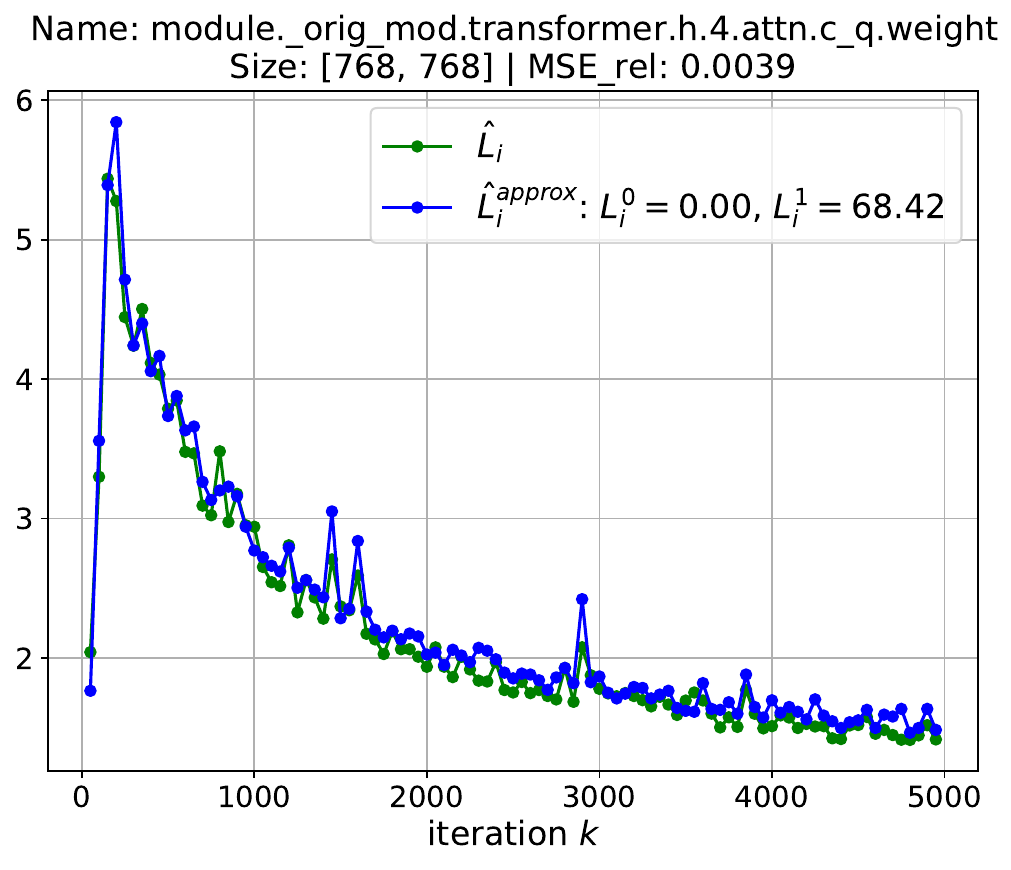}
        \caption{Self-attention query matrix from the 4th transformer block.}
        \label{fig:q_ex}
    \end{subfigure}
    \hfill
    \begin{subfigure}{0.335\textwidth}
        \includegraphics[width=\textwidth]{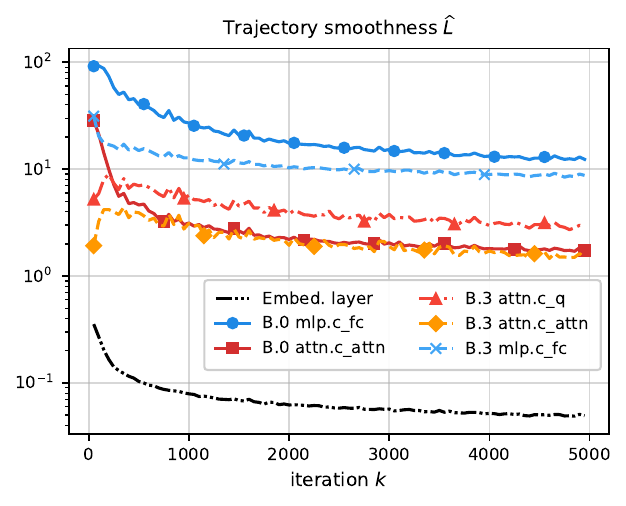}
        \caption{Trajectory smoothness across different blocks (B.$i$) and layers.}
        \label{fig:layer_smooth}
    \end{subfigure}
    \caption{Training \texttt{NanoGPT} on \texttt{FineWeb} validates our layer-wise $(L^0, L^1)$-smoothness model.}
    \label{fig:intro}
\end{figure}

All prior works claiming to guarantee convergence of \Cref{algo:stochastic} come with several serious analytical shortcomings--and these directly translate into practical deficiencies. Concretely, all existing analyses of \algname{Muon}/\algname{Scion} are built on the classical $L$-smoothness assumption, imposing a uniform smoothness constant across all layers. This is problematic, as \emph{different layers have different geometries}, and thus \emph{should be treated differently}.

But the issue runs much deeper. These algorithms are built for deep learning, where the objective functions are already well known \emph{not} to be smooth \citep{crawshaw2022robustness, zhang2020why}. This mismatch has consequences: prior convergence analyses prescribe \emph{tiny constant stepsizes} (see \Cref{table1}), uniform across all parameter groups, which bear little resemblance to the tuned learning rates that yield state-of-the-art empirical performance in practice. Consequently, they completely fail to explain why these methods perform so well empirically.
In other words, the theory falls short at the one thing it should do best: guiding practical choices, leaving practitioners reliant on costly manual tuning.

Our result in \Cref{theorem:1} shows this mismatch is \emph{not} inevitable. To better reflect the behavior of deep models, we introduce a more expressive regularity condition: the \emph{layer-wise $(L_0, L_1)$-smoothness}\footnote{While we state \Cref{ass:generalized-smoothness} in this general form, it is worth noting that the proofs do not rely on its full strength. In all cases, we only require the assumption to hold for pairs $X$, $Y$ such that $\norm{X-Y} < c$ for some constant $c\geq0$ (where $\norm{\cdot}$ is any norm on $\cS$). Specifically, the assumption is only invoked with $X=X^k$, $Y=X^{k+1}$, and since the stepsizes we use are bounded, the distances between consecutive iterates remain bounded as well. For clarity and consistency across results--since the relevant constants vary by theorem--we state the assumption in its stronger, global form, even though the local version suffices for all proofs.}--an extension of the generalized smoothness model of \citet{zhang2020why}, applied at the layer level.

\begin{assumption}[Layer-wise $(L^0, L^1)$-smoothness]\label{ass:generalized-smoothness}
    The function $f: \cS \mapsto \mathbb{R}$ is layer-wise $(L_0, L^1)$-smooth with constants $L^0 \eqdef (L^0_1, \ldots, L^0_p)\in \R^p_+$ and $L^1 \eqdef (L^1_1, \ldots, L^1_p)\in \R^p_+$. That is, the inequality
    \begin{equation}
       \| \nabla _i f(X) - \nabla _i f(Y) \|_{(i) \star} \leq \left( L^0_i + L^1_i \| \nabla _i f(X) \|_{(i) \star} \right) \|X_i - Y_i \|_{(i)} \label{eq:09yhfd98y9fd}
    \end{equation}
    holds for all $i=1,\dots,p$ and all $X = [X_1, \ldots, X_p]\in \cS$, $Y = [Y_1, \ldots, Y_p] \in \cS$, where $\|\cdot\|_{(i) \star}$ is the dual norm associated with~$\|\cdot\|_{(i)}$ (i.e., $\|X_i\|_{(i) \star} \eqdef \sup_{\norm{Z_i}_{(i)} \leq 1} \inp{X_i}{Z_i}_{(i)}$ for any $X_i\in \cS_i$).
\end{assumption}

Assumption~\ref{ass:generalized-smoothness} can be viewed as a generalization of the anisotropic \say{vector} \((L^0, L^1) \)--smoothness introduced by \citet{liu2024adagradanisotropicsmoothness} (now framed in terms of arbitrary norms), which itself is a generalization of the \((L^0, L^1) \)--smoothness model of \citet{zhang2020why}. As such, our analysis of \algname{Gluon} goes beyond all existing results, which have only considered the classical $L$-smooth setting.
Crucially, however, this is \emph{not} generalization for its own sake--we argue that this is in fact \emph{the right} model for the problem setting at hand. Why? There are (at least) two reasons.

First, unlike classical $L$-smoothness, our formulation \emph{aligns very closely with empirical observations}.
In Figures \ref{fig:embedding_ex} and \ref{fig:q_ex}, we validate Assumption~\ref{ass:generalized-smoothness} in the context of training \texttt{NanoGPT} on the \texttt{FineWeb} dataset. We plot estimated \emph{trajectory smoothness} $\hat{L}_{i}[k]$ (defined in \eqref{eq:dir_smooth}) alongside the approximation $\hat{L}_{i}^{\text{approx}}[k] \eqdef L_i^0 + L_i^1 \|\nabla_i f_{\xi^{k+1}} (X^{k+1})\|_{(i) \star}$, where $L^0_i, L^1_i$ are layer-specific parameters estimated from the training run.
The figures show these quantities for parameters from the embedding layer and one of the transformer blocks. The close correspondence between $\hat{L}_i[k]$ and $\hat{L}_{i}^{\text{approx}}[k]$ provides strong evidence that Assumption~\ref{ass:generalized-smoothness} holds approximately along the training trajectory.
In \Cref{sec:experiments}, we further corroborate this finding, showing that our assumption is satisfied \emph{across the entire model architecture} for both the \texttt{NanoGPT} language modeling task and a \texttt{CNN} trained on \texttt{CIFAR-10}. In all cases, we find that $L^0_i\approx 0$ for all $i$, again highlighting the limitations of classical smoothness. Moreover, as shown in \Cref{fig:layer_smooth}, trajectory smoothness varies substantially across blocks and layers, underscoring the need for per-layer treatment.
Together, these results suggest that layer-wise $(L^0, L^1)$-smoothness offers a \emph{significantly more realistic model of the loss landscape in modern deep learning}.

Secondly, \Cref{ass:generalized-smoothness} not only better captures the geometry of the models, but also \emph{directly informs the design of adaptive and practically effective stepsizes}. In \Cref{theorem:1}, we derive learning rates that reflect the local geometry of each parameter group, guided by our layer-wise smoothness model.
As demonstrated in \Cref{sec:nanogpt-fineweb}, our theoretically grounded stepsizes turn out to be \emph{almost the same as the ones obtained by \citet{pethick2025training} via hyperparameter tuning}--a striking validation of our approach, which further highlights the need for layer-wise reasoning. This proves that \emph{theoretical stepsizes can have predictive power} and replace trial-and-error tuning in practice.

\section{Contributions}

We present a comprehensive theoretical and empirical study of a broad class of layer-wise LMO-based optimization algorithms. Our key contributions can be summarized as follows:

$\diamond$ \textbf{A new generalized smoothness framework for deep networks.}
We introduce \emph{layer-wise $(L^0, L^1)$-smoothness} (\Cref{ass:generalized-smoothness}), a novel non-Euclidean generalized smoothness condition that reflects the anisotropic, layer-wise structure of modern deep networks. This framework extends standard $(L^0, L^1)$-smoothness assumption \citep{zhang2020why} to arbitrary norms while capturing per-layer variation, offering a \emph{realistic foundation for analyzing deep learning optimizers}.
    
$\diamond$ \textbf{First principled analysis of layer-wise methods.}
Building on our new assumption, we develop the first faithful convergence analysis for a class of LMO-based algorithms we term \algname{Gluon} (Algorithms~\ref{algo:stochastic} and~\ref{algo:deterministic}). We recover known algorithms, including state-of-the-art \algname{Muon}-type optimizers, as special cases (\Cref{sec:det_examples} and \Cref{sec:lmo_examples}), and pinpoint why earlier theoretical works \emph{fail} to explain the empirical success of these methods (\Cref{sec:impl_vs_an}). In contrast to prior analyses that oversimplify the update rules used in practice, our framework directly aligns with real-world implementations, bridging a critical gap between theory and application.

$\diamond$ \textbf{Sharper and more general convergence theory.}
We develop a convergence theory that extends prior work in both scope and sharpness. In the deterministic case (\Cref{algo:deterministic}), we establish convergence for general non-convex objectives under our \Cref{ass:generalized-smoothness} (\Cref{theorem:1}), and under the block-wise PŁ condition (\Cref{theorem:1-pl}). Unlike earlier analyses, our theory yields \emph{adaptive, layer-wise stepsizes} that align remarkably well with those selected via tuning in large-scale experiments \citep{pethick2025training} (\Cref{sec:nanogpt-fineweb}).
We next analyze the practical stochastic variant with time-varying stepsizes and momentum (\Cref{algo:stochastic}), proving convergence under non-Euclidean bounded variance assumption (\Cref{theorem:1}).
In both deterministic and stochastic regimes, our guarantees are stronger and more general than all prior work (\Cref{table1}).
While previous theories fail to explain the empirical success of \algname{Muon}-type methods, we are the first to demonstrate their \emph{provable advantage over \algname{SGD}}, offering \emph{tighter convergence rates} under \emph{more general assumptions} (\Cref{sec:convergence_stochastic}). Moreover, we provide the first theoretical explanation of the benefits of layer-wise learning rates, clearly establishing the advantages of structured, anisotropic optimization in deep learning.
    
$\diamond$ \textbf{Empirical evidence.}
We validate our theoretical insights through extensive experiments (\Cref{sec:experiments} and \Cref{appendix:more_exp}) in both language modeling (\texttt{NanoGPT} on \texttt{FineWeb}) and image classification (\texttt{CNN} on \texttt{CIFAR-10}). The results confirm that our \Cref{ass:generalized-smoothness} holds approximately throughout training and demonstrate the practical utility of our theoretically prescribed stepsizes from \Cref{theorem:1}.

\begin{table}
\caption{\small 
Comparison of  convergence guarantees for \algname{Gluon} (Algorithms \ref{algo:stochastic} and \ref{algo:deterministic}) to achieve $\min_{k=0,\dots,K-1} \sum_{i=1}^p \mathbb{E}[\| \nabla_i f(X^k) \|_{(i)\star}] \leq \varepsilon$, where the \( \mathcal{O}(\cdot) \) notation hides logarithmic factors.
Notation: $K$ = total number of iterations, $(L^0, L^1)$ = the result holds under layer-wise $(L^0, L^1)$-smoothness, $t_i^k$ = radius/stepsize, $1-\beta^k$ = momentum.
}
\label{table1}
\scriptsize
\centering 
\begin{adjustbox}{width=\columnwidth,center}
\begin{threeparttable}
\begin{tabular}[H]{cccccc}
    \toprule
    \textbf{Result} & \textbf{Stochastic?} & \textbf{$(L^0, L^1)$} & \textbf{Rate} & \textbf{Stepsizes $t_i^k$} & \textbf{$1-\beta^k$} \\
    \midrule
    \makecell{\citep[Theorem 1]{kovalev2025muon}} & \xmark & \xmark & $\mathcal{O}\parens{\frac{1}{K^{1/2}}}$ & $\textnormal{const} \propto {\frac{1}{K^{1/2}}}$\textsuperscript{{\color{blue}(b)}} & --- \\
    \makecell{\citep[Theorem 2]{kovalev2025muon}} & \cmark & \xmark & $\mathcal{O}\parens{\frac{1}{K^{1/4}}}$ & $\textnormal{const} \propto {\frac{1}{K^{3/4}}}$\textsuperscript{{\color{blue}(b)}} & $\textnormal{const} \propto {\frac{1}{K^{1/2}}}$ \\
    \makecell{\citep[Theorem 2.1]{li2025noteconvergencemuon}\textsuperscript{{\color{blue}(a)}}} & \cmark & \xmark & $\mathcal{O}\parens{\frac{1}{K^{1/4}}}$ & $\textnormal{const} \propto {\frac{1}{K^{3/4}}}$\textsuperscript{{\color{blue}(b)}} & $\textnormal{const} \propto {\frac{1}{K^{1/2}}}$ \\
    \makecell{\citep[Lemma 5.4]{pethick2025training}} & \cmark & \xmark & $\mathcal{O}\parens{\frac{1}{K^{1/4}}}$ & $\textnormal{const} \propto {\frac{1}{K^{3/4}}}$\textsuperscript{{\color{blue}(b)}} & $\propto {\frac{1}{k^{1/2}}}$ \\
    \cellcolor{bgcolor}  {\bf NEW:}  \Cref{theorem:1} & \cellcolor{bgcolor} \xmark & \cellcolor{bgcolor} \cmark & \cellcolor{bgcolor} $\mathcal{O}\parens{\frac{1}{K^{1/2}}}$ & \cellcolor{bgcolor}  Adaptive & \cellcolor{bgcolor} ---\\
    \cellcolor{bgcolor} {\bf NEW:} \Cref{theorem:3}  & \cellcolor{bgcolor} \cmark & \cellcolor{bgcolor} \cmark & \cellcolor{bgcolor} $\mathcal{O}\parens{\frac{1}{K^{1/4}}}$ & \cellcolor{bgcolor} $\propto {\frac{1}{k^{3/4}}}$ & \cellcolor{bgcolor} $\propto {\frac{1}{k^{1/2}}}$ \\
    \bottomrule
    \end{tabular}
    \begin{tablenotes}
        \tiny
        \item [{\color{blue}(a)}] Applies only to the \algnametiny{Muon}/\algnametiny{Scion} update in \eqref{eq:muon} with $p=1$.
        \item [{\color{blue}(b)}] These stepsizes are impractically tiny since they have an inverse dependence on the total number of iterations $K$.
    \end{tablenotes}
\end{threeparttable}
\end{adjustbox}
\end{table}

\section{Main theory and results}\label{sec:deterministic}

To gain a better intuition into the structure of the updates, we begin with a deterministic formulation of \algname{Gluon}, formalized in Algorithm~\ref{algo:deterministic}. At each iteration, the method independently minimizes a linear approximation of~$f$ around each parameter group~$X^k_i$ within a ball of radius~$t^k_i > 0$, ultimately allowing for layer-specific algorithmic design choices.

\subsection{Examples of optimizers satisfying our framework}\label{sec:det_examples}

Deterministic \algname{Gluon} describes a general class of methods, parameterized by the choice of norms $\|\cdot\|_{(i)}$ in the LMO. To illustrate the flexibility of this framework, we highlight several notable special cases (see \Cref{sec:lmo_examples} for more details).
First, observe that the update rule \eqref{eq:update_rule} can be written as
\begin{equation}
    X_i^{k+1} = X_i^{k} + t_i^k \mathrm{LMO}_{\{ X_i \in \cS_i : \|X_i\|_{(i)} \leq 1 \}} \left( \nabla _i f(X^k) \right) = X_i^{k} + t_i^k \underset{\left\|X_i\right\|_{(i)} \leq 1}{\operatorname{argmin}} \langle\nabla _i f(X^k), X_i \rangle_{(i)}. \label{eq:b9f8dgfd}
\end{equation}
For any $X_i \in \cS_i = \R^{m_i \times n_i}$, define  $\|X_i\|_{\alpha \rightarrow \beta} \eqdef \sup _{\|z\|_{\alpha}=1} \|X_i z\|_{\beta}$, where  $\|\cdot\|_{\alpha}$ and $\|\cdot\|_{\beta}$ are some (possibly non-Euclidean) norms on $\R^{n_i}$ and $\R^{m_i}$, respectively.
Note that \eqref{eq:b9f8dgfd} naturally recovers several known updates for specific choices of the layer norms, e.g., layer-wise normalized \algname{GD} \citep{wei2018blocknormalized} for Euclidean norms \( \|\cdot\|_{(i)}=\|\cdot\|_{2} \), and layer-wise \algname{signGD} \citep{balles2020geometrysigngradientdescent} for max-norms $\|\cdot\|_{(i)}=\|\cdot\|_{\infty}$.

Two special cases are particularly relevant to our analysis:

$\diamond$ \textbf{Muon} \citep{jordan2024muon} when \( \|\cdot\|_{(i)} = \|\cdot\|_{2 \rightarrow 2} \) for all hidden layers.

$\diamond$ \textbf{unScion for LLM training} \citep{pethick2025training}
when \(\|\cdot\|_{(i)} = \sqrt{\nicefrac{n_i}{m_i}} \|\cdot\|_{2 \to 2}\) for \( i = 1, \dots, p-1 \), corresponding to weight matrices of transformer blocks, and \(\|\cdot\|_{(p)} = n_p \|\cdot\|_{1 \to \infty}\) for the last group~\( X_p \), representing the embedding and output layers (the two coincide under the weight sharing regime\footnote{Weight sharing refers to the practice of using the same parameters (weights) for different parts of a model, rather than allowing each part to have its own unique parameters.} considered here).
In this case, update \eqref{eq:b9f8dgfd} becomes
\begin{equation}\label{eq:scion_llm_main}
\begin{aligned}
    X_i^{k+1} &=  X_i^k - t^k_i \sqrt{\frac{m_i}{n_i}} U_i^k \left(V_i^k\right)^{\top}, \quad i = 1, \dots, p-1, \\ 
    X_p^{k+1} &= X_p^k - \frac{t^k_p}{n_p} \text{sign}\left(\nabla _p f(X^k)\right), 
\end{aligned}
\end{equation}
where the matrices $U^k_i, V^k_i$ are obtained from the (reduced) SVD of $\nabla _i f(X^k) = U^k_i \Sigma^k_i \left(V^k_i\right)^{\top}$.

\subsection{Convergence results}
\label{sec:convergence_deterministic}

Having demonstrated the framework’s flexibility through concrete examples, we now state a general convergence result for deterministic \algname{Gluon}.

\begin{restatable}{theorem}{THMDET}\label{theorem:1}
    Let \Cref{ass:generalized-smoothness} hold and fix $\varepsilon>0$. Let $X^0,\ldots,X^{K-1}$ be the iterates of deterministic \algname{Gluon} (Algorithm \ref{algo:deterministic}) run with stepsizes $t^k_i = \frac{\|\nabla _i f(X^k)\|_{(i) \star}}{L^0_i + L^1_i\|\nabla _i f(X^k)\|_{(i) \star}}$.
    Then, to guarantee that
    \begin{align}\label{eq:det_precision}
        \min\limits_{k=0,\ldots,K-1} \sum\limits_{i=1}^p \left[\frac{\nicefrac{1}{L_i^1}}{\frac{1}{p} \sum_{j=1}^p \nicefrac{1}{L_j^1}} \norm{\nabla _i f(X^k)}_{(i) \star}\right] \leq \varepsilon,
    \end{align}
    it suffices to run the algorithm for
    \begin{align}\label{eq:det_rate}
        K = \left\lceil \frac{2 \Delta^0 \parens{\sum_{i=1}^p \nicefrac{L^0_i}{(L_i^1)^2}}}{\varepsilon^2 \parens{\frac{1}{p} \sum_{j=1}^p \nicefrac{1}{L_j^1}}^2} 
        + \frac{2 \Delta^0}{\varepsilon \parens{\frac{1}{p} \sum_{j=1}^p \nicefrac{1}{L_j^1}}} \right\rceil
    \end{align}
    iterations, where $\Delta^0\eqdef f(X^0) - f^{\inf}$.
\end{restatable}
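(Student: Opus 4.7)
The plan is a straightforward descent-then-Cauchy--Schwarz argument. First, I would establish a per-iteration descent inequality. Since Assumption~\ref{ass:generalized-smoothness} bounds $\|\nabla_i f(X)-\nabla_i f(Y)\|_{(i)\star}$ solely in terms of $\|X_i-Y_i\|_{(i)}$, one can walk from $X^k$ to $X^{k+1}$ one layer at a time and apply the standard $(L^0,L^1)$-smoothness descent lemma on each layer-restricted function, yielding
\begin{equation*}
f(X^{k+1}) \leq f(X^k) + \sum_{i=1}^{p}\left[\langle \nabla_i f(X^k), X_i^{k+1}-X_i^k\rangle_{(i)} + \tfrac{L_i^0 + L_i^1\|\nabla_i f(X^k)\|_{(i)\star}}{2}\,\|X_i^{k+1}-X_i^k\|_{(i)}^2\right].
\end{equation*}
This quadratic upper model is valid because the prescribed $t_i^k \leq 1/L_i^1$ keeps each step in the regime where $(L^0,L^1)$-smoothness produces a reliable upper bound (cf.\ the footnote to Assumption~\ref{ass:generalized-smoothness}).

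Second, the LMO characterization gives $\|X_i^{k+1}-X_i^k\|_{(i)} = t_i^k$ and $\langle \nabla_i f(X^k), X_i^{k+1}-X_i^k\rangle_{(i)} = -t_i^k\,\|\nabla_i f(X^k)\|_{(i)\star}$. Writing $g_i^k \eqdef \|\nabla_i f(X^k)\|_{(i)\star}$, the prescribed $t_i^k = g_i^k/(L_i^0+L_i^1 g_i^k)$ is exactly the minimizer of the per-layer right-hand side, so the descent inequality collapses to
\begin{equation*}
f(X^{k+1}) - f(X^k) \leq -\frac{1}{2}\sum_{i=1}^p \frac{(g_i^k)^2}{L_i^0 + L_i^1 g_i^k}.
\end{equation*}

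Third, to turn this per-layer progress into the weighted quantity in \eqref{eq:det_precision}, I would apply weighted Cauchy--Schwarz $\sum_i b_i^2/a_i \geq (\sum_i c_i b_i)^2/(\sum_i c_i^2 a_i)$ with $a_i = L_i^0 + L_i^1 g_i^k$, $b_i = g_i^k$, and $c_i = (1/L_i^1)/B$, where $B \eqdef (1/p)\sum_j 1/L_j^1$. Setting $S_0 \eqdef \sum_i L_i^0/(L_i^1)^2$ and $G^k \eqdef \sum_i c_i g_i^k$ (precisely the target quantity in \eqref{eq:det_precision}), the identities $c_i^2 L_i^1 = (1/L_i^1)/B^2$ and $c_i^2 L_i^0 = (L_i^0/(L_i^1)^2)/B^2$ reduce the Cauchy--Schwarz denominator to $S_0/B^2 + G^k/B$, giving $\sum_i (g_i^k)^2/(L_i^0+L_i^1 g_i^k) \geq B^2(G^k)^2/(S_0 + B G^k) \eqdef h(G^k)$. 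Since $h$ is strictly increasing on $(0,\infty)$, telescoping the descent inequality yields $K\cdot h(G_*) \leq 2\Delta^0$ for $G_* \eqdef \min_k G^k$; contrapositively, $G_* > \varepsilon$ forces $K < 2\Delta^0 S_0/(B^2\varepsilon^2) + 2\Delta^0/(B\varepsilon)$, which is exactly the stated bound~\eqref{eq:det_rate}.

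The main obstacle is the first step: Assumption~\ref{ass:generalized-smoothness} is phrased as a gradient-Lipschitz condition, not a function upper bound, so producing the layer-wise quadratic upper model requires the integration-along-segments argument coordinate-block by coordinate-block, together with the bound $t_i^k \leq 1/L_i^1$ to ensure the local quadratic models dominate the true function increment at each sub-step. A secondary but important technical choice is picking the weights $c_i \propto 1/L_i^1$ in Cauchy--Schwarz so that $G^k$ coincides with the weighted gradient norm in \eqref{eq:det_precision} and the denominator assembles cleanly into $S_0 + B G^k$; once that alignment is identified, the monotonicity-of-$h$ finish is routine.
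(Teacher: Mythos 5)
Your proposal is correct and follows essentially the same route as the paper's proof (Lemma~\ref{lemma:generalized-smoothness}, the descent step, Lemma~\ref{lemma:ineq} with $x_i \propto \nicefrac{1}{L_i^1}$, and monotonicity of the resulting scalar function), so the Cauchy--Schwarz weighting, the function $h$, and the final complexity all assemble as claimed. Two small remarks: your block-by-block derivation of the descent inequality does work, but only because under Assumption~\ref{ass:generalized-smoothness} the partial gradient $\nabla_i f$ depends exclusively on $X_i$ (take $X_i=Y_i$ in \eqref{eq:09yhfd98y9fd}), so that the intermediate-point gradients $\nabla_i f(Z^{(i-1)})$ coincide with $\nabla_i f(X^k)$---the paper sidesteps this by integrating along the single segment from $X^k$ to $X^{k+1}$ in Lemma~\ref{lemma:generalized-smoothness}; and your caveat that the bound $t_i^k \leq \nicefrac{1}{L_i^1}$ is what ``validates'' the quadratic upper model is unnecessary and slightly misreads the footnote: Lemma~\ref{lemma:generalized-smoothness} holds globally with no stepsize restriction, and the footnote merely notes that a \emph{local} version of the assumption would already suffice because the iterates take bounded steps.
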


Several important observations follow.

\paragraph{Convergence rate.} In \Cref{sec:det_proof}, we prove an additional result (\Cref{theorem:1_full}) that modifies the first term in \eqref{eq:det_rate} to $\nicefrac{2 \Delta^0 \sum_{i=1}^p L^0_i}{\epsilon^2}$, potentially leading to improvements in certain settings (depending on the relationship between the sequences $\{L^0_i\}$ and $\{L^1_i\}$--see \Cref{rem:det}). However, this introduces a dependence on $L^1_{\max}\eqdef\max_{i=1,\dots,p} L^1_i$ in the second term. 
Empirically, we find that $L^0_i \approx 0$ across all layers (see \Cref{sec:experiments}), making the first term vanish in both bounds. In this case, the rate~\eqref{eq:det_rate} is clearly superior, replacing the worst-case constant $L^1_{\max}$ with the more favorable harmonic mean.

When $p=1$, our rates match the best-known complexity for finding a stationary point of $(L^0, L^1)$-smooth functions, $\mathcal{O}\left(\nicefrac{L^0 \Delta^0}{\epsilon^2} + \nicefrac{L^1 \Delta^0}{\epsilon}\right)$, as established by \citet{vankov2025optimizing} for the Gradient Method.
While no prior work has analyzed deterministic \algname{Gluon} under general $(L^0, L^1)$-smoothness, there exist analyses under classical $L$-smoothness, treating the parameters as a single vector. The analysis by \citet{kovalev2025muon} guarantees convergence in $K = \left\lceil \nicefrac{6 L \Delta^0}{\epsilon^2} \right\rceil$ iterations. The same bound appears in \citet{li2025noteconvergencemuon} and \citet{pethick2025training} (by setting $\sigma^2 = 0$). Since for $p=1$, $L$-smoothness implies \Cref{ass:generalized-smoothness} with $L^1 = 0$ (\Cref{lemma:L0vsL}), our rates match these prior results up to a constant factor. Thus, even in the smooth setting, our bounds are as tight as those derived specifically for it.

However, the real strength of our guarantees lies in their broader applicability. Our analysis is much more general than prior studies, as it extends beyond standard smoothness--allowing $L_i^1 > 0$ introduces additional terms that drive the accelerated convergence enabled by $(L^0, L^1)$-smoothness. This richer model is \emph{essential for explaining the empirical speedup} of methods like \algname{Muon}, and much more accurately reflects the geometry of neural network loss surfaces. Indeed, as we demonstrate in \Cref{sec:experiments}, the assumption typically holds with $L^0_i\approx 0$ and $L_i^1 > 0$.

\paragraph{Practical radii $t^k_i$.} Unlike previous analyses \citep{kovalev2025muon, li2025noteconvergencemuon, pethick2025training}, which prescribe impractically small constant radii proportional to \( \epsilon \), our framework allows \( t^k_i \) to be \emph{adaptive} to the loss landscape. Therefore, \( t^k_i \) can be larger early in training when \( \|\nabla _i f(X^{k})\|_{(i) \star} \) is large and gradually shrink as the gradient norm decreases. In the special case when \( L^0_i \approx 0 \) (as observed empirically), \( t^k_i \approx \nicefrac{1}{L^1_i} \), which is substantially larger than the radii dictated by earlier analyses.
Crucially, as shown in \Cref{sec:nanogpt-fineweb}, our adaptive stepsizes closely match those that yield state-of-the-art empirical performance identified by \citet{pethick2025training} through hyperparameter tuning. This alignment demonstrates that \emph{principled, theory-driven stepsize selection could effectively replace costly manual tuning}.

\subsection{Stochastic case}
\label{sec:convergence_stochastic}

In practice, computing full gradients is often infeasible due to the scale of modern ML problems. We therefore turn to the practical \algname{Gluon} (\Cref{algo:stochastic}), a stochastic variant of \Cref{algo:deterministic} that operates with noisy gradient estimates available through a stochastic gradient oracle $\nabla f_{\xi}$, $\xi \sim \mathcal{D}$.

\begin{assumption}\label{ass:bounded_var}
    The stochastic gradient estimator $\nabla f_{\xi}: \cS \mapsto \cS$ is unbiased and has bounded variance. That is, $\mathbb{E}_{\xi \sim \cD}[\nabla f_{\xi}(X)]=\nabla f(X)$ for all $X \in \cS$ and there exists $\sigma \geq 0$ such that
    \begin{align*}
        \mathbb{E}_{\xi \sim \cD} \bigl[\|\nabla_i f_{\xi}(X)-\nabla_i f(X)\|^2_{(i) \star}\bigr] \leq \sigma^2, \quad \forall X \in \cS, \, i = 1, \dots, p.
    \end{align*}
\end{assumption}
Note that the choice of norm in \Cref{ass:bounded_var} is not restrictive: in finite-dimensional spaces, all norms are equivalent, so variance bounds remain valid up to a constant factor when compared to those based on the standard Euclidean norm.
The following result establishes the convergence properties.

\begin{theorem}\label{theorem:3}
    Let Assumptions \ref{ass:generalized-smoothness} and \ref{ass:bounded_var} hold and fix $\varepsilon>0$. Let $X^0,\ldots,X^{K-1}$ be the iterates of \algname{Gluon} (Algorithm \ref{algo:stochastic}) run with \( \beta^k = 1 - (k+1)^{-1/2} \), \( t_i^k = t_i (k+1)^{-3/4} \) for some \( t_i>0\), and $M^0_i=\nabla_i f_{\xi^0}(X^0)$. Then
    \begin{align}\label{eq:stoch_rate}
        \min\limits_{k=0,\ldots,K-1} \sum\limits_{i=1}^p \frac{1}{12L^1_i} \Exp{\| \nabla _i f(X^k) \|_{(i) \star}} \lesssim \frac{\Delta^0}{K^{1/4}} + \frac{1}{K^{1/4}} \sum\limits_{i=1}^p \left[ \frac{\sigma}{L^1_i} + \frac{L^0_i}{(L^1_i)^2}\right],
    \end{align}
    where $\Delta^0:=f(X^0) - f^{\inf}$ and the notation $\lesssim$ hides numerical constants and logarithmic factors.
\end{theorem}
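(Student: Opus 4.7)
My plan is to combine a one-step descent inequality (derived from layer-wise $(L^0,L^1)$-smoothness and the LMO definition) with a recursive control of the momentum tracking error, following the standard template for momentum SGD under generalized smoothness, but adapted to the layer-wise norm structure of \algname{Gluon}. Set $\varepsilon_i^k \eqdef M_i^k - \nabla_i f(X^k)$. The LMO minimizer is (up to tiebreaking) $X_i^{k+1}-X_i^k = -t_i^k S_i^k$ for some $S_i^k$ with $\|S_i^k\|_{(i)}\le 1$ and $\langle M_i^k,S_i^k\rangle_{(i)} = \|M_i^k\|_{(i)\star}$. Applying the standard descent consequence of \Cref{ass:generalized-smoothness} (which, because we will verify that $t_i^k \le 1/L_i^1$, holds in the integrated form with $L_i^0+L_i^1\|\nabla_i f(X^k)\|_{(i)\star}$ on the right), then substituting $\nabla_i f(X^k) = M_i^k - \varepsilon_i^k$ inside the inner product, I obtain
\begin{align*}
f(X^{k+1}) \le f(X^k) - \sum_{i=1}^p t_i^k \|M_i^k\|_{(i)\star} + \sum_{i=1}^p t_i^k\|\varepsilon_i^k\|_{(i)\star} + \sum_{i=1}^p \frac{L_i^0 + L_i^1\|\nabla_i f(X^k)\|_{(i)\star}}{2}(t_i^k)^2 .
\end{align*}
Using $\|M_i^k\|_{(i)\star} \ge \|\nabla_i f(X^k)\|_{(i)\star} - \|\varepsilon_i^k\|_{(i)\star}$ and the bound $t_i^k \le 1/(2L_i^1)$ (which follows from $t_i\le 1/(2L_i^1)$ and $(k+1)^{-3/4}\le 1$; this is the point of the $1/(12L_i^1)$ factor in the statement), the $L_i^1\|\nabla_i f(X^k)\|_{(i)\star}(t_i^k)^2$ term is absorbed into half of the main $-t_i^k \|\nabla_i f(X^k)\|_{(i)\star}$ descent, leaving
\begin{align*}
f(X^{k+1}) \le f(X^k) - \tfrac{1}{2}\sum_i t_i^k\|\nabla_i f(X^k)\|_{(i)\star} + 2\sum_i t_i^k\|\varepsilon_i^k\|_{(i)\star} + \sum_i \tfrac{L_i^0}{2}(t_i^k)^2.
\end{align*}

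\textbf{Controlling the momentum error.} Expanding the momentum update and taking expectations gives the recursion
\begin{align*}
\Exp{\|\varepsilon_i^k\|_{(i)\star}^2} \le (\beta^k)^2\, \Exp{\|\varepsilon_i^{k-1} + (\nabla_i f(X^{k-1})-\nabla_i f(X^k))\|_{(i)\star}^2} + (1-\beta^k)^2 \sigma^2 .
\end{align*}
Using Young's inequality with parameter $\beta^k$ and \Cref{ass:generalized-smoothness} to bound the gradient drift as $\|\nabla_i f(X^k)-\nabla_i f(X^{k-1})\|_{(i)\star} \le (L_i^0+L_i^1\|\nabla_i f(X^{k-1})\|_{(i)\star})\,t_i^{k-1}$, and then a further Young step to linearize the $\|\nabla_i f(X^{k-1})\|_{(i)\star}$ factor, I get a recursion of the form
\begin{align*}
u_i^k \le \beta^k u_i^{k-1} + C_1 (1-\beta^k)\sigma^2 + C_2 \frac{(t_i^{k-1})^2}{1-\beta^k}\bigl((L_i^0)^2 + (L_i^1)^2 \Exp{\|\nabla_i f(X^{k-1})\|_{(i)\star}^2}\bigr),
\end{align*}
where $u_i^k \eqdef \Exp{\|\varepsilon_i^k\|_{(i)\star}^2}$. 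Plugging in $1-\beta^k = (k+1)^{-1/2}$ and $t_i^{k-1}=t_i(k)^{-3/4}$, the free terms decay like $(k+1)^{-1/2}$, and unrolling (or using a standard induction argument) yields $u_i^k \lesssim (\sigma^2 + (L_i^0/L_i^1)^2)(k+1)^{-1/2} + \text{feedback from } \Exp{\|\nabla_i f(X^{k-1})\|_{(i)\star}^2}$; equivalently $\Exp{\|\varepsilon_i^k\|_{(i)\star}}\lesssim (k+1)^{-1/4}\cdot\big(\sigma + L_i^0/L_i^1\big) + \text{feedback}$.

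\textbf{Telescoping and assembling the rate.} Summing the descent inequality over $k=0,\ldots,K-1$, taking expectations, and plugging in $t_i^k = t_i (k+1)^{-3/4}$ (so $\sum_{k<K}(t_i^k)^2 = O(t_i^2 K^{-1/2})$ and $\sum_{k<K} t_i^k\cdot (k+1)^{-1/4} = O(t_i K^0)$, up to logs) gives
\begin{align*}
\sum_{k=0}^{K-1}\sum_i t_i^k\,\Exp{\|\nabla_i f(X^k)\|_{(i)\star}} \lesssim \Delta^0 + \sum_i \Bigl(L_i^0 t_i^2\sqrt{K} + t_i(\sigma + L_i^0/L_i^1)\log K + \text{feedback}\Bigr),
\end{align*}
where the feedback term, after choosing $t_i \asymp 1/L_i^1$ small enough, is reabsorbed into the LHS (this is the reason for the $1/(2L_i^1)$ slack kept earlier). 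Dividing by $\sum_{k<K} t_i^k \asymp t_i K^{1/4}$ and turning the average into a minimum produces the claimed
\begin{align*}
\min_{k<K}\sum_i \tfrac{1}{12L_i^1}\,\Exp{\|\nabla_i f(X^k)\|_{(i)\star}} \lesssim \frac{\Delta^0}{K^{1/4}} + \frac{1}{K^{1/4}}\sum_i\Bigl(\frac{\sigma}{L_i^1}+\frac{L_i^0}{(L_i^1)^2}\Bigr).
\end{align*}

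\textbf{Main obstacle.} The delicate step is the momentum error recursion: because $(L^0,L^1)$-smoothness introduces a gradient-dependent drift $L_i^1\|\nabla_i f\|_{(i)\star}\cdot t_i^{k-1}$, the feedback from $\Exp{\|\nabla_i f(X^{k-1})\|_{(i)\star}}$ couples the error bound and the descent bound. Closing this coupling requires (i) keeping the $1/(12L_i^1)$ slack on the descent side so that a constant fraction of $t_i^k\Exp{\|\nabla_i f(X^k)\|_{(i)\star}}$ can absorb the feedback after summing, and (ii) exploiting the schedules $1-\beta^k=(k+1)^{-1/2}$ and $t_i^k=t_i(k+1)^{-3/4}$, which are precisely balanced so that $(t_i^{k})^2/(1-\beta^k) \asymp (k+1)^{-1}$ is summable and $(1-\beta^k)$ dominates the bias from the drift. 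Handling the initialization $M_i^0=\nabla_i f_{\xi^0}(X^0)$ cleanly (so that $u_i^0\le\sigma^2$) is a minor but necessary detail.
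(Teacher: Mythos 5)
Your overall template matches the paper's: a one-step descent inequality from \Cref{lemma:generalized-smoothness} plus the LMO property, followed by a telescoping sum and a bound on the momentum tracking error $\varepsilon_i^k = M_i^k - \nabla_i f(X^k)$. However, your specific mechanism for controlling the momentum error has a genuine gap that the paper's argument is carefully designed to avoid.

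You propose a recursive bound on the \emph{squared} error $u_i^k := \Exp{\|\varepsilon_i^k\|_{(i)\star}^2}$, with Young's inequality absorbing the gradient drift $\|\nabla_i f(X^k)-\nabla_i f(X^{k-1})\|_{(i)\star}$. Because that drift is bounded by $(L_i^0 + L_i^1\|\nabla_i f(X^{k-1})\|_{(i)\star})\,t_i^{k-1}$ and gets squared, your recursion feeds back a term proportional to $\Exp{\|\nabla_i f(X^{k-1})\|_{(i)\star}^2}$ --- a \emph{second} moment. But the telescoped descent inequality only controls $\sum_k t_i^k\, \Exp{\|\nabla_i f(X^k)\|_{(i)\star}}$ --- a weighted sum of \emph{first} moments. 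There is no mechanism available (short of an extra bounded-gradient assumption, which the theorem does not have) to convert second moments to first moments, so the "reabsorbed into the LHS" step does not close. The paper sidesteps this precisely by \emph{unrolling} the momentum recursion exactly as $\mu_i^k = \sum_{\tau\le k}\beta^{(\tau+1):k}\alpha^\tau\gamma_i^\tau + \sum_{\tau\le k}\beta^{\tau:k}S_i^\tau$ and applying the triangle inequality: only the martingale noise part $\sum\gamma_i^\tau$ is treated through squared expectations (where the martingale orthogonality makes the squares summable), while the drift part $\sum S_i^\tau$ stays at \emph{first} power, producing a feedback term $I_3 = 2L_i^1\sum_k t_i^k\sum_{\tau\le k}\beta^{\tau:k}t_i^\tau\,\Exp{\|\nabla_i f(X^\tau)\|_{(i)\star}}$ that is directly absorbable into the LHS after rearranging the double sum (via \Cref{lemma:tech1}) and choosing $t_i = \tfrac{1}{12L_i^1}$. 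To repair your argument you would need to replicate this noise/drift split --- keeping the drift contribution at first power --- rather than tracking a single squared-error recursion.

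A secondary, smaller discrepancy: you absorb the $\tfrac{L_i^1}{2}\|\nabla_i f(X^k)\|_{(i)\star}(t_i^k)^2$ term into the main descent immediately using $t_i^k\le 1/(2L_i^1)$; the paper instead carries this term through telescoping and absorbs it together with $I_3$ at the end via the same $CL_i^1 t_i \le 1/2$ mechanism. Both are fine --- it is only the squared feedback that actually blocks the argument.
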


For $p=1$, our rate in \eqref{eq:stoch_rate} recovers the complexity for finding a stationary point of $(L^0, L^1)$-smooth functions established by \citet{hubler2024parameter} for normalized \algname{SGD} with momentum.
When $p\geq 1$, compared to existing guarantees for \algname{Gluon}, our \Cref{theorem:3} operates under the significantly more general \Cref{ass:generalized-smoothness} and uniquely supports training with larger, non-constant stepsizes $t_i^k \propto k^{-3/4}$. In contrast, prior analyses prescribe constant, vanishingly small stepsizes $t_i^k \equiv t_i \propto K^{-3/4}$, tied to the \emph{total} number of iterations $K$ (see \Cref{table1}).

\section{Experiments}
\label{sec:experiments}

Below, we highlight selected experimental results for the \algname{unScion} optimizer, a special case of \algname{Gluon} (see \Cref{sec:lmo_examples}). Additional details and further experiments are provided in \Cref{appendix:more_exp}.\footnote{Code for all experiments is available \href{https://github.com/artem-riabinin/Experiments-estimating-smoothness-for-NanoGPT-and-CNN}{here}.}

\subsection{Training NanoGPT on FineWeb}
\label{sec:nanogpt-fineweb}

In the first set of experiments, we aim to verify layer-wise \((L^0, L^1)\)-smoothness (\Cref{ass:generalized-smoothness}). To this end, we train the \texttt{NanoGPT} model with 124M parameters on the \texttt{FineWeb} dataset, leveraging two open-source GitHub repositories~\citep{jordan2024moddednanogpt, code_scion}. We use the \algname{unScion} optimizer, i.e., \algname{Gluon} with the norm choices as in \eqref{eq:scion_llm_main}.
We adopt the hyperparameters from~\citet[Table~7]{pethick2025training}, mapping their values \( \gamma = 0.00036 \), \( \rho_2 = 50 \), and \( \rho_3 = 3000 \) into our notation as follows: \( t_i^k \equiv \gamma \rho_2 = 0.018 \) for \( i = 1, \dots, p-1 \) (corresponding to the transformer block layers), and \( t_p^k \equiv \gamma \rho_3 = 1.08 \) (token embeddings and output projections, due to weight sharing).
We set the number of warmdown iterations to $0$ to keep the learning rates constant throughout training. The model is trained for 5{,}000 iterations in accordance with the Chinchilla scaling laws to ensure compute-optimal training.

In Figures \ref{fig:3}, \ref{fig:1}, \ref{fig:2}, we plot the estimated \emph{trajectory smoothness} as a function of the iteration index \(k\)
\begin{align}\label{eq:dir_smooth}
    \hat{L}_i[k] \eqdef \frac{\|\nabla_i f_{\xi^{k+1}} (X^{k+1}) - \nabla_i f_{\xi^k} (X^{k}) \|_{(i)\star}}{\|X_i^{k+1} - X_i^{k}\|_{(i)}}
\end{align}
for parameter groups from the embedding layer and 4th and 8th transformer blocks (with similar trends observed across all blocks). 
We compare this to the approximation
$$\hat{L}_i^{\text{approx}}[k] \eqdef L_i^0 + L_i^1 \|\nabla_i f_{\xi^{k+1}}(X^{k+1})\|_{(i) \star}, $$
where $L^0_i, L^1_i\geq0$ are fitted to minimize the Euclidean error between $\hat{L}_i[k]$ and $\hat{L}_i^{\text{approx}}[k]$, with hinge-like penalty on underestimation (see \Cref{sec:nanogpt_l0l1}).
The close alignment between these curves implies that Assumption~\ref{ass:generalized-smoothness} is approximately satisfied along the training trajectories.
Based on the estimated values of $L^0_i$ and $L^1_i$, assuming that \Cref{ass:generalized-smoothness} holds and ignoring gradient stochasticity, \Cref{theorem:1} suggests the stepsizes
\begin{align}\label{eq:emp_step}
    \begin{split}
        & t^k_i = \frac{\|\nabla_i f_{\xi^k}(X^k)\|_{(i) \star}}{L^0_i + L^1_i\|\nabla _i f_{\xi^k}(X^k)\|_{(i) \star}} \approx \frac{1}{L^1_i} \approx \frac{1}{70} \approx 0.014, \quad i = 1, \dots, p-1, \\
        & t^k_p = \frac{\|\nabla_p f_{\xi^k}(X^k)\|_{(p) \star}}{L^0_p + L^1_p\|\nabla _p f_{\xi^k}(X^k)\|_{(p) \star}} \approx \frac{1}{L^1_p} \approx \frac{1}{1.3} \approx 0.77.
    \end{split}
\end{align}
Remarkably, these values align closely with the manually tuned values reported earlier, again underscoring the predictive power of our theoretical prescriptions (see \Cref{sec:deterministic}).

\begin{figure}[t]
    \centering
    
    \begin{subfigure}{0.32\textwidth}
        \includegraphics[width=\textwidth]{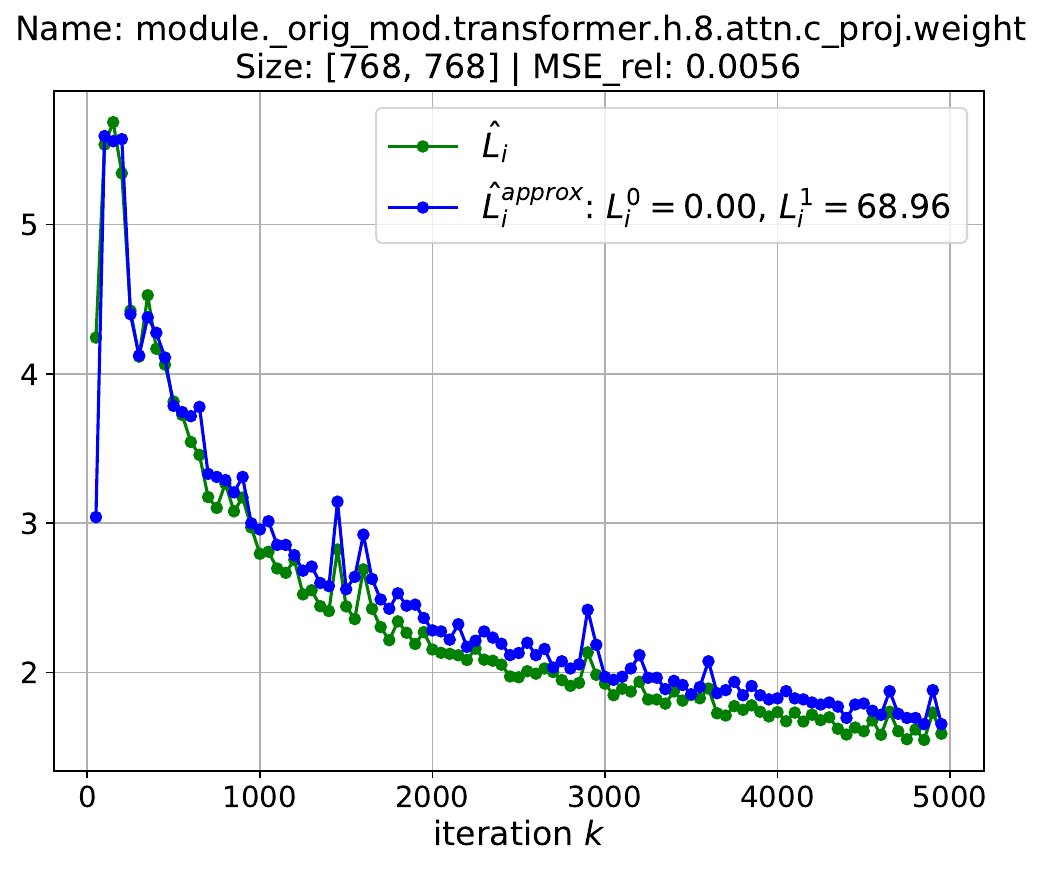}
    \end{subfigure}
    \hfill
    \begin{subfigure}{0.32\textwidth}
        \includegraphics[width=\textwidth]{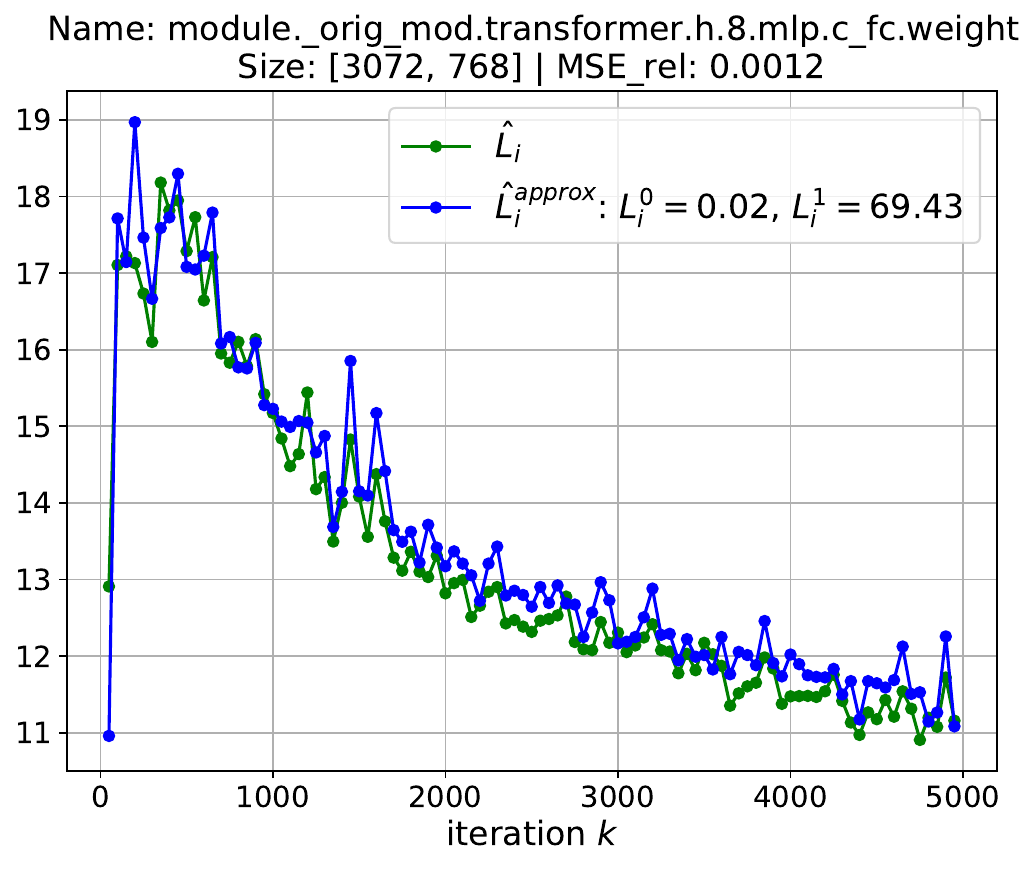}
    \end{subfigure}
    \hfill
    \begin{subfigure}{0.32\textwidth}
        \includegraphics[width=\textwidth]{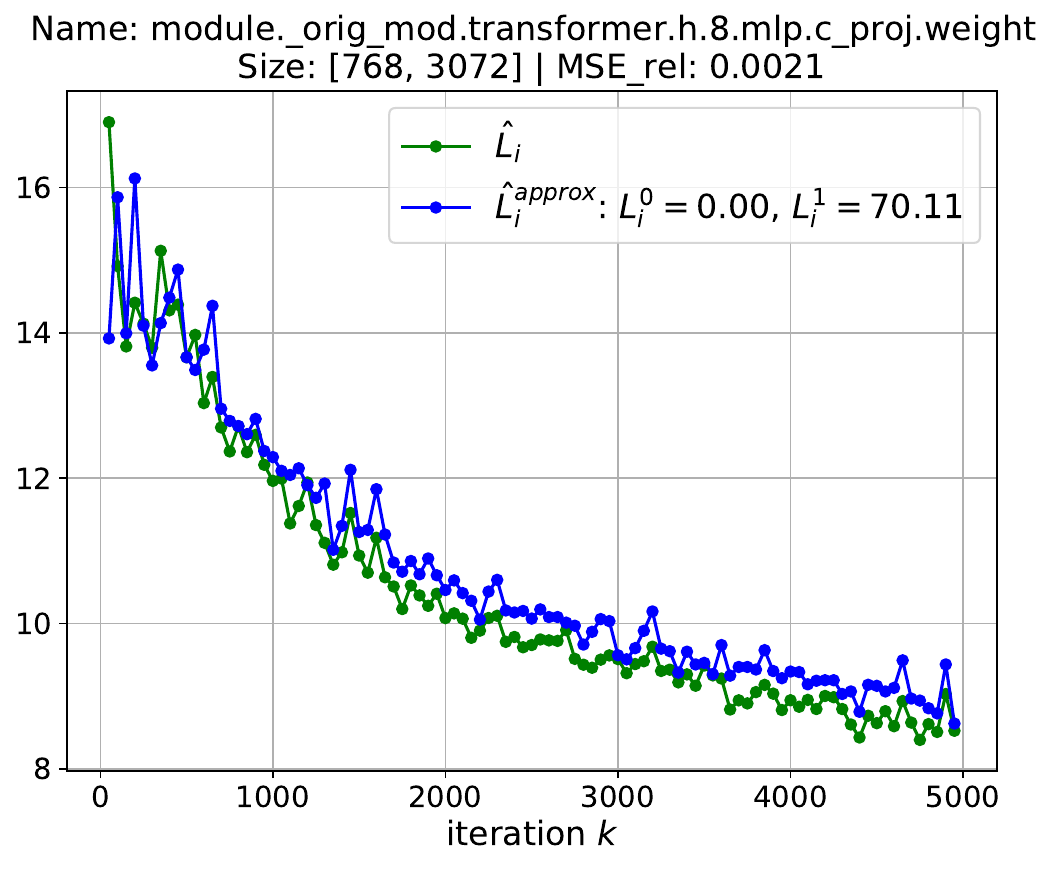}
    \end{subfigure}
    
    \caption{Validation of \Cref{ass:generalized-smoothness} for the 8th transformer block in \texttt{NanoGPT-124M} along training trajectories of \algname{unScion}.}
    \label{fig:3}
\end{figure}

\paragraph{Effect of scaling factors.}

We next evaluate the impact of the learning rate scaling factors $\rho_2$ and $\rho_3$ on the performance of the \algname{unScion} optimizer. For consistency, all other hyperparameters are fixed as described earlier. As a baseline, we include results obtained with the \algname{AdamW} optimizer, using the hyperparameter settings from Section~\ref{sec:Adam_exp}.
\Cref{fig:effect_scaling_factors} presents (a) validation curves for both optimizers, with varying $\rho_3$ in \algname{unScion}, and (b) the final validation loss for \algname{unScion} across different combinations of $\rho_2$ and $\rho_3$.
The best performance is achieved with $\rho_2 = 50$ and $\rho_3 = 3000$, i.e., $t_i^k = 0.018$ for $i = 1, \dots, p-1$ and $t^k_p = 1.08$, consistent with our theoretical prediction~\eqref{eq:emp_step}. This supports the use of non-uniform scaling across layers, with larger stepsizes for the embedding layer.

\begin{figure}[t]
    \centering
    \begin{subfigure}{0.6\textwidth}
        \includegraphics[width=\textwidth]{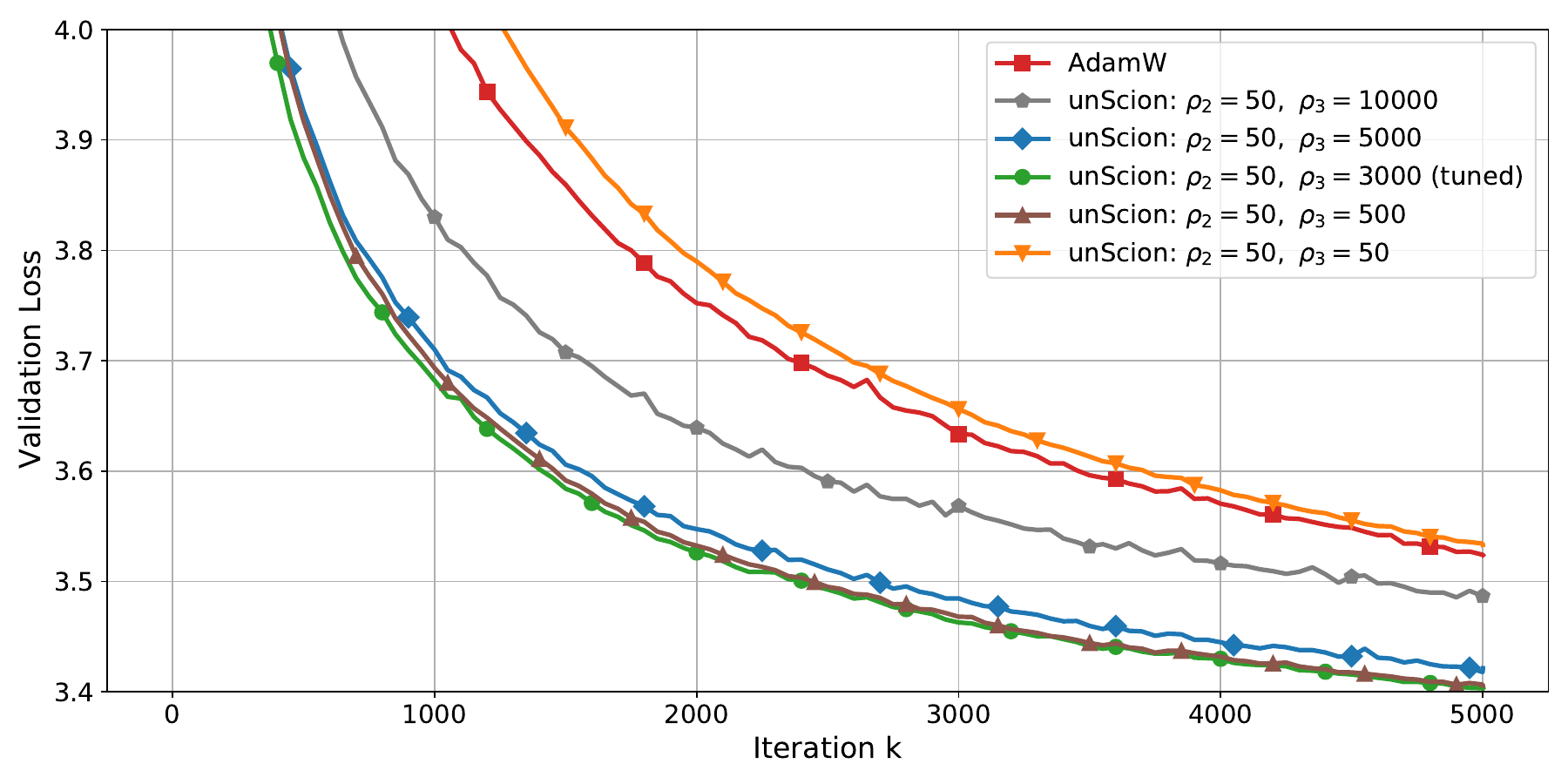}
        \caption{}
    \end{subfigure}
    \hfill
    \begin{subfigure}{0.36      \textwidth}
        \includegraphics[width=\textwidth]{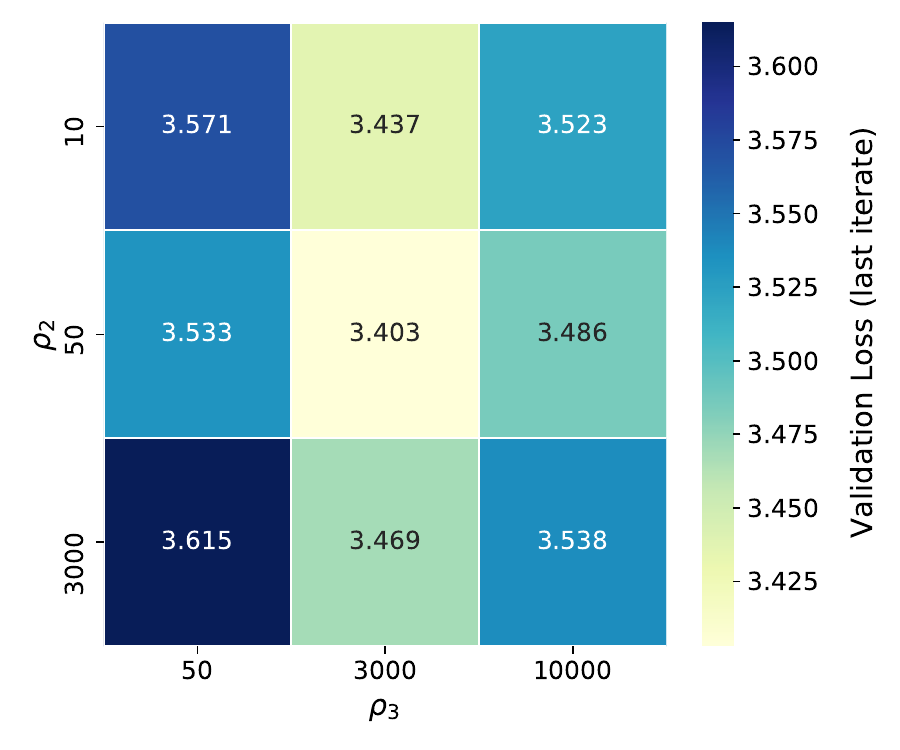}
        \caption{}
    \end{subfigure}
    \caption{(a) Validation curves for \algname{AdamW} and \algname{unScion} with varying $\rho_3$ values; (b) Heatmap of validation loss from the last iteration of \algname{unScion} across different combinations of $\rho_2$ and $\rho_3$.}
    \label{fig:effect_scaling_factors}   
\end{figure}

\paragraph{Additional ablation studies.}
In Appendix \ref{appendix:gensmooth_euclid}, we present an ablation study demonstrating that specialized norms provide a better approximation of trajectory smoothness compared to the standard Euclidean norm.
Appendix~\ref{sec:Adam_exp} demonstrates that the layer-wise $(L^0, L^1)$-smoothness model also closely approximates trajectory smoothness during \algname{AdamW} training. Notably, we observe a similar gap between transformer and embedding layers as with \algname{Scion}, suggesting that smoothness statistics from \algname{AdamW} training can guide per-layer learning rate tuning in \algname{Scion}.

\subsection{Training CNN on CIFAR-10}\label{sec:cnn}

In this experiment, we further validate layer-wise $(L^0, L^1)$-smoothness by training a \texttt{CNN} model on the \texttt{CIFAR-10} dataset, following implementations from two open-source GitHub repositories~\citep{jordan2024cifar10, code_scion}. The model is trained using the \algname{unScion} optimizer~\eqref{eq:scion_cnn} with full-batch gradients \( \nabla_i f \), no momentum and no learning rate decay (results for the stochastic case are reported in \Cref{appendix:cifar}). Other hyperparameters are as in \citet[Table10]{pethick2025training}, except that we train for more epochs.

Similar to the \texttt{NanoGPT} experiments discussed in \Cref{sec:nanogpt-fineweb}, we plot the estimated (non-stochastic) trajectory smoothness
$\hat{L}_i[k] \eqdef \|\nabla_i f (X^{k+1}) - \nabla_i f (X^{k}) \|_{(i)\star} / \|X_i^{k+1} - X_i^{k}\|_{(i)}$
alongside its approximation
$
\hat{L}_i^{\text{approx}}[k] \eqdef L_i^0 + L_i^1 \|\nabla_i f (X^{k+1})\|_{(i)\star}
$
for selected parameter groups. In this experiment, we consider a simplified variant of \Cref{ass:generalized-smoothness}, setting $L^0_i = 0$, and estimate $L^1_i\geq0$ using the same procedure as in \Cref{sec:nanogpt-fineweb}.

Figure~\ref{fig:5} presents the results, demonstrating that \Cref{ass:generalized-smoothness} is approximately satisfied along the training trajectory.
When this condition holds with $L^0_i=0$, \Cref{theorem:1} guarantees convergence under the stepsize choice $t^k_i \equiv t_i =\nicefrac{1}{L^1_i}$.
In this setting, the estimated $L^1_i$ values (shown in Figure~\ref{fig:5}) are $L^1_i \approx 3$ for all parameter groups except for the classification head weights $X_p$, where $L^1_p \approx 0.03$.
This roughly two-orders-of-magnitude difference justifies the much larger radius $t_p^k$ used for the head weights in the tuned configuration reported in \citet[Table 10]{pethick2025training}.

\begin{figure}[H]
    \centering
    \begin{subfigure}{0.32\textwidth}
        \includegraphics[width=\textwidth]{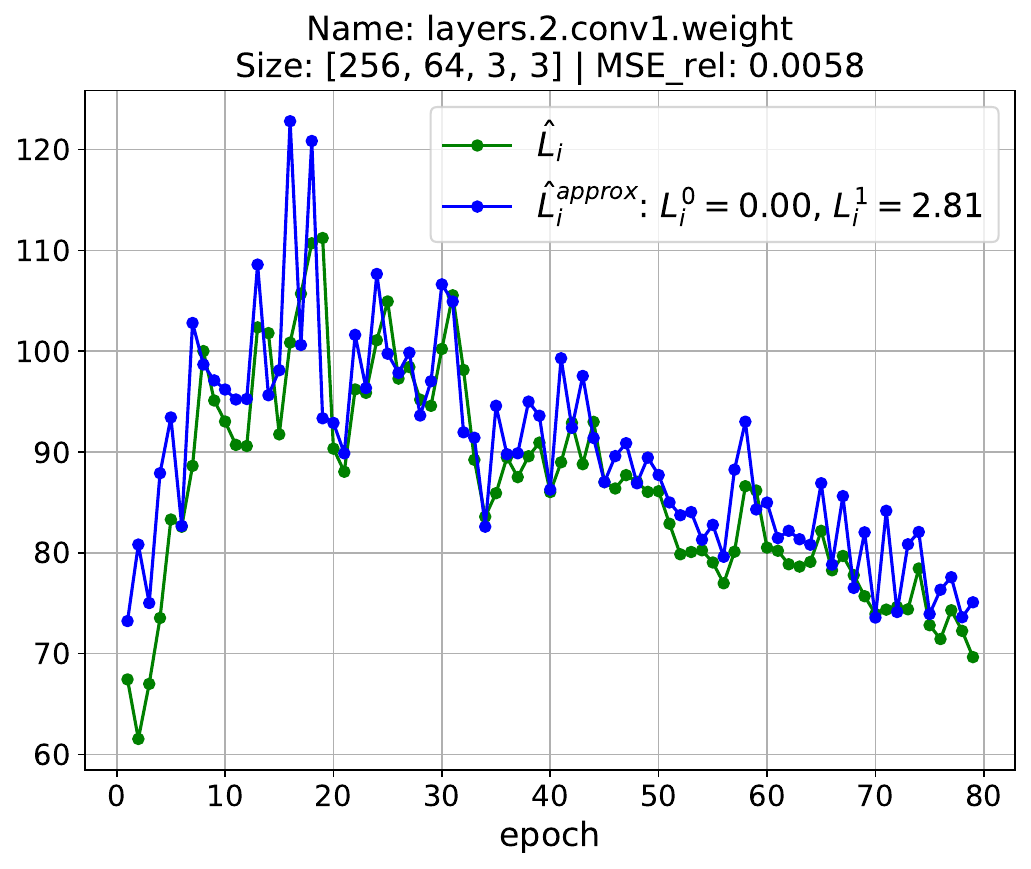}
    \end{subfigure}
    \hfill
    \begin{subfigure}{0.32\textwidth}
        \includegraphics[width=\textwidth]{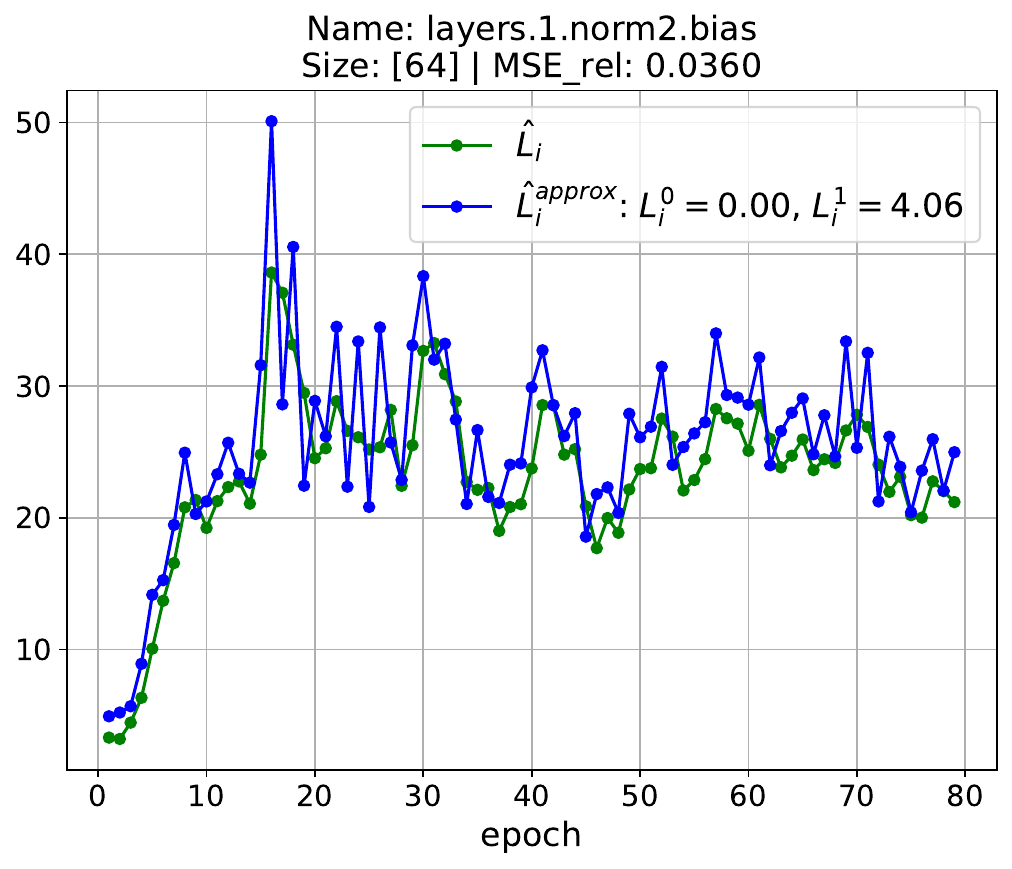}
    \end{subfigure}
    \hfill
    \begin{subfigure}{0.335\textwidth}
        \includegraphics[width=\textwidth]{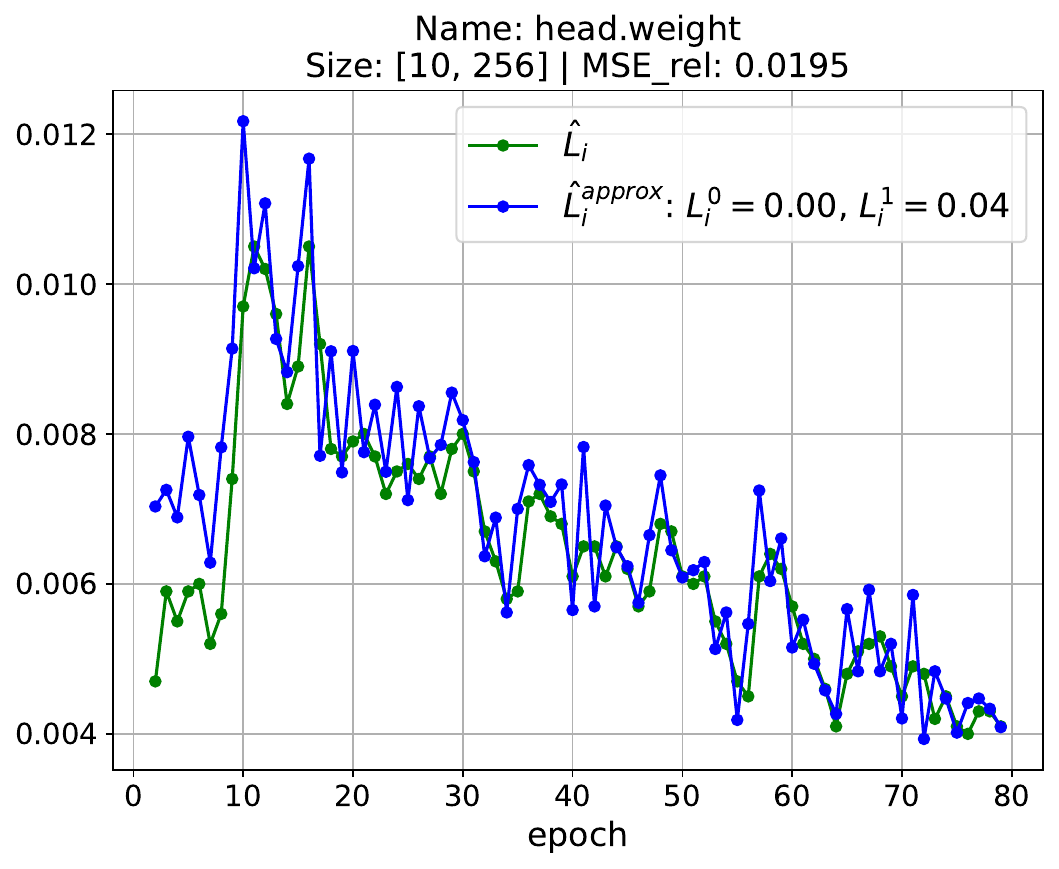}
    \end{subfigure}
    
    \caption{Validation of \Cref{ass:generalized-smoothness} for different groups of parameters in \texttt{CNN} along training trajectories of \algname{unScion} with full-batch gradients.
    }
    \label{fig:5}
\end{figure}

\section{Conclusion and future work}\label{sec:conclusion} 

In this work, we propose \algname{Gluon}, an LMO-based optimization method that recovers state-of-the-art optimizers such as \algname{Muon} and \algname{Scion} as special cases. We develop a principled analytical framework for layer-wise optimization based on a novel \emph{layer-wise $(L^0, L^1)$-smoothness} assumption, which captures the anisotropic structure of modern deep networks. This assumption enables sharper and more general convergence guarantees and, unlike prior analyses, yields theoretical stepsizes that closely match those found via finetuning. Our framework thus provides \emph{the first rigorous and practically predictive analysis of modern layer-wise optimizers}. Experiments confirm that the assumption holds approximately throughout training, reinforcing its practical relevance. Together, these results offer a refined foundation for structured optimization in deep learning.

While this work resolves two key theoretical gaps (Sections \ref{sec:layerwise} and \ref{sec:orig_steps}), it also highlights important directions for future research.
Our analysis assumes exact LMO computations, whereas practical implementations use approximations (\Cref{sec:exp_details}). Additionally, our stochastic guarantees (\Cref{theorem:3}) rely on the widely adopted bounded variance assumption, which may not hold in certain scenarios, e.g., under subsampling \citep{khaled2020better}.
Finally, our support for adaptive stepsizes is currently restricted to the deterministic setting. While they also perform well empirically in the stochastic regime  (\Cref{sec:nanogpt-fineweb}), a complete theoretical justification remains an open challenge.

In summary, although we make substantial progress by closing the two most critical gaps--establishing a realistic generalized smoothness model and aligning analysis with actual implementations--no single work can exhaust the subject. The field remains open, with many fruitful directions left to pursue.

\section*{Acknowledgements}
    The research reported in this publication was supported by funding from King Abdullah University of Science and Technology (KAUST): i) KAUST Baseline Research Scheme, ii) CRG Grant ORFS-CRG12-2024-6460, iii) Center of Excellence for Generative AI, under award number 5940, and iv) SDAIA-KAUST Center of Excellence in Artificial Intelligence and Data Science.

\bibliographystyle{plainnat}
\bibliography{ref}


\newpage
\appendix

\tableofcontents

\newpage

\section{Auxiliary lemmas}

\begin{lemma}\label{lemma:generalized-smoothness}
Let $f: \cS \mapsto \mathbb{R}$ satisfy Assumption~\ref{ass:generalized-smoothness}. Then, for any $X, Y \in \cS$, we have
\begin{align*}
\left|f(Y)-f(X)-\left\langle \nabla f(X), Y-X\right\rangle\right| \leq \sum_{i=1}^p \frac{L^0_i + L^1_i \| \nabla _i f(X)\|_{(i) \star}}{2}\|Y_i-X_i\|_{(i)}^2.
\end{align*}
\end{lemma}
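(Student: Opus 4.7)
The plan is to use the standard descent-lemma template, adapted to the layer-wise generalized smoothness condition. The starting point is the fundamental theorem of calculus along the segment from $X$ to $Y$: write
\[
f(Y) - f(X) - \langle \nabla f(X), Y-X\rangle = \int_0^1 \langle \nabla f(X + s(Y-X)) - \nabla f(X), Y-X\rangle\, ds.
\]
Then split the inner product over layers, since $\cS$ is the product space $\cS_1 \otimes \cdots \otimes \cS_p$ with the induced inner product $\langle \cdot, \cdot \rangle = \sum_i \langle \cdot, \cdot\rangle_{(i)}$, to obtain a sum of one-dimensional integrals of the form $\int_0^1 \langle \nabla_i f(X + s(Y-X)) - \nabla_i f(X), Y_i - X_i\rangle_{(i)}\, ds$.

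Next, take absolute values, pull them inside the sum and the integral, and apply the generalized Hölder inequality $|\langle a, b\rangle_{(i)}| \le \|a\|_{(i)\star} \|b\|_{(i)}$ layer by layer. This reduces the problem to bounding $\|\nabla_i f(X + s(Y-X)) - \nabla_i f(X)\|_{(i)\star}$. Now invoke \Cref{ass:generalized-smoothness} with the two points $X + s(Y-X)$ and $X$: its $i$-th block difference is exactly $s(Y_i - X_i)$, so
\[
\|\nabla_i f(X + s(Y-X)) - \nabla_i f(X)\|_{(i)\star} \le s\bigl(L^0_i + L^1_i \|\nabla_i f(X)\|_{(i)\star}\bigr) \|Y_i - X_i\|_{(i)}.
\]

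Substituting back produces a factor of $s \|Y_i - X_i\|_{(i)}^2$ inside the integral, and $\int_0^1 s\, ds = \tfrac{1}{2}$ yields exactly the claimed bound. There is no real obstacle here; the only point to be careful about is the order of operations (absolute value before Hölder, Hölder before invoking the smoothness assumption), and the observation that the right-hand side of \Cref{ass:generalized-smoothness} depends on the block difference $\|X_i - Y_i\|_{(i)}$ rather than on any global distance, which is precisely what lets the per-layer structure pass cleanly through the integral.
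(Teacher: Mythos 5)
Your proof is correct and follows essentially the same route as the paper: fundamental theorem of calculus, layer-wise decomposition of the inner product, generalized H\"older, then \Cref{ass:generalized-smoothness} applied to the pair $X$ and $X+s(Y-X)$ (with $X$ playing the role of the first argument so that the gradient norm on the right-hand side is $\|\nabla_i f(X)\|_{(i)\star}$, as claimed), followed by $\int_0^1 s\,ds = \tfrac12$. No meaningful differences.
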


\begin{proof}
For all $X, Y \in \cS$ we have
\begin{align*}
f(Y) & =f(X)+\int_0^1\left\langle \nabla f(X+\tau(Y-X)), Y-X\right\rangle d \tau \\
&=f(X)+\left\langle \nabla f(X), Y-X\right\rangle+\int_0^1\left\langle \nabla f(X+\tau(Y-X))-\nabla f(X), Y-X\right\rangle d \tau.
\end{align*}

Therefore, using the Cauchy-Schwarz inequality and Assumption~\ref{ass:generalized-smoothness}, we obtain
\begin{eqnarray*}
&&\hspace{-1cm}\left|f(Y)-f(X)-\left\langle \nabla f(X), Y-X\right\rangle\right| \\
&\leq&\left|\int_0^1 \sum_{i=1}^p \left\langle \nabla _i f(X+\tau(Y-X))-\nabla _i f(X), Y_i-X_i\right\rangle_{(i)} d \tau \right| \\
&\leq&\int_0^1 \sum_{i=1}^p \left| \left\langle \nabla _i f(X+\tau(Y-X))-\nabla _i f(X), Y_i-X_i\right\rangle_{(i)} \right| d \tau \\ 
& \leq& \int_0^1 \sum_{i=1}^p \left\|\nabla _i f(X+\tau(Y-X))-\nabla _i f(X)\right\|_{(i) \star} \|Y_i-X_i\|_{(i)} d \tau \\
& \leq& \int_0^1 \sum_{i=1}^p \tau \left( L^0_i + L^1_i \| \nabla _i f(X) \|_{(i) \star} \right) \|Y_i-X_i\|_{(i)}^2 d \tau\\
&=&\sum_{i=1}^p \frac{L^0_i + L^1_i \| \nabla _i f(X)\|_{(i) \star}}{2}\|Y_i-X_i\|_{(i)}^2.
\end{eqnarray*}
\end{proof}

\begin{lemma}\label{lemma:L0vsL}
    Suppose that $f$ is $L$-smooth with respect to the norm defined in \eqref{eq:max_norm}, i.e.,
    \begin{align*}
        \norm{\nabla f(X) - \nabla f(Y)}_{\max\star} \leq L \norm{X-Y}_{\max},
    \end{align*}
    where $X = [X_1, \dots, X_p]$ and $Y = [Y_1, \dots, Y_p]$ with $X_i, Y_i \in \cS_i$.
    Then \Cref{ass:generalized-smoothness} holds with $L^0_i \leq L$ and $L^1_i = 0$ for all $i=1,\ldots,p$.
\end{lemma}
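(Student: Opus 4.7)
The plan is to extract per-block Lipschitz constants from a single global Lipschitz condition in the max norm. The first step is to identify the dual of $\|\cdot\|_{\max}$. Since the unit ball of $\|\cdot\|_{\max}$ is the Cartesian product of the unit balls of the layer norms $\|\cdot\|_{(i)}$, the supremum in the dual pairing decouples across blocks, giving
\[
\|Y\|_{\max\star} = \sup_{\max_j \|X_j\|_{(j)} \leq 1} \sum_{j=1}^p \langle Y_j, X_j\rangle_{(j)} = \sum_{j=1}^p \|Y_j\|_{(j)\star}.
\]

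With the dual identified, fix $i \in \{1,\dots,p\}$ and $X, Y \in \cS$, and form the single-block perturbation $Z = [X_1,\dots,X_{i-1}, Y_i, X_{i+1},\dots,X_p]$. Since $Z$ agrees with $X$ on every block except the $i$-th, $\|X - Z\|_{\max} = \|X_i - Y_i\|_{(i)}$. Substituting into the $L$-smoothness hypothesis gives
\[
\sum_{j=1}^p \|\nabla_j f(X) - \nabla_j f(Z)\|_{(j)\star} \leq L\, \|X_i - Y_i\|_{(i)}.
\]
Dropping all nonnegative terms on the left except the one at $j = i$ yields $\|\nabla_i f(X) - \nabla_i f(Z)\|_{(i)\star} \leq L\, \|X_i - Y_i\|_{(i)}$, which is exactly the inequality of Assumption~\ref{ass:generalized-smoothness} at the $i$-th layer with $L_i^0 = L$ (so $L_i^0 \leq L$) and $L_i^1 = 0$.

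The only conceptual subtlety is justifying why it suffices to pass through the single-block perturbation $Z$: Assumption~\ref{ass:generalized-smoothness} is inherently a statement about Lipschitzness of $\nabla_i f$ in the $i$-th argument, since its right-hand side depends only on $\|X_i - Y_i\|_{(i)}$, and the single-block perturbation $Z$ is the canonical witness that matches this structure. No combinatorial telescoping or intricate norm manipulation is needed; the argument reduces to the dual-norm computation plus one application of $L$-smoothness, with the max norm doing all the work in separating the blocks.
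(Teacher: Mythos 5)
Your argument is essentially the paper's: dualize the max-norm (its dual is the sum of the per-layer duals), apply $L$-smoothness to the single-block perturbation $Z$, and keep only the $j=i$ term on the left. One note: what you derive is $\|\nabla_i f(X)-\nabla_i f(Z)\|_{(i)\star}\leq L\|X_i-Y_i\|_{(i)}$ with $Z$ a single-block perturbation of $X$, whereas \Cref{ass:generalized-smoothness} as written demands the bound for arbitrary $Y\in\cS$; your closing paragraph acknowledges this but does not actually bridge it, and the paper's own proof makes the identical silent specialization, so you have matched the source's level of rigor exactly.
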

\begin{proof}
    $L$-smoothness and the definition of the norm give
    \begin{align*}
        \sum_{i=1}^p \|\nabla_i f(X) - \nabla_i f(Y)\|_{(i) \star} \leq L \max\brac{\norm{X_1 - Y_1}_{(1)}, \ldots, \norm{X_p - Y_p}_{(p)}}
    \end{align*}
    for all $X, Y \in \cS$. In particular, choosing $X = [X_1, \dots, X_p]$ and $Y = [X_1, \dots, X_{j-1}, Y_j, X_{j+1}, \ldots X_p]$, we have
    \begin{align*}
        \|\nabla_j f(X) - \nabla_j f(Y)\|_{(j) \star} \leq \sum_{i=1}^p \|\nabla_i f(X) - \nabla_i f(Y)\|_{(i) \star} \leq L \norm{X_j - Y_j}_{(j)}
    \end{align*}
    for any $j\in\{1,\ldots,p\}$, proving the claim.
\end{proof}

\begin{lemma}\label{lemma:ineq}
    Suppose that $x_1, \ldots, x_p, y_1, \ldots, y_p \in \R$, $\max_{i\in[p]} |x_i| > 0$ and $z_1, \ldots, z_p > 0$. Then
    \begin{align*}
        \sum_{i=1}^p \frac{y_i^2}{z_i} \geq \frac{\parens{\sum_{i=1}^p x_i y_i}^2}{\sum_{i=1}^p z_i x_i^2}.
    \end{align*}
\end{lemma}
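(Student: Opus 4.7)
The plan is to recognize this as a direct consequence of the Cauchy–Schwarz inequality in $\R^p$, applied with a suitable weighting that absorbs the $z_i$ factors. I would first note that the condition $z_1,\ldots,z_p>0$ together with $\max_{i\in[p]}|x_i|>0$ guarantees that the denominator $\sum_{i=1}^p z_i x_i^2$ on the right-hand side is strictly positive, so the inequality is well-posed and it suffices to prove the equivalent cross-multiplied form
\begin{equation*}
    \left(\sum_{i=1}^p \frac{y_i^2}{z_i}\right)\left(\sum_{i=1}^p z_i x_i^2\right) \geq \left(\sum_{i=1}^p x_i y_i\right)^2.
\end{equation*}

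Next, I would introduce the vectors $a_i \eqdef y_i/\sqrt{z_i}$ and $b_i \eqdef x_i\sqrt{z_i}$, which is legitimate because $z_i>0$. With this choice one has $a_i b_i = x_i y_i$, $a_i^2 = y_i^2/z_i$, and $b_i^2 = z_i x_i^2$, so the cross-multiplied inequality above is exactly the standard Cauchy–Schwarz inequality $\bigl(\sum a_i b_i\bigr)^2 \leq \bigl(\sum a_i^2\bigr)\bigl(\sum b_i^2\bigr)$ applied to these two vectors.

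Dividing through by $\sum_{i=1}^p z_i x_i^2>0$ then yields the desired bound. There is essentially no obstacle here: the entire argument is a one-line Cauchy–Schwarz once the weights are chosen; the only subtlety worth flagging is the positivity of the denominator, which is precisely why the hypothesis $\max_{i\in[p]}|x_i|>0$ is imposed.
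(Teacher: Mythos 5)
Your proof is correct and matches the paper's argument exactly: both split each summand as $x_i y_i = (y_i/\sqrt{z_i})(x_i\sqrt{z_i})$, apply Cauchy--Schwarz, and rearrange. The remark about positivity of $\sum_i z_i x_i^2$ (ensured by $z_i>0$ and $\max_i|x_i|>0$) is a nice touch the paper leaves implicit.
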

\begin{proof}
    Cauchy-Schwarz inequality gives
    \begin{align*}
        \parens{\sum_{i=1}^p x_i y_i}^2
        = \parens{\sum_{i=1}^p \frac{y_i}{\sqrt{z_i}} \sqrt{z_i} x_i}^2
        \leq \parens{\sum_{i=1}^p \frac{y_i^2}{z_i}} \parens{\sum_{i=1}^p z_i x_i^2}.
    \end{align*}
    Rearranging, we obtain the result.
\end{proof}

\begin{lemma}[Technical Lemma 10 by \citet{hubler2024parameter}]\label{lemma:tech1}
    Let $q \in(0,1)$, $p \geq 0$, and $p \geq q$. Further, let $a, b \in \mathbb{N}_{\geq 2}$ with $a \leq b$. Then
    \begin{align*}
    \sum_{k=a-1}^{b-1} (1+k)^{-p} \prod_{\tau=a-1}^k\left(1-(\tau+1)^{-q}\right) \leq (a-1)^{q-p} \exp \left(\frac{a^{1-q}-(a-1)^{1-q}}{1-q}\right).
    \end{align*}
\end{lemma}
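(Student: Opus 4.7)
The plan is to convert the sum into a telescoping one via an elementary product identity, and to exploit the hypothesis $p \geq q$ to reduce the weight $(1+k)^{-p}$ to the telescopable form $(1+k)^{-q}$ up to a constant factor depending only on $a$.

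First I would introduce the shorthand $P_k := \prod_{\tau=a-1}^k \bigl(1-(\tau+1)^{-q}\bigr)$ together with the convention $P_{a-2} := 1$, so that the one-step recursion $P_k = P_{k-1}\bigl(1-(k+1)^{-q}\bigr)$ holds for every $k \geq a-1$. Rearranging yields the key identity $(k+1)^{-q} P_{k-1} = P_{k-1} - P_k$, and since $P_k \leq P_{k-1}$, it follows that $(k+1)^{-q} P_k \leq P_{k-1} - P_k$ as well. Separately, because $q - p \leq 0$ the map $x \mapsto x^{q-p}$ is non-increasing on $(0,\infty)$, so $(1+k)^{q-p} \leq (a-1)^{q-p}$ for every $k \geq a-2$, and therefore
\[
(1+k)^{-p} = (1+k)^{q-p}(1+k)^{-q} \leq (a-1)^{q-p}(1+k)^{-q}
\]
throughout the summation range $k=a-1,\dots,b-1$.

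Combining these two ingredients, I would then write
\[
\sum_{k=a-1}^{b-1}(1+k)^{-p} P_k \leq (a-1)^{q-p}\sum_{k=a-1}^{b-1}(1+k)^{-q}P_k \leq (a-1)^{q-p}\sum_{k=a-1}^{b-1}(P_{k-1}-P_k) = (a-1)^{q-p}(1-P_{b-1}),
\]
where the final equality is a telescoping using $P_{a-2}=1$. Since every factor defining $P_{b-1}$ lies in $(0,1)$, we have $1 - P_{b-1} \leq 1$, and the sum is at most $(a-1)^{q-p}$. This is in turn bounded by the RHS of the lemma since the exponent $\frac{a^{1-q}-(a-1)^{1-q}}{1-q}$ is non-negative (as $x \mapsto x^{1-q}$ is increasing for $q \in (0,1)$), making the exponential factor $\geq 1$.

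The only real insight is recognising that $(k+1)^{-q}P_k$ admits the telescoping upper bound $P_{k-1}-P_k$; once that is in place, the hypothesis $p \geq q$ does all the remaining work through a single monotonicity comparison, so I do not anticipate a genuine obstacle. In fact, this argument delivers the strictly sharper constant $(a-1)^{q-p}$ without any exponential factor, suggesting that the exponential in the stated bound is simply convenient slack chosen to match the form consumed by downstream arguments.
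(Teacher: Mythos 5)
Your proof is correct, and it is in fact a genuinely different and \emph{sharper} argument than the one the cited reference (Hübler and Stich, 2024) uses. The stated lemma is imported verbatim from that paper, so there is no in-text proof here to compare against, but the exponential factor $\exp\bigl(\frac{a^{1-q}-(a-1)^{1-q}}{1-q}\bigr)$ in the bound is a telltale signature of the standard route: bound the product via $\prod(1-x_\tau)\le \exp(-\sum x_\tau)$, lower-bound the resulting sum $\sum(\tau+1)^{-q}$ by an integral, and then compare the remaining sum to another integral whose antiderivative is again an exponential. That route is more mechanical but necessarily leaves behind the exponential boundary term.

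Your telescoping argument sidesteps all of this. The identity $(k+1)^{-q}P_{k-1}=P_{k-1}-P_k$, combined with the trivial monotonicity $P_k\le P_{k-1}$, gives $(k+1)^{-q}P_k\le P_{k-1}-P_k$ directly; the hypothesis $p\ge q$ (together with $a\ge 2$) handles the exponent mismatch via $(1+k)^{-p}\le (a-1)^{q-p}(1+k)^{-q}$ on the summation range; and the telescoping then bounds the whole sum by $(a-1)^{q-p}\bigl(1-P_{b-1}\bigr)\le (a-1)^{q-p}$. Since $q\in(0,1)$ makes $x\mapsto x^{1-q}$ strictly increasing, the exponent in the stated bound is positive and the exponential factor exceeds $1$, so your bound implies the lemma with room to spare. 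Every step is justified: $a\ge 2$ guarantees $(\tau+1)^{-q}\in(0,1)$ for all $\tau\ge a-1$, so $P_{b-1}\in(0,1)$; and $a\le b$ keeps the sum nonempty. The only thing worth flagging, as you yourself note, is that you have proved a strictly stronger inequality -- the exponential factor in Hübler–Stich is dead weight from their method rather than a feature of the statement -- which is a nice observation but means the two proofs establish slightly different bounds, the weaker one being what is quoted here.
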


\begin{lemma}[Technical Lemma 11 by \citet{hubler2024parameter}]\label{lemma:tech2}
    Let $t>0$ and for $k \in \mathbb{N}_{\geq 0}$, set $\beta^k =1-(k+1)^{-1 / 2}$, $t^k =t (k+1)^{-3 / 4}$, $t>0$. Then, for all $K \in \mathbb{N}_{\geq 1}$ the following inequalities hold:
    \begin{enumerate}[label=(\roman*)]
        \item $\sum_{k=0}^{K-1} t^k \sqrt{\sum_{\tau=0}^k (1-\beta^{\tau})^2 \prod_{\kappa=\tau+1}^k (\beta^\kappa)^2} \leq t\left(\frac{7}{2}+\sqrt{2 e^2} \log (K)\right)$,
        \item $\sum_{k=0}^{K-1} t^k \sum_{\tau=1}^k t^\tau \prod_{\kappa=\tau}^k \beta^\kappa \leq 7 t^2\left(3 +\log (K)\right)$.
    \end{enumerate}
\end{lemma}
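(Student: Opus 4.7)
My plan is to treat the two parts separately, since they have very different structure: part (ii) can be handled directly by swapping the order of summation and invoking the preceding technical lemma, while part (i) contains a square root that resists a clean application of that lemma and instead calls for a pointwise $O((k+1)^{-1/2})$ bound on the quantity under the radical, established by induction on $k$.

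For \textbf{part (ii)}, the first step is to switch the order of summation,
\[
\sum_{k=0}^{K-1} t^k \sum_{\tau=1}^k t^\tau \prod_{\kappa=\tau}^k \beta^\kappa = \sum_{\tau=1}^{K-1} t^\tau \sum_{k=\tau}^{K-1} t^k \prod_{\kappa=\tau}^k \beta^\kappa,
\]
so that the inner sum over $k$ fits the exact form of the preceding technical lemma with $p=\tfrac{3}{4}$, $q=\tfrac{1}{2}$, $a-1=\tau$, $b-1=K-1$. Applying it yields
\[
\sum_{k=\tau}^{K-1} t^k \prod_{\kappa=\tau}^k \beta^\kappa \leq t\,\tau^{-1/4}\exp\!\left(2\bigl(\sqrt{\tau+1}-\sqrt{\tau}\bigr)\right),
\]
and for $\tau\geq 1$ the exponent is at most $1$ because $2(\sqrt{\tau+1}-\sqrt{\tau}) = 2/(\sqrt{\tau+1}+\sqrt{\tau}) \leq 1$. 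Plugging this back, using $t^\tau = t(\tau+1)^{-3/4}\leq t\tau^{-3/4}$, and collapsing via $\sum_{\tau=1}^{K-1}\tau^{-1}\leq 1+\log K$, I obtain a bound of the form $e\,t^2(1+\log K)$, which is easily absorbed into the claimed $7 t^2(3+\log K)$.

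For \textbf{part (i)}, the naive bound $\sqrt{\sum a_\tau}\leq \sum \sqrt{a_\tau}$ telescopes to $\leq 1$ and therefore gives an outer sum growing like $K^{1/4}$ rather than the needed $\log K$. The correct move is to first dominate the radicand: since $\beta^\kappa\in[0,1]$ we have $(\beta^\kappa)^2\leq \beta^\kappa$, and $(1-\beta^\tau)^2=(\tau+1)^{-1}$ by construction, so it suffices to bound
\[
S_k := \sum_{\tau=0}^{k}(\tau+1)^{-1}\prod_{\kappa=\tau+1}^k\!\bigl(1-(\kappa+1)^{-1/2}\bigr).
\]
I would prove $S_k\leq 2(k+1)^{-1/2}$ by induction on $k$, using the one-step recursion $S_{k+1}=(k+2)^{-1}+\bigl(1-(k+2)^{-1/2}\bigr)S_k$. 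After substituting $u=\sqrt{k+1}$ and $v=\sqrt{k+2}$ (so $v^2-u^2=1$), the inductive step reduces to the elementary inequality $u(u+v)\geq 2(v-1)$, which is immediate to verify. Taking a square root gives $\sqrt{S_k}\leq \sqrt{2}(k+1)^{-1/4}$, and combining with $t^k = t(k+1)^{-3/4}$ yields $\sum_{k=0}^{K-1} t^k\sqrt{S_k}\leq t\sqrt{2}\sum_{k=0}^{K-1}(k+1)^{-1}\leq t\sqrt{2}(1+\log K)$, again subsumed by the claimed $t(\tfrac{7}{2}+\sqrt{2e^2}\log K)$.

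The main obstacle is the sharp $O((k+1)^{-1/2})$ bound on $S_k$ in part (i). A direct application of the preceding technical lemma fails because in our sum the variable $\tau$ enters the \emph{lower} endpoint of the product $\prod_{\kappa=\tau+1}^k$, whereas that lemma requires the summation to run over the \emph{upper} endpoint of the product; natural reindexings such as $\nu=k-\tau$ leave the factors inside the product depending on $k$, breaking the required form. This is why the one-step recursion for $S_k$ and induction are the cleanest route: each inductive step captures precisely the cancellation between the shrinking product and the harmonic-type factor $(\tau+1)^{-1}$, without which the $K^{1/4}$ rather than $\log K$ dependence would be unavoidable.
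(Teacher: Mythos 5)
Your proof is correct, but it takes a genuinely different route from the paper. The paper does not prove this lemma at all: it simply cites Lemma~11 of Hübler et al.\ and obtains part~(ii) by a limiting argument ($L^1 \to 0$) in that external result. You instead give a self-contained argument: for part~(ii) you swap the order of summation so that the inner sum matches the paper's Lemma~\ref{lemma:tech1} (Hübler's Lemma~10) with $p=\nicefrac{3}{4}$, $q=\nicefrac{1}{2}$, $a=\tau+1$, $b=K$ — the hypotheses $a,b \geq 2$, $a \leq b$ are satisfied since $\tau \geq 1$ — and the resulting bound $e\,t^2(1+\log K)$ is indeed dominated by $7t^2(3+\log K)$; for part~(i) your key inductive claim $S_k \leq 2(k+1)^{-1/2}$ checks out: the recursion $S_{k+1}=(k+2)^{-1}+\bigl(1-(k+2)^{-1/2}\bigr)S_k$ is exact, and with $u=\sqrt{k+1}$, $v=\sqrt{k+2}$ the inductive step reduces (after clearing denominators and using $v^2-u^2=1$) to $v^2+uv+1 \geq 2v$, which follows from $v^2+1\geq 2v$; the resulting bound $\sqrt{2}\,t(1+\log K)$ sits comfortably inside $t(\nicefrac{7}{2}+\sqrt{2e^2}\log K)$. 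Your observation about why Lemma~\ref{lemma:tech1} cannot be applied directly to part~(i) (the summation variable sits at the lower endpoint of the product) is also accurate and explains why the induction is needed. What each approach buys: the paper's citation is shorter and defers all work to the reference, at the cost of an external dependence and a somewhat indirect limit argument for (ii); your proof is self-contained within the paper's own toolkit, elementary, and in fact yields slightly sharper constants than the stated bounds.
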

\begin{proof}
    This is a direct consequence of Lemma~11 by~\citet{hubler2024parameter}. To obtain \textit{(ii)}, it suffices to take the limit as \( L^1 \to 0 \) in statement \textit{(ii)} of part~(b).
\end{proof}

\newpage

\section{Remarks on the theoretical results}

\subsection{Note on radii and stepsizes}\label{sec:step_note}

It is known (see, e.g., \citet[Theorem D.1]{gruntkowska2025ball}, who establish this for $\cS = \R^d$ under Euclidean norms; the extension to general normed vector spaces is entirely analogous) that if $g$ is a convex function, then the solution to the problem
\begin{align*}
    \argmin_{X\in\mathcal{B}^k} g(X)
\end{align*}
is unique and lies on the boundary of the ball $\mathcal{B}^k \eqdef \{ X \in \cS : \|X - X^k\| \leq t^k \}$ (unless $\nabla g(X^k) = 0$, i.e., $X^k$ is a stationary point of $g$).

This applies directly to the LMO subproblem solved at each iteration of \algname{Gluon} in \eqref{eq:update_rule_stoch}, since the objective $\langle M_i^k, X_i \rangle_{(i)}$ is a linear function of $X_i$, and hence convex. In other words, each LMO step moves the iterate from the center of the ball $X_i^k$ to a new point $X_i^{k+1}$ located on the boundary of $\mathcal{B}_i^k$, effectively traversing a distance of $t_i^k$ at each step. For this reason, we use the terms \emph{radius}, \emph{stepsize}, and \emph{learning rate} interchangeably.

\subsection{Note on prior analyses}

As presented, prior convergence results do not directly apply to the algorithms used in practice. However, there is a workaround. Specifically, some of the existing convergence guarantees \citep{kovalev2025muon, pethick2025training} expressed in terms of the flat vector $x$ are transferable to the structured parameters $X=[X_1, \ldots, X_l]\in\cS$ by employing the max-norm \cite{bernstein2024modular, larges2024calable}, defined as
\begin{align}\label{eq:max_norm}
    \norm{X}_{\max} \eqdef \max\brac{\norm{X_1}_{(1)}, \ldots, \norm{X_p}_{(p)}},
\end{align}
with corresponding dual norm $\|Y\|_{\max \star} = \sup_{\|X\|_{\text{max}} \leq 1} \langle X, Y \rangle = \sum_{i=1}^p \|Y_i\|_{(i) \star}$. Nevertheless, these works do not make this connection explicit, and an additional layer of analysis is required to ensure the guarantees meaningfully extend to the structured practical setting. Even if such a translation was attempted, the global treatment introduces serious practical limitations.
For example, real-world training pipelines tune parameters on a per-layer basis, reflecting the heterogeneous structure of deep networks. Max-norm-based guarantees overlook this variability and offer no mechanism for per-layer control in hyperparameter selection.

\newpage

\section{Deterministic case}

We begin by considering the deterministic counterpart of \algname{Gluon}, as formalized in \Cref{algo:deterministic}. We first review several existing algorithms that fall within this framework (\Cref{sec:lmo_examples}), followed by a proof of \Cref{theorem:1} (\Cref{sec:det_proof}). Finally, we present an additional convergence guarantee under the layer-wise Polyak-Łojasiewicz (PŁ) condition (\Cref{sec:pl}).

\begin{algorithm}[t]
    \caption{Deterministic Adaptive Layer-Wise LMO-based Optimizer}
    \label{algo:deterministic}
    \begin{algorithmic}[1]
    \State \textbf{Input:} Initial model parameters $X^0 = [X_1^0, \dots, X_p^0] \in \cS$
    \For{$k = 0, 1, \dots, K-1$}
        \For{$i = 1, 2, \dots, p$}
            \State Choose adaptive stepsize/radius $t_i^k>0$ for layer $i$
            \State Update parameters for layer $i$ via LMO over $\mathcal{B}_i^k \eqdef \{ X_i \in \cS_i : \|X_i - X_i^k\|_{(i)} \leq t_i^k \}$:
            \begin{equation}
                \label{eq:update_rule}
                X_i^{k+1} = \mathrm{LMO}_{\mathcal{B}_i^k} \left( \nabla_i f(X^k) \right)
                \eqdef \underset{X_i \in \mathcal{B}_i^k}{\arg\min} \ \langle \nabla_i f(X^k), X_i \rangle_{(i)}
            \end{equation}
        \EndFor
        \State Update full vector: \
        $
        X^{k+1} = [X_1^{k+1}, \dots, X_p^{k+1}]
        $
    \EndFor
    \end{algorithmic}
\end{algorithm}

\subsection{Special cases of the LMO framework}\label{sec:lmo_examples}

As outlined in \Cref{sec:det_examples}, deterministic \algname{Gluon} encompasses a general class of algorithms, parameterized by the choice of norms $\|\cdot\|_{(i)}$ in the LMO. We now provide a more detailed discussion of the most notable special cases.

\paragraph{Layer-wise normalized GD \citep{wei2018blocknormalized}.}  Let \( \|\cdot\|_{(i)}= \|\cdot\|_{2 \rightarrow 2} \) for each parameter group and assume that $n_i = 1$ for all $i=1,\ldots,p$. In this case, the spectral norm reduces to the standard Euclidean norm \( \|\cdot\|_2 \), yielding the update rule
\begin{equation*}
    X_i^{k+1} = X_i^k - t^k_i \frac{\nabla _i f(X^k)}{\|\nabla _i f(X^k)\|_2}, \quad i = 1, \dots, p, 
\end{equation*}
which corresponds to layer-wise normalized \algname{GD}. With a suitable choice of $t_i^k$ (see Theorem~\ref{theorem:1}), the method can also recover the Gradient Method for $(L^0, L^1)$-smooth functions \citep{vankov2025optimizing}.

\paragraph{Layer-wise signGD \citep{balles2020geometrysigngradientdescent}.} Suppose that $\|\cdot\|_{(i)}=\|\cdot\|_{1 \rightarrow \infty}$ for each parameter group, with \( n_i = 1 \) for all $i=1,\ldots,p$. Then, \( \|\cdot\|_{1 \rightarrow \infty} \) reduces to \( \|\cdot\|_{\infty} \), and the update becomes
\begin{equation*}
    X_i^{k+1} = X_i^k - t^k_i \text{sign}\parens{\nabla _i f(X^k)}, \quad i = 1, \dots, p,
\end{equation*}
where the sign function is applied element-wise. This is equivalent to layer-wise \algname{signGD}. 

\paragraph{Muon \citep{jordan2024muon}.} Here, the spectral norm \( \|\cdot\|_{2 \rightarrow 2} \) is used for all parameter groups, without restrictions on $n_i$. In this case, it can be shown that \eqref{eq:update_rule} is equivalent to
\begin{equation}\label{eq:muon}
    X_i^{k+1} = X_i^k - t^k_i U_i^k \left(V_i^k\right)^{\top}, \quad i = 1, \dots, p, 
\end{equation}
where $\nabla _i f(X^k) = U^k_i \Sigma^k_i \left(V^k_i\right)^{\top}$ is the singular value decomposition \citep{bernstein2024old}. This is exactly the per-layer deterministic version of the \algname{Muon} optimizer. In practical LLM training, a more general variant of \eqref{eq:muon} incorporating stochasticity and momentum is applied to the intermediate layers, while the input and output layers are optimized using other methods.

\paragraph{Unconstrained Scion \citep{pethick2025training}.} We can also recover two variants of \algname{unScion} introduced by~\citet{pethick2025training}: one for training LLMs on next-token prediction, and another for training CNNs for image classification.

\begin{itemize} 
    \item \textbf{Training LLMs.} Define the norms \( \|\cdot\|_{(i)} \) as follows: for \( i = 1, \dots, p-1 \), corresponding to weight matrices of transformer blocks, set \(\|\cdot\|_{(i)} = \sqrt{\nicefrac{n_i}{m_i}} \|\cdot\|_{2 \to 2},\)
    and for the last group~\( X_p \), representing the embedding and output layers (which coincide under the weight sharing regime considered here), let \(\|\cdot\|_{(p)} = n_p \|\cdot\|_{1 \to \infty}\).
    In this case, \eqref{eq:update_rule} becomes
    \begin{equation}\label{eq:scion_llm}
    \begin{aligned}
        X_i^{k+1} &= X_i^k - t^k_i \sqrt{\frac{m_i}{n_i}} U_i^k \left(V_i^k\right)^{\top}, \quad i = 1, \dots, p-1, \\ 
        X_p^{k+1} &= X_p^k - \frac{t^k_p}{n_p} \text{sign}\left(\nabla _p f(X^k)\right), 
    \end{aligned}
    \end{equation}
    where $\nabla _i f(X^k) = U^k_i \Sigma^k_i \left(V^k_i\right)^{\top}$ is the singular value decomposition. This is equivalent to deterministic layer-wise \algname{unScion} optimizer without momentum.
    A more general variant, incorporating stochasticity and momentum and applied to all layers, was shown by \citet{pethick2025training} to outperform \algname{Muon} on LLM training tasks.
    
    \item \textbf{Training CNNs.} The main difference in the CNN setting  is the presence of not only 2D weight matrices, but also 1D bias vectors and 4D convolutional kernels parameters. Biases are 1D tensors of shape $\mathbb{R}^{C_i^{out}}$, for which we use scaled Euclidean norms. Convolutional parameters (conv) are 4D tensors with shapes \(\mathbb{R}^{C_i^{out} \times C_i^{in} \times k \times k}\), where \(C_i^{out}\) and \(C_i^{in}\) denote the number of output and input channels, and \(k\) is the kernel size. To compute norms, we reshape each 4D tensor to a 2D matrix of shape \(\mathbb{R}^{C_i^{out} \times C_i^{in}k^2}\), and then apply a scaled \( \|\cdot\|_{2 \rightarrow 2} \) norm. This yields the norm choices \(\|\cdot\|_{(i)} = \sqrt{\nicefrac{1}{C_i^{out}}}\|\cdot\|_2\) for biases, \( \|\cdot\|_{(i)} = k^2 \sqrt{\nicefrac{C_i^{in}}{C_i^{out}}} \|\cdot\|_{2 \to 2} \) for conv, and \(\|\cdot\|_{(p)} = n_p \|\cdot\|_{1 \to \infty}\) for the last group \( X_p \), associated with classification head weights. Then, it can be shown that (\ref{eq:update_rule}) is equivalent to
    \begin{equation}\label{eq:scion_cnn}
    \begin{aligned}
        X_i^{k+1} &= X_i^k - t^k_i \sqrt{C_i^{out}} \frac{\nabla _i f(X^k)}{\|\nabla _i f(X^k)\|_2}, \quad\quad\, \text{(for biases)}, \\
        X_i^{k+1} &= X_i^k - t^k_i \frac{1}{k^2} \sqrt{\frac{C_i^{out}}{C_i^{in}}} U_i^k \left(V_i^k\right)^{\top}, \quad \text{(for conv)}, \\
        X_p^{k+1} &= X_p^k - \frac{t^k_p}{n_p} \text{sign}\left(\nabla _p f(X^k)\right), \quad\quad\, \text{(for head)}
    \end{aligned}
    \end{equation}
    where $\nabla _i f(X^k) = U^k_i \Sigma^k_i \left(V^k_i\right)^{\top}$ is the singular value decomposition. This corresponds to the deterministic layer-wise \algname{unScion} optimizer without momentum.
\end{itemize}

\subsection{Proof of Theorem \ref{theorem:1}}\label{sec:det_proof}

We now state and prove a generalization of \Cref{theorem:1}.
\begin{theorem}\label{theorem:1_full}
    Let \Cref{ass:generalized-smoothness} hold and fix $\varepsilon>0$. Let $X^0,\ldots,X^{K-1}$ be the iterates of deterministic \algname{Gluon} (Algorithm \ref{algo:deterministic}) run with stepsizes $t^k_i = \frac{\|\nabla _i f(X^k)\|_{(i) \star}}{L^0_i + L^1_i\|\nabla _i f(X^k)\|_{(i) \star}}$.
    Then,
    \begin{enumerate}
        \item In order to reach the precision
        \begin{align*}
            \min_{k=0,\ldots,K-1} \sum_{i=1}^p \left\| \nabla _i f(X^{k}) \right\|_{(i) \star} \leq \epsilon, 
        \end{align*}
        it suffices to run the algorithm for
        \begin{align}\label{eq:det_rate1}
            K = \left\lceil \frac{2 \Delta^0 \sum_{i=1}^p L^0_i}{\epsilon^2} + \frac{2 \Delta^0 L^1_{\max}}{\epsilon} \right\rceil
        \end{align}
        iterations;
        
        \item In order to reach the precision
        \begin{align}\label{eq:det_precision2}
            \min\limits_{k=0,\ldots,K-1} \sum\limits_{i=1}^p \left[\frac{\frac{1}{L_i^1}}{\frac{1}{p} \sum_{j=1}^p \frac{1}{L_j^1}} \norm{\nabla _i f(X^k)}_{(i) \star}\right] \leq \varepsilon,
        \end{align}
        it suffices to run the algorithm for
        \begin{align}\label{eq:det_rate2}
            K = \left\lceil \frac{2 \Delta^0 \parens{\sum_{i=1}^p \frac{L^0_i}{(L_i^1)^2}}}{\varepsilon^2 \parens{\frac{1}{p} \sum_{j=1}^p \frac{1}{L_j^1}}^2} 
            + \frac{2 \Delta^0}{\varepsilon \parens{\frac{1}{p} \sum_{j=1}^p \frac{1}{L_j^1}}} \right\rceil
        \end{align}
        iterations,
    \end{enumerate}
    where $\Delta^0\eqdef f(X^0) - \inf_{X \in \cS} f(X)$ and $L^1_{\max}\eqdef\max_{i=1,\dots,p} L^1_i$.
\end{theorem}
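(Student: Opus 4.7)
My plan is to use \Cref{lemma:generalized-smoothness} as a one-step descent engine, telescope the resulting inequality, and then deploy \Cref{lemma:ineq} with two different weightings to extract the two complexity bounds.

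I would start by applying \Cref{lemma:generalized-smoothness} with $X = X^k$ and $Y = X^{k+1}$. The crucial LMO identity---valid for any radius $t^k_i$ and derived in \Cref{sec:step_note}---is that
\[
\|X_i^{k+1}-X_i^k\|_{(i)} = t^k_i, \qquad \inp{\nabla_i f(X^k)}{X_i^{k+1}-X_i^k}_{(i)} = -t^k_i\, \|\nabla_i f(X^k)\|_{(i)\star}.
\]
Writing $g^k_i \eqdef \|\nabla_i f(X^k)\|_{(i)\star}$ and substituting the prescribed stepsize $t^k_i = g^k_i/(L^0_i + L^1_i g^k_i)$, both the linear and the quadratic terms collapse into a clean per-step descent
\[
f(X^{k+1}) - f(X^k) \;\leq\; -\frac{1}{2}\sum_{i=1}^p \frac{(g^k_i)^2}{L^0_i + L^1_i g^k_i}.
\]
Telescoping over $k=0,\ldots,K-1$ and using $f(X^K)\geq f^{\inf}$ then yields the master inequality $\sum_{k=0}^{K-1}\sum_{i=1}^p (g^k_i)^2/(L^0_i + L^1_i g^k_i) \leq 2\Delta^0$.

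The remaining task is to convert this summed bound into the two precision guarantees, and this is exactly where \Cref{lemma:ineq} enters with two different weight choices. For part~1, I would apply the lemma with $y_i = g^k_i$, $z_i = L^0_i + L^1_i g^k_i$ and $x_i = 1$; combined with $\sum_i L^1_i g^k_i \leq L^1_{\max} G_k$, this gives the lower bound $G_k^2/(\sum_i L^0_i + L^1_{\max} G_k)$, where $G_k \eqdef \sum_i g^k_i$. For part~2, I would instead choose $x_i = 1/L^1_i$, yielding the lower bound $S_k^2/(A + S_k)$ with $S_k \eqdef \sum_i g^k_i/L^1_i$ and $A \eqdef \sum_i L^0_i/(L^1_i)^2$. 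Note that $S_k/\bar L$ with $\bar L \eqdef (1/p)\sum_j 1/L^1_j$ is precisely the weighted quantity appearing in \eqref{eq:det_precision2}.

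To finish, I would use the monotonicity of $u \mapsto u^2/(a+bu)$ on $[0,\infty)$ for $a,b\geq 0$ (its derivative simplifies to $u(2a+bu)/(a+bu)^2 \geq 0$) to convert the summed lower bound into a bound on the minimum iterate: $K\,\phi(\min_k Q_k) \leq \sum_k \phi(Q_k)$ for $Q_k\in\{G_k, S_k\}$. Assuming for contradiction that the relevant minimum exceeds $\varepsilon$ (or $\varepsilon\bar L$ in the weighted case) and solving the resulting scalar inequality for $K$ produces the iteration counts \eqref{eq:det_rate1} and \eqref{eq:det_rate2}. The main subtlety---and the step that makes part~2 nontrivial---is the choice of weights $x_i = 1/L^1_i$ in \Cref{lemma:ineq}: this is precisely what trades the worst-case $L^1_{\max}$ dependence for the harmonic-mean-style factor $(1/p)\sum_j 1/L^1_j$, which is substantially more favorable whenever the layer-wise smoothness constants vary appreciably, as observed empirically.
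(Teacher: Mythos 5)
Your proposal is correct and follows essentially the same route as the paper: apply \Cref{lemma:generalized-smoothness} with $X=X^k$, $Y=X^{k+1}$, use the LMO identity for the linear term, plug in the prescribed stepsize to get the per-step descent, telescope, and then invoke \Cref{lemma:ineq} with the two weight choices $x_i=1$ (for part~1) and $x_i=1/L^1_i$ (for part~2), followed by monotonicity of $u\mapsto u^2/(a+bu)$ and inversion. The only cosmetic difference is the order of operations in part~1: the paper first replaces $L^1_i$ by $L^1_{\max}$ inside the sum and then applies \Cref{lemma:ineq}, whereas you apply \Cref{lemma:ineq} with $z_i=L^0_i+L^1_i g^k_i$ and bound $\sum_i L^1_i g^k_i\leq L^1_{\max}G_k$ afterward; both steps produce the identical lower bound $G_k^2/(\sum_i L^0_i + L^1_{\max}G_k)$.
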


\begin{remark}\label{rem:det}
    Let us compare bounds \eqref{eq:det_rate1} and \eqref{eq:det_rate2}. Due to the reweighting of the gradient component norms in \eqref{eq:det_precision2}, the rates are not exactly equivalent. Nevertheless, both use weights that sum to $p$, ensuring a fair comparison. Obviously, $(\nicefrac{1}{p} \sum_{j=1}^p \nicefrac{1}{L_j^1})^{-1} \leq L^1_{\max}$,
    so the second term in~\eqref{eq:det_rate2} is always no worse than its counterpart in~\eqref{eq:det_rate1}. The comparison of the first terms, however, depends on how the sequences $\{L^0_i\}$ and $\{L^1_i\}$ relate: if larger values of $L^0_i$s tend to be attached to smaller values of $L^1_i$, then the first term in~\eqref{eq:det_rate2} improves over that in~\eqref{eq:det_rate1}, while for a positive correlation the opposite is true. Indeed, in the extreme case when $L^0_1 \geq \ldots \geq L^0_p$ and $L^1_1 \leq \ldots \leq L^1_p$ (or the reverse ordering), Chebyshev's sum inequality implies that
    \begin{align*}
        \frac{\sum\limits_{i=1}^p \frac{L^0_i}{(L_i^1)^2}}{\parens{\frac{1}{p} \sum\limits_{j=1}^p \frac{1}{L_j^1}}^2}
        \leq \frac{\parens{\frac{1}{p} \sum\limits_{i=1}^p \frac{L^0_i}{L_i^1}} \parens{\frac{1}{p} \sum\limits_{i=1}^p \frac{1}{L_i^1}}}{\frac{1}{p} \parens{\frac{1}{p} \sum\limits_{j=1}^p \frac{1}{L_j^1}}^2}
        \leq \frac{\parens{\frac{1}{p} \sum\limits_{i=1}^p L^0_i} \parens{\frac{1}{p} \sum\limits_{i=1}^p \frac{1}{L_i^1}}}{\frac{1}{p} \parens{\frac{1}{p} \sum\limits_{j=1}^p \frac{1}{L_j^1}}}
        = \sum\limits_{i=1}^p L^0_i.
    \end{align*}
    Conversely, if both sequences $\{L^0_i\}$ and $\{L^1_i\}$ are sorted in the same order (either increasing or decreasing), the inequality reverses, and the first term of \eqref{eq:det_rate1} may be tighter.
    That said, empirical evidence we provide in \Cref{sec:experiments} indicates that in practice $L^0_i \approx 0$ across all layers, in which case the first terms in \eqref{eq:det_rate1} and \eqref{eq:det_rate2} effectively vanish. Then, \eqref{eq:det_rate2} is clearly superior, replacing the worst-case constant $L^1_{\max}$ by the harmonic mean.
\end{remark}

\begin{proof}
We start with the result obtained in Lemma~\ref{lemma:generalized-smoothness} with $X=X^k$ and $Y=X^{k+1}$
\begin{align*}
f(X^{k+1}) &\leq f(X^k) + \left\langle \nabla f(X^k), X^{k+1}-X^k\right\rangle + \sum_{i=1}^p\frac{L^0_i + L^1_i\|\nabla _i f(X^k)\|_{(i) \star}}{2}\|X^k_i-X^{k+1}_i\|^2_{(i)}\\
&= f(X^k) + \sum_{i=1}^p \left[ \left\langle \nabla _i f(X^k), X_i^{k+1}-X_i^k\right\rangle_{(i)} + \frac{L^0_i + L^1_i\|\nabla _i f(X^k)\|_{(i) \star}}{2}\|X^k_i-X^{k+1}_i\|^2_{(i)}\right].
\end{align*}

The update rule (\ref{eq:update_rule}) and the definition of the dual norm $\|\cdot\|_{(i) \star}$ give
\begin{align*}
\|X^k_i-X^{k+1}_i\|^2_{(i)}\leq \left(t^k_i\right)^2
\end{align*}
and
\begin{align*}
    \left\langle \nabla _i f(X^k), X_i^{k+1}-X_i^k\right\rangle_{(i)}
    &= \left\langle \nabla _i f(X^k), \mathrm{LMO}_{\mathcal{B}_i^k} \left( \nabla _i f(X^k) \right)-X_i^k\right\rangle_{(i)} \\
    &= - t_i^k \max_{\|X_i\|_{(i)} \leq 1} \left\langle \nabla _i f(X^k), X_i\right\rangle_{(i)} \\
    &= - t_i^k \| \nabla _i f(X^k) \|_{(i) \star}.
\end{align*}
Consequently, 
\begin{align*}
f(X^{k+1}) \leq f(X^k) + \sum_{i=1}^p \left[ - t_i^k \| \nabla _i f(X^k) \|_{(i) \star} + \frac{L^0_i + L^1_i\|\nabla _i f(X^k)\|_{(i) \star}}{2} \left(t^k_i\right)^2 \right].
\end{align*}
Now, choosing
\begin{align*}
t^k_i = \frac{\|\nabla _i f(X^k)\|_{(i) \star}}{L^0_i + L^1_i\|\nabla _i f(X^k)\|_{(i) \star}},
\end{align*}
which minimizes the right-hand side of the last inequality, yields the descent inequality
\begin{align}\label{eq:aousghr}
f(X^{k+1}) \leq f(X^k) - \sum_{i=1}^p \frac{\|\nabla _i f(X^k)\|^2_{(i) \star}}{2\left(L^0_i + L^1_i\|\nabla _i f(X^k)\|_{(i) \star}\right)}.
\end{align}

Summing the terms, we obtain
\begin{equation}
\begin{aligned}
\sum_{k=0}^{K-1} \sum_{i=1}^p \frac{\|\nabla _i f(X^k)\|^2_{(i) \star}}{2\left(L^0_i + L^1_i\|\nabla _i f(X^k)\|_{(i) \star}\right)} &\leq \sum_{k=0}^{K-1} \left( f(X^k) - f(X^{k+1}) \right) \\
&= f(X^0) - f(X^{K}) \\
&\leq f(X^0) - \inf_{X \in \cS} f(X) =: \Delta^0.
\end{aligned}
\label{eq:telescoping}
\end{equation}

Now, the analysis can proceed in two ways:

\begin{enumerate}
    \item Upper-bounding $L^1_i$ by $L^1_{\max}:=\max_{i=1,\dots,p} L^1_i$ in \eqref{eq:telescoping}, we obtain
    \begin{align}\label{eq:anmzdgzn}
    \sum_{k=0}^{K-1} \sum_{i=1}^p \frac{\|\nabla _i f(X^k)\|^2_{(i) \star}}{2\left(L^0_i + L^1_{\max}\|\nabla _i f(X^k)\|_{(i) \star}\right)} 
    \leq \Delta^0.
    \end{align}
    Now, applying \Cref{lemma:ineq} with $x_i = 1$, $y_i = \|\nabla _i f(X^{k})\|_{(i) \star}$ and $z_i = 2 \parens{L^0_i + L^1_{\max} \norm{\nabla _i f(X^k)}_{(i) \star}}$ gives
    \begin{align*}
    \phi \left( \sum_{i=1}^p \|\nabla _i f(X^{k})\|_{(i) \star}\right) 
    &= \frac{\left( \sum_{i=1}^p \|\nabla _i f(X^{k})\|_{(i) \star} \right)^2}{2 \left(\sum_{i=1}^p L^0_i + L^1_{\max} \sum_{i=1}^p \|\nabla _i f(X^{k})\|_{(i) \star}\right)} \\
    &\leq \sum_{i=1}^p \frac{\|\nabla _i f(X^{k})\|^2_{(i) \star}}{2 \left(L^0_i + L^1_{\max} \|\nabla _i f(X^{k})\|_{(i) \star}\right)},
    \end{align*}
    where $\phi(t)\eqdef \frac{t^2}{2(\sum_{i=1}^p L^0_i+L^1_{\max} t)}$.
    Combining the last inequality with (\ref{eq:anmzdgzn}) and using the fact that $\phi$ is increasing, we obtain
    \begin{align}
    K \phi \left(\min_{k=0,\ldots,K-1} \sum_{i=1}^p \|\nabla _i f(X^{k})\|_{(i) \star}\right) \leq \sum_{k=0}^{K-1} \phi\left(\sum_{i=1}^p \|\nabla _i f(X^k)\|_{(i) \star}\right) \leq \Delta^0,
    \label{eq:for_the_proof_th2_part1}
    \end{align}
    and hence
    \begin{align*}
    \min_{k=0,\ldots,K-1} \sum_{i=1}^p \|\nabla _i f(X^{k})\|_{(i) \star} \leq \phi^{-1}\left( \frac{\Delta^0}{K} \right),
    \end{align*}
    where $\phi^{-1}$ is the inverse function (which exists since $\phi$ is increasing).    
    Therefore, to reach the precision $\min_{k=0,\ldots,K-1} \sum_{i=1}^p \left\| \nabla _i f(X^{k}) \right\|_{(i) \star} \leq \epsilon$, it
    is sufficient to choose the number of iterations to be
    \begin{align*}
    K = \left\lceil \frac{\Delta^0}{\phi(\epsilon)} \right\rceil = \left\lceil \frac{2 \sum_{i=1}^p L^0_i \Delta^0}{\epsilon^2} + \frac{2L^1_{\max} \Delta^0}{\epsilon} \right\rceil.
    \end{align*}

    \item Alternatively, we can start from the inequality \eqref{eq:telescoping} and apply \Cref{lemma:ineq} with $x_i = \nicefrac{1}{L_i^1}$, $y_i = \norm{\nabla _i f(X^k)}_{(i) \star}$ and $z_i = 2 (L^0_i + L^1_i \norm{\nabla _i f(X^k)}_{(i) \star})$
    to obtain
    \begin{eqnarray*}
        \Delta^0 &\geq& \sum_{k=0}^{K-1} \sum_{i=1}^p \frac{\|\nabla _i f(X^k)\|^2_{(i) \star}}{2\left(L^0_i + L^1_i\|\nabla _i f(X^k)\|_{(i) \star}\right)} \\
        &\geq& \sum_{k=0}^{K-1} \frac{\parens{\sum_{i=1}^p \frac{1}{L_i^1} \norm{\nabla _i f(X^k)}_{(i) \star}}^2}{2 \parens{\sum_{i=1}^p \frac{1}{(L_i^1)^2} \parens{L^0_i + L^1_i \norm{\nabla _i f(X^k)}_{(i) \star}}}} \\
        &=& \sum_{k=0}^{K-1} \frac{\parens{\sum_{i=1}^p \frac{1}{L_i^1} \norm{\nabla _i f(X^k)}_{(i) \star}}^2}{2 \parens{\sum_{i=1}^p \frac{L^0_i}{(L_i^1)^2} + \sum_{i=1}^p \frac{1}{L_i^1} \norm{\nabla _i f(X^k)}_{(i) \star}}} \\
        &=& \sum_{t=0}^{K-1} \psi{\parens{\sum_{i=1}^p \frac{1}{L_i^1} \norm{\nabla _i f(X^k)}_{(i) \star}}},
    \end{eqnarray*}
    where $\psi(t) \eqdef \frac{t^2}{2 \parens{\sum_{i=1}^p \frac{L^0_i}{(L_i^1)^2} + t}}$. Since the function $\psi$ is increasing for $t>0$, $\psi^{-1}$ exists. It follows that
    \begin{eqnarray*}
        \Delta^0 &\geq& \sum_{k=0}^{K-1} \psi{\parens{\sum_{i=1}^p \frac{1}{L_i^1} \norm{\nabla _i f(X^k)}_{(i) \star}}} \\
        &\geq& K \psi{\parens{\min_{k=0,\ldots,K-1} \sum_{i=1}^p \frac{1}{L_i^1} \norm{\nabla _i f(X^k)}_{(i) \star}}},
    \end{eqnarray*}
    and hence
    \begin{eqnarray*}
        \min_{k=0,\ldots,K-1} \sum_{i=1}^p \frac{1}{L_i^1} \norm{\nabla _i f(X^k)}_{(i) \star}
        &\leq& \psi^{-1}\parens{\frac{\Delta^0}{K}}.
    \end{eqnarray*}
    This in turn means that to reach the precision
    \begin{eqnarray*}
        \min_{k=0,\ldots,K-1} \sum_{i=1}^p \left[\frac{\frac{1}{L_i^1}}{\frac{1}{p} \sum_{j=1}^p \frac{1}{L_j^1}} \norm{\nabla _i f(X^k)}_{(i) \star}\right] \leq \varepsilon,
    \end{eqnarray*}
    it suffices to run the algorithm for
    \begin{align*}
        K = \left\lceil \frac{\Delta^0}{\psi\parens{\varepsilon \parens{\frac{1}{p} \sum_{j=1}^p \frac{1}{L_j^1}}}} \right\rceil
        = \left\lceil \frac{2 \Delta^0 \parens{\sum_{i=1}^p \frac{L^0_i}{(L_i^1)^2}}}{\varepsilon^2 \parens{\frac{1}{p} \sum_{j=1}^p \frac{1}{L_j^1}}^2} 
        + \frac{2 \Delta^0}{\varepsilon \parens{\frac{1}{p} \sum_{j=1}^p \frac{1}{L_j^1}}} \right\rceil
    \end{align*}
    iterations.
\end{enumerate}

\end{proof}

\subsection{Convergence under the PŁ condition}\label{sec:pl}

We now establish convergence rates under the layer-wise Polyak–Łojasiewicz (PŁ) condition, introduced in Assumption~\ref{ass:pl}. This property is especially relevant for heavily over-parameterized neural networks, as it has been shown to capture the properties of their loss landscapes \citep{overparameterized-loss-landscapes}.

\begin{assumption}[Layer-wise Polyak-Łojasiewicz condition]\label{ass:pl}
    The function $f: \cS \mapsto \mathbb{R}$ satisfies the layer-wise Polyak-Łojasiewicz (PŁ) condition with a constant $\mu>0$, i.e., for any $X\in \cS$
    \begin{equation*}
       \sum_{i=1}^p \|\nabla _i f(X)\|^2_{(i) \star} \geq 2\mu\left(f(X)-f^{\star}\right),
    \end{equation*}
    where $f^{\star}:=\inf_{X \in \cS} f(X) > - \infty$.
\end{assumption}

Assumption~\ref{ass:pl} reduces to the standard PŁ condition \citep{karimi2020linearconvergencegradientproximalgradient} by vectorizing the parameters and adopting the Euclidean norm \(\|\cdot\|_{2}\).

\begin{theorem}\label{theorem:1-pl}
    Let Assumptions \ref{ass:generalized-smoothness} and \ref{ass:pl} hold and fix $\varepsilon>0$. Let $X^0,\ldots,X^{K-1}$ be the iterates of deterministic \algname{Gluon} (Algorithm \ref{algo:deterministic}) run with \( t^k_i = \frac{\|\nabla _i f(X^k)\|_{(i) \star}}{L^0_i + L^1_i\|\nabla _i f(X^k)\|_{(i) \star}}\).

    \begin{enumerate}
        \item If $L^1_i\geq0$, then to reach the precision $\min_{k=0,\ldots,K-1} f(X^{k})-f^{\star} \leq \epsilon$, it suffices to run the algorithm for
        \begin{align*}
        K = \left\lceil \frac{\sum^p_{i=1} L^0_i \Delta^0}{\mu\epsilon} + \frac{\sqrt{2} L^1_{\max} \Delta^0}{\sqrt{\mu\epsilon}} \right\rceil
        \end{align*}
        iterations,
    
        \item If $L^1_i=0$ for all $i = 1, \dots, p$, then to reach the precision $f(X^{K})-f^{\star} \leq \epsilon$, it suffices to run the algorithm for
        \begin{align*}
        K = \left\lceil \frac{L^0_{\max}}{\mu} \log{\frac{\Delta^0}{\epsilon}} \right\rceil,
        \end{align*}
    \end{enumerate}
    where $L^0_{\max}:=\max_{i=1,\dots,p}L^0_i$, $L^1_{\max}:=\max_{i=1,\dots,p}L^1_i$, $\Delta^0 := f(X^0) - f^{\star}$ and $f^{\star} := \inf_{X \in \cS} f(X)$.
\end{theorem}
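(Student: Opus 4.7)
The plan is to start from the one-step descent inequality \eqref{eq:aousghr} established inside the proof of \Cref{theorem:1_full}, which holds verbatim under the same stepsize choice as in the present theorem:
\begin{equation*}
    f(X^{k+1}) \leq f(X^k) - \sum_{i=1}^p \frac{\|\nabla _i f(X^k)\|^2_{(i) \star}}{2\left(L^0_i + L^1_i\|\nabla _i f(X^k)\|_{(i) \star}\right)}.
\end{equation*}
Writing $\Delta^k \eqdef f(X^k) - f^{\star}$, the strategy is to lower-bound the per-step decrement by a quantity depending only on $\Delta^k$ via \Cref{ass:pl}, and then iterate the resulting scalar recursion. The sequence $\{\Delta^k\}$ is non-increasing, so it suffices, for each case, to show that under the assumption $\Delta^k > \epsilon$ for all $k<K$ the accumulated decrement exceeds $\Delta^0$, yielding a contradiction.

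The second case ($L^1_i = 0$) is immediate: bounding $L^0_i \leq L^0_{\max}$ in the descent inequality and applying \Cref{ass:pl} gives
\begin{equation*}
    \Delta^{k+1} \leq \Delta^k - \frac{1}{2 L^0_{\max}} \sum_{i=1}^p \|\nabla _i f(X^k)\|^2_{(i) \star} \leq \left(1 - \frac{\mu}{L^0_{\max}}\right) \Delta^k,
\end{equation*}
and unrolling together with $1-x \leq e^{-x}$ yields $\Delta^K \leq e^{-K\mu/L^0_{\max}}\Delta^0$, reproducing the claimed $K = \lceil (L^0_{\max}/\mu) \log(\Delta^0/\epsilon) \rceil$.

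The main obstacle is the first case, because \Cref{ass:pl} controls the sum of \emph{squared} dual norms whereas the descent right-hand side is a sum of \emph{fractions}. To bridge the gap, I would first upper-bound $L^1_i \leq L^1_{\max}$ inside the denominators while \emph{keeping $L^0_i$ layer-specific}---this is the crucial step that produces $\sum_i L^0_i$ in the final bound instead of the cruder $p L^0_{\max}$---and then invoke \Cref{lemma:ineq} with $x_i = 1$, $y_i = \|\nabla _i f(X^k)\|_{(i) \star}$, $z_i = 2(L^0_i + L^1_{\max}\|\nabla _i f(X^k)\|_{(i) \star})$ to collapse the sum of fractions into a single one:
\begin{equation*}
    \Delta^{k+1} \leq \Delta^k - \psi\!\left(\sum_{i=1}^p \|\nabla _i f(X^k)\|_{(i)\star}\right), \qquad \psi(t) \eqdef \frac{t^2}{2\sum_{i=1}^p L^0_i + 2 L^1_{\max}\, t}.
\end{equation*}
The function $\psi$ is increasing on $[0,\infty)$, and since $\bigl(\sum_i \|\nabla _i f(X^k)\|_{(i)\star}\bigr)^2 \geq \sum_i \|\nabla _i f(X^k)\|^2_{(i)\star} \geq 2\mu\Delta^k$ by nonnegativity of the cross-terms and \Cref{ass:pl}, monotonicity of $\psi$ upgrades this to the scalar recursion $\Delta^{k+1} \leq \Delta^k - \psi(\sqrt{2\mu\Delta^k})$.

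To finish, I would invoke the standing assumption $\min_{0\leq k\leq K-1}\Delta^k > \epsilon$; combined with monotonicity of $\{\Delta^k\}$ this forces $\Delta^k > \epsilon$ for every $k$ in the range, and monotonicity of $\psi$ then yields $\Delta^{k+1} \leq \Delta^k - \psi(\sqrt{2\mu\epsilon})$. Telescoping gives $K\psi(\sqrt{2\mu\epsilon}) \leq \Delta^0 - \Delta^K \leq \Delta^0$, and the elementary identity
\begin{equation*}
    \frac{1}{\psi(\sqrt{2\mu\epsilon})} = \frac{\sum_{i=1}^p L^0_i}{\mu\epsilon} + \frac{\sqrt{2}\, L^1_{\max}}{\sqrt{\mu\epsilon}}
\end{equation*}
contradicts the iteration count stated in the theorem, completing the argument.
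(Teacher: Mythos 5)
Your proof is correct and follows essentially the same route as the paper's: both cases start from the one-step descent inequality \eqref{eq:aousghr}, and in Case 1 the reduction to a single fraction via Lemma~\ref{lemma:ineq} with $x_i=1$ and the subsequent use of the square-of-sums bound together with Assumption~\ref{ass:pl} produces exactly the paper's function $\phi$ (which you call $\psi$). The only cosmetic difference is that you conclude by a contradiction/telescoping argument whereas the paper takes the minimum over $k$ and inverts $\phi$; these are interchangeable and yield the identical iteration count.
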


\begin{proof}
We consider two scenarios: (1) the general case with arbitrary $L^1_i\geq0$ and (2) $L^1_i=0$ for all $i = 1, \dots, p$.

\paragraph{Case 1: $L^1_i\geq0$.}

We start by following the same steps as in the proof of Theorem~\ref{theorem:1}. From~\eqref{eq:for_the_proof_th2_part1}, we have

\begin{align*}
\sum_{k=0}^{K-1} \phi\left(\sum_{i=1}^p \|\nabla _i f(X^k)\|_{(i) \star}\right) \leq \Delta^0,
\end{align*}
where $\phi(t):= \frac{t^2}{2(\sum_{i=1}^p L^0_i+L^1_{\max} t)}$.
Now, using Assumption~\ref{ass:pl}, we get
\begin{align*}
\left(\sum_{i=1}^p \|\nabla_i f(X^{k})\|_{(i)\star}\right)^2 
\geq \sum_{i=1}^p \|\nabla_i f(X^{k})\|^2_{(i)\star} 
\geq 2\mu \left(f(X^{k})-f^{\star}\right).
\end{align*}

Consequently, since $\phi$ is an increasing function,
\begin{align*}
K \phi \left(\sqrt{2\mu} \sqrt{f(X^{k^{\star}})-f^{\star}} \right) &\leq \sum_{k=0}^{K-1} \phi \left( \sqrt{2\mu} \sqrt{f(X^{k})-f^{\star}} \right) \\ &\leq \sum_{k=0}^{K-1} \phi\left(\sum_{i=1}^p \|\nabla _i f(X^k)\|_{(i) \star}\right) \leq \Delta^0,
\end{align*}
where $k^{\star}:=\operatorname{argmin}_{k=0,\ldots,K-1} f(X^{k})-f^{\star}$.
Denoting the corresponding inverse function (which exists since $\phi$ is increasing) by $\phi^{-1}$, it follows that
\begin{align*}
\sqrt{2\mu} \sqrt{f(X^{k^{\star}})-f^{\star}} \leq \phi^{-1}\left( \frac{\Delta^0}{K} \right) \leq \sqrt{2\mu \epsilon}.
\end{align*}

Therefore, to reach the precision $f(X^{k^{\star}})-f^{\star} \leq \epsilon$, it
is sufficient to choose the number of iterations 
\begin{align*}
K = \left\lceil \frac{\Delta^0}{\phi\left( \sqrt{2\mu\epsilon}\right)} \right\rceil = \left\lceil \frac{\sum^p_{i=1} L^0_i \Delta^0}{\mu\epsilon} + \frac{\sqrt{2} L^1_{\max} \Delta^0}{\sqrt{\mu\epsilon}} \right\rceil.
\end{align*}

\paragraph{Case 2: $L^1_i = 0$.}

Inequality \eqref{eq:aousghr} from the proof of Theorem~\ref{theorem:1} with $L^1_i=0$ gives
\begin{align*}
f(X^{k+1}) \leq f(X^k) - \sum_{i=1}^p \frac{\|\nabla _i f(X^k)\|^2_{(i) \star}}{2L^0_i}.
\end{align*}

Using the fact that
\begin{align*}
\sum_{i=1}^p \frac{\|\nabla_i f(X^k)\|^2_{(i)\star}}{2L^0_i}  \geq \min_{j=1,\dots,p} \frac{1}{2L^0_j} \sum_{i=1}^p \|\nabla_i f(X^k)\|^2_{(i)\star} = \frac{1}{2 \max_{j=1,\dots,p} L^0_j} \sum_{i=1}^p \|\nabla f(X^k)\|^2_{(i)\star}
\end{align*} 
along with Assumption~\ref{ass:pl}, we obtain
\begin{align*}
f(X^{k+1}) \leq f(X^k) - \frac{\mu}{L^0_{\max}} \left(f(X^k)-f^{\star}\right).
\end{align*}
The remaining part of the proof follows from the simple observation
\begin{align*}
\log \left( \frac{\Delta_0}{\epsilon} \right) \leq k \frac{\mu}{L^0_{\max}} \leq k \log \left( \frac{1}{1 - \frac{\mu}{L^0_{\max}}} \right).
\end{align*}

\end{proof}

\newpage

\section{Stochastic case}

\subsection{Adaptive stepsizes}

Before proving the main result from \Cref{sec:convergence_stochastic}, we first present an attempt to formulate an adaptive stepsize strategy for the stochastic setting. This requires the following assumption:

\begin{assumption}\label{ass:grad_stochastic}
    The stochastic gradient estimator $\nabla f_{\xi}: \cS \mapsto \cS$ is unbiased and has bounded relative variance. That is, $\mathbb{E}[\nabla f_{\xi}(X)]=\nabla f(X)$ for all $X \in \cS$ and there exists $0\leq\zeta<1$ such that
    \begin{align*}
        \|\nabla_i f_{\xi}(X)-\nabla_i f(X)\|_{(i) \star} \leq \zeta \|\nabla_i f_{\xi}(X)\|_{(i) \star}, \quad \ i = 1, \dots, p
    \end{align*}
    holds almost surely for all $X \in \cS$.
\end{assumption}

This assumption is somewhat unconventional due to the presence of the stochastic gradients on the right-hand side of the inequality. It does not follow from standard conditions and does not fall within known frameworks for modeling stochasticity, such as the ABC inequality of \citet{khaled2020better}. Instead, it introduces a novel structure with parallels to the literature on contractive compression \citep{beznosikov2023biased, demidovich2023guide}.

To elaborate, recall the definition of a contractive compressor:
\begin{definition}[Contractive compressor]\label{def:contractive_compr}
	A stochastic mapping $\cC:\cS \to \cS$ is called a \emph{contractive compressor} if there exists  $\alpha \in [0,1)$ such that
	\begin{align}\label{eq:contractive_compr}
		\Exp{\|\cC(X) - X\|^2} \leq (1-\alpha)\|X\|^2 
	\end{align}
    for any $X\in\cS$.
\end{definition}
There is a conceptual similarity between \Cref{ass:grad_stochastic} and the contractive property in~\eqref{eq:contractive_compr}. \Cref{ass:grad_stochastic} can be interpreted as asserting that the true gradient $\nabla f(X)$ is effectively a contraction of the stochastic gradient $\nabla f_{\xi}(X)$, with contraction factor $1 - \zeta$. Unlike contractive compressors, there is no explicit mapping from $\nabla f_{\xi}(X)$ to $\nabla f(X)$, and the uniform bound implies the same contraction-like behavior across all stochastic gradients.

Although \Cref{ass:grad_stochastic} is admittedly strong, it allows us to establish a convergence theorem using an adaptive stepsize strategy similar to the one employed in the deterministic case in \Cref{theorem:1_full}.

\begin{theorem}\label{theorem:2}
    Let Assumptions \ref{ass:generalized-smoothness} and \ref{ass:grad_stochastic} hold and fix $\varepsilon>0$. Let $X^0,\ldots,X^{K-1}$ be the iterates of \algname{Gluon} (Algorithm \ref{algo:stochastic}) run with $\beta^k=0$ and \( t^k_i = \frac{(1-\zeta)\|\nabla _i f_{\xi^k}(X^k)\|_{(i) \star}}{L^0_i + (1+\zeta)L^1_i\|\nabla _i f_{\xi^k}(X^k)\|_{(i) \star}}\).
    Then,
    \begin{enumerate}
        \item In order to reach the precision
        \begin{align*}
            \min_{k=0,\ldots,K-1} \sum_{i=1}^p \mathbb{E} \left[\left\| \nabla _i f(X^{k}) \right\|_{(i) \star}\right] \leq \epsilon, 
        \end{align*}
        it suffices to run the algorithm for
        \begin{align*}
            K = \left\lceil \frac{2 \sum_{i=1}^p L^0_i \Delta^0}{\left(1 - \zeta \right)^2\epsilon^2} + \frac{2(1+\zeta)L^1_{\max} \Delta^0}{\left(1 - \zeta \right)^2\epsilon} \right\rceil
        \end{align*}
        iterations.
        
        \item In order to reach the precision
        \begin{align*}
            \min_{k=0,\ldots,K-1} \sum_{i=1}^p \left[\frac{\frac{1}{L_i^1}}{\frac{1}{p} \sum_{j=1}^p \frac{1}{L_j^1}} \norm{\nabla _i f(X^k)}_{(i) \star}\right] \leq \varepsilon,
        \end{align*}
        it suffices to run the algorithm for
        \begin{align*}
            K = \left\lceil \frac{2 \Delta^0 \sum_{i=1}^p \frac{L^0_i}{(L_i^1)^2}}{\varepsilon^2 (1-\zeta)^2 \parens{\frac{1}{p} \sum_{j=1}^p \frac{1}{L_j^1}}^2} 
            + \frac{2 \Delta^0 (1+\zeta)}{\varepsilon (1-\zeta)^2 \parens{\frac{1}{p} \sum_{j=1}^p \frac{1}{L_j^1}}} \right\rceil
        \end{align*}
        iterations,
    \end{enumerate}
    where $\Delta^0\eqdef f(X^0) - \inf_{X \in \cS} f(X)$ and $L^1_{\max}\eqdef\max_{i=1,\dots,p} L^1_i$.
\end{theorem}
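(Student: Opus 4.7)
The plan is to adapt the proof of \Cref{theorem:1_full} to the stochastic setting by using the relative variance bound in the one-step descent and then leveraging Jensen's inequality to transfer from stochastic to true gradients. Since $\beta^k=0$, the momentum reduces to $M_i^k=\nabla_i f_{\xi^k}(X^k)$, so the LMO step yields $\langle \nabla_i f_{\xi^k}(X^k), X_i^{k+1}-X_i^k\rangle_{(i)} = -t_i^k\|\nabla_i f_{\xi^k}(X^k)\|_{(i)\star}$ almost surely. Starting from \Cref{lemma:generalized-smoothness} applied at $X=X^k$, $Y=X^{k+1}$, I would split
\begin{align*}
\langle \nabla_i f(X^k), X_i^{k+1}-X_i^k\rangle_{(i)} &= \langle \nabla_i f_{\xi^k}(X^k), X_i^{k+1}-X_i^k\rangle_{(i)} \\
&\quad + \langle \nabla_i f(X^k)-\nabla_i f_{\xi^k}(X^k), X_i^{k+1}-X_i^k\rangle_{(i)},
\end{align*}
bound the second piece by Cauchy-Schwarz and \Cref{ass:grad_stochastic} by $\zeta\, t_i^k\|\nabla_i f_{\xi^k}(X^k)\|_{(i)\star}$, and replace $\|\nabla_i f(X^k)\|_{(i)\star}$ in the quadratic coefficient of \Cref{lemma:generalized-smoothness} by its a.s.\ upper bound $(1+\zeta)\|\nabla_i f_{\xi^k}(X^k)\|_{(i)\star}$ (again by \Cref{ass:grad_stochastic}). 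Minimizing the resulting one-step bound in $t_i^k$ will recover exactly the prescribed stepsize and produce the a.s.\ descent
\begin{align*}
f(X^{k+1}) \leq f(X^k) - \sum_{i=1}^p H_i\bigl(\|\nabla_i f_{\xi^k}(X^k)\|_{(i)\star}\bigr), \quad H_i(z) \eqdef \frac{(1-\zeta)^2 z^2}{2\bigl(L^0_i + (1+\zeta)L^1_i z\bigr)}.
\end{align*}

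Next, a direct computation shows that $H_i$ is convex and strictly increasing on $[0,\infty)$ (the map $z\mapsto z^2/(a+bz)$ has nonnegative second derivative for $a,b\geq 0$). Taking conditional expectation given $X^k$, Jensen's inequality applied to $H_i$ gives $\mathbb{E}_{\xi^k}[H_i(\|\nabla_i f_{\xi^k}(X^k)\|_{(i)\star})] \geq H_i(\mathbb{E}_{\xi^k}\|\nabla_i f_{\xi^k}(X^k)\|_{(i)\star})$, while Jensen applied to the norm combined with unbiasedness yields $\mathbb{E}_{\xi^k}\|\nabla_i f_{\xi^k}(X^k)\|_{(i)\star} \geq \|\nabla_i f(X^k)\|_{(i)\star}$; monotonicity of $H_i$ then gives $\mathbb{E}_{\xi^k}[H_i(\|\nabla_i f_{\xi^k}(X^k)\|_{(i)\star})] \geq H_i(\|\nabla_i f(X^k)\|_{(i)\star})$. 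Taking total expectation and telescoping over $k=0,\ldots,K-1$ yields
\begin{align*}
\sum_{k=0}^{K-1}\sum_{i=1}^p \mathbb{E}\bigl[H_i(\|\nabla_i f(X^k)\|_{(i)\star})\bigr] \leq \Delta^0.
\end{align*}

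From here the remaining steps will mirror the two cases of \Cref{theorem:1_full}. For part~(1), I would upper bound $L^1_i\leq L^1_{\max}$ inside every $H_i$ and then apply \Cref{lemma:ineq} with $x_i=1$, $y_i=\|\nabla_i f(X^k)\|_{(i)\star}$, $z_i=\frac{2(L^0_i+(1+\zeta)L^1_{\max}\|\nabla_i f(X^k)\|_{(i)\star})}{(1-\zeta)^2}$, producing a lower bound of the form $\tilde\phi(\sum_i\|\nabla_i f(X^k)\|_{(i)\star})$ with $\tilde\phi(t)\eqdef \frac{(1-\zeta)^2 t^2}{2(\sum_i L^0_i+(1+\zeta)L^1_{\max}\,t)}$. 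Since $\tilde\phi$ is convex and strictly increasing, a second application of Jensen followed by monotonicity will convert the telescoped sum into $K\,\tilde\phi\bigl(\min_k \mathbb{E}\sum_i\|\nabla_i f(X^k)\|_{(i)\star}\bigr)\leq \Delta^0$; solving $\tilde\phi(\epsilon)\geq \Delta^0/K$ will yield the stated iteration count. Part~(2) will proceed identically but with weights $x_i=1/L^1_i$ in \Cref{lemma:ineq}, producing the analogous potential $\tilde\psi(t)\eqdef\frac{(1-\zeta)^2 t^2}{2(\sum_i L^0_i/(L^1_i)^2 + (1+\zeta) t)}$ and the reweighted precision claim after inversion at $t=\varepsilon(\tfrac{1}{p}\sum_j 1/L^1_j)$.

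The main obstacle will be keeping the $\zeta$-dependent constants tight: the $(1-\zeta)$ factor must appear only in the linear term and the $(1+\zeta)$ factor only in the quadratic term of the one-step inequality, so that the optimization in $t_i^k$ produces exactly the prescribed stepsize and the final constants match the theorem. A brute-force substitution of both a.s.\ bounds of \Cref{ass:grad_stochastic} inside $H_i$ (i.e., passing to true gradients by monotonicity rather than Jensen) introduces a spurious $(1+\zeta)^2$ factor in the rate; it is the convexity of $H_i$—and thus the ordering \emph{variance bound first, Jensen second}—that must be exploited in order to reach the precise constants stated.
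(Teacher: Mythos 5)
Your proposal is correct and follows essentially the same route as the paper's proof: the a.s.\ one-step bound, the optimal $t_i^k$, the appeal to convexity and monotonicity of $\phi_i(t)=\tfrac{(1-\zeta)^2 t^2}{2(L^0_i+(1+\zeta)L^1_i t)}$, Jensen combined with unbiasedness to pass from $\|\nabla_i f_{\xi^k}(X^k)\|_{(i)\star}$ to $\|\nabla_i f(X^k)\|_{(i)\star}$, and then Lemma~\ref{lemma:ineq} with the two choices of weights $x_i$. The only (inessential) difference is the order of operations: the paper applies Jensen at the total-expectation level to get $\phi_i\bigl(\mathbb{E}\|\nabla_i f(X^k)\|_{(i)\star}\bigr)$ and then invokes Lemma~\ref{lemma:ineq} on the deterministic quantities, whereas you apply Lemma~\ref{lemma:ineq} pointwise on the random quantities and then a second Jensen on the resulting convex potential---both yield the same telescoped bound and hence the same iteration counts. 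Your closing remark that the ordering ``variance bound for the descent, Jensen for the transfer'' is needed to avoid a spurious $(1+\zeta)^2$ degradation is a correct and useful observation about why monotonicity alone (via the a.s.\ lower bound $\|\nabla_i f_{\xi^k}\|_{(i)\star}\ge\tfrac{1}{1+\zeta}\|\nabla_i f\|_{(i)\star}$) would lose a factor.
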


\begin{proof}
Lemma~\ref{lemma:generalized-smoothness} with $X=X^k$ and $Y=X^{k+1}$ gives
\begin{eqnarray*}
    &&\hspace{-6mm}f(X^{k+1}) \\
    &\leq& f(X^k) + \left\langle \nabla f(X^k), X^{k+1}-X^k\right\rangle + \sum_{i=1}^p\frac{L^0_i + L^1_i\|\nabla _i f(X^k)\|_{(i) \star}}{2}\|X^k_i-X^{k+1}_i\|^2_{(i)} \\
    &=& f(X^k) + \sum_{i=1}^p \left[ \left\langle \nabla _i f(X^k), X_i^{k+1}-X_i^k\right\rangle_{(i)} + \frac{L^0_i + L^1_i\|\nabla _i f(X^k)\|_{(i) \star}}{2}\|X^k_i-X^{k+1}_i\|^2_{(i)}\right]\\
    &=& f(X^k) + \sum_{i=1}^p \Big[ \left\langle \nabla_i f_{\xi^k}(X^k), X_i^{k+1}-X_i^k\right\rangle_{(i)} + \left\langle \nabla _i f(X^k) - \nabla_i f_{\xi^k}(X^k), X_i^{k+1}-X_i^k\right\rangle_{(i)}\Big]\\
    && + \sum_{i=1}^p \frac{L^0_i + L^1_i\|\nabla _i f(X^k)\|_{(i) \star}}{2}\|X^k_i-X^{k+1}_i\|^2_{(i)},
\end{eqnarray*}
and applying the Cauchy-Schwarz inequality, we get
\begin{align*}
    f(X^{k+1}) \leq f(X^k) + \sum_{i=1}^p \Bigg[ &\left\langle \nabla_i f_{\xi^k}(X^k), X_i^{k+1}-X_i^k\right\rangle_{(i)} \\
    &+ \| \nabla _i f(X^k) - \nabla_i f_{\xi^k}(X^k) \|_{(i) \star} \| X_i^{k+1}-X_i^k \|_{(i)}\\
    &+ \frac{L^0_i + L^1_i\|\nabla _i f(X^k)\|_{(i) \star}}{2}\|X^k_i-X^{k+1}_i\|^2_{(i)}\Bigg].
\end{align*}

The update rule (\ref{eq:update_rule_stoch}) and the definition of the dual norm $\|\cdot\|_{(i) \star}$ give
\begin{align*}
\|X^k_i-X^{k+1}_i\|^2_{(i)}\leq \left(t^k_i\right)^2
\end{align*}
and
\begin{align*}
\left\langle \nabla_i f_{\xi^k}(X^k), X_i^{k+1}-X_i^k\right\rangle_{(i)} &= \left\langle \nabla_i f_{\xi^k}(X^k), \mathrm{LMO}_{\mathcal{B}_i^k} \left( \nabla_i f_{\xi^k}(X^k) \right)-X_i^k\right\rangle_{(i)} \\
&= - t_i^k \max_{\|X_i\|_{(i)} \leq 1} \left\langle \nabla_i f_{\xi^k}(X^k), X_i\right\rangle_{(i)} \\
&= - t_i^k \| \nabla_i f_{\xi^k}(X^k) \|_{(i) \star}.
\end{align*}

Consequently, using Assumption~\ref{ass:grad_stochastic}, we obtain
\begin{align*}
    f(X^{k+1}) &\leq f(X^k) + \sum_{i=1}^p \Bigg[ - t_i^k \| \nabla_i f_{\xi^k}(X^k) \|_{(i) \star} + t^k_i \| \nabla _i f(X^k) - \nabla_i f_{\xi^k}(X^k) \|_{(i) \star} \\
    &\qquad\qquad\qquad\qquad+ \frac{L^0_i + L^1_i\|\nabla _i f(X^k)\|_{(i) \star}}{2} \left(t^k_i\right)^2 \Bigg]\\
    &\leq f(X^k) + \sum_{i=1}^p \Bigg[ - (1-\zeta) t_i^k \| \nabla _i f_{\xi^k}(X^k) \|_{(i) \star} \\
    &\qquad\qquad\qquad\qquad+ \frac{L^0_i + (1+\zeta) L^1_i\|\nabla _i f_{\xi^k}(X^k)\|_{(i) \star}}{2} \left(t^k_i\right)^2 \Bigg].
\end{align*} 

Minimizing the right-hand side of the last inequality with respect to \( t^k_i \) yields
\begin{align*}
t^k_i = \frac{(1-\zeta)\|\nabla _i f_{\xi^k}(X^k)\|_{(i) \star}}{L^0_i + (1+\zeta)L^1_i\|\nabla _i f_{\xi^k}(X^k)\|_{(i) \star}}.
\end{align*}

This greedy approach for choosing \( t^k_i \) gives the descent inequality
\begin{align*}
f(X^{k+1}) \leq f(X^k) - \sum_{i=1}^p \frac{(1-\zeta)^2\|\nabla _i f_{\xi^k}(X^k)\|^2_{(i) \star}}{2\left(L^0_i + (1+\zeta)L^1_i\|\nabla _i f_{\xi^k}(X^k)\|_{(i) \star}\right)}.
\end{align*}

Taking expectations, we have
\begin{align}\label{eq:aoisbhr5}
\mathbb{E}[f(X^{k+1})] \leq \mathbb{E}[f(X^k)] - \sum_{i=1}^p \mathbb{E} \left[ \frac{(1-\zeta)^2\|\nabla _i f_{\xi^k}(X^k)\|^2_{(i) \star}}{2\left(L^0_i + (1+\zeta)L^1_i\|\nabla _i f_{\xi^k}(X^k)\|_{(i) \star}\right)}\right].
\end{align}

Now, let us define the function $\phi_i(t):= \frac{(1 - \zeta)^2 t^2}{2(L^0_i+(1+\zeta)L^1_i t)}$. Since $\phi_i(t)$ is convex, Jensen's inequality gives
\begin{align*}
    \mathbb{E}[f(X^k)] - \mathbb{E}[f(X^{k+1})]
    &\geq \sum_{i=1}^p \mathbb{E} \left[ \frac{(1-\zeta)^2\|\nabla _i f_{\xi^k}(X^k)\|^2_{(i) \star}}{2\left(L^0_i + (1+\zeta)L^1_i\|\nabla _i f_{\xi^k}(X^k)\|_{(i) \star}\right)}\right] \\
    &\geq \sum_{i=1}^p \frac{(1-\zeta)^2\left(\mathbb{E} \left[ \|\nabla _i f_{\xi^k}(X^k)\|_{(i) \star}\right]\right)^2}{2\left(L^0_i + (1+\zeta)L^1_i\mathbb{E} \left[ \|\nabla _i f_{\xi^k}(X^k)\|_{(i) \star}\right]\right)}.
\end{align*}
By Jensen's inequality and \Cref{ass:grad_stochastic}
\begin{align*}
    \mathbb{E} \left[ \left\| \nabla_i f(X^k) \right\|_{(i)\star} \right]
    &= \Exp{\norm{\ExpCond{\nabla_i f_{\xi_k}(X^k)}{X^k}}_{(i)\star}} \\
    &\leq \Exp{\ExpCond{\norm{\nabla_i f_{\xi_k}(X^k)}_{(i)\star}}{X^k}} \\
    &= \mathbb{E} \left[ \left\| \nabla_i f_{\xi_k}(X^k) \right\|_{(i)\star} \right],
\end{align*} 
and hence, using the fact that $\phi_i$ is increasing, we get
\begin{align*}
    \mathbb{E}[f(X^k)] - \mathbb{E}[f(X^{k+1})]
    &\geq \sum_{i=1}^p \frac{(1-\zeta)^2\left(\Exp{\norm{\nabla _i f(X^k)}_{(i) \star}}\right)^2}{2\left(L^0_i + (1+\zeta)L^1_i \Exp{\norm{\nabla _i f(X^k)}_{(i) \star}}\right)}.
\end{align*}
Summing the terms gives
\begin{equation}
    \begin{aligned}
    \sum_{k=0}^{K-1} \sum_{i=1}^p \frac{(1-\zeta)^2\left(\Exp{\norm{\nabla _i f(X^k)}_{(i) \star}}\right)^2}{2\left(L^0_i + (1+\zeta)L^1_i \Exp{\norm{\nabla _i f(X^k)}_{(i) \star}}\right)}
    &\leq \sum_{k=0}^{K-1} \parens{\mathbb{E}[f(X^k)] - \mathbb{E}[f(X^{k+1})]} \\
    &= \mathbb{E}[f(X^0)] - \mathbb{E}[f(X^K)] \\
    &\leq f(X^0) - \inf_{X \in \cS} f(X) =: \Delta^0,
    \end{aligned}
    \label{eq:artjnsd32}
\end{equation}
The remaining part of the proof closely follows the proof of Theorem~\ref{theorem:1_full}.
We can proceed in two ways:
\begin{enumerate}
    \item Upper-bounding $L^1_i$ by $L^1_{\max}:=\max_{i=1,\dots,p} L^1_i$ in \eqref{eq:artjnsd32}, we obtain
    \begin{align}\label{eq:sarja}
        \sum_{k=0}^{K-1} \sum_{i=1}^p \frac{(1-\zeta)^2\left(\Exp{\norm{\nabla _i f(X^k)}_{(i) \star}}\right)^2}{2\left(L^0_i + (1+\zeta) L^1_{\max} \Exp{\norm{\nabla _i f(X^k)}_{(i) \star}}\right)}
        \leq \Delta^0.
    \end{align}
    Now, \Cref{lemma:ineq} with $x_i = 1$, $y_i = (1-\zeta) \Exp{\|\nabla _i f(X^{k})\|_{(i) \star}}$ and $z_i = 2 \parens{L^0_i + (1+\zeta) L^1_{\max} \Exp{\norm{\nabla _i f(X^k)}_{(i) \star}}}$ gives
    \begin{align*}
        \phi\parens{\sum_{i=1}^p \Exp{\|\nabla _i f(X^{k})\|_{(i) \star}}}
        &= \frac{\parens{(1-\zeta) \sum_{i=1}^p \Exp{\|\nabla _i f(X^{k})\|_{(i) \star}}}^2}{2 \sum_{i=1}^p \parens{L^0_i + (1+\zeta) L^1_{\max} \Exp{\norm{\nabla _i f(X^k)}_{(i) \star}}}} \\
        &\leq \sum_{i=1}^p \frac{(1-\zeta)^2 \Exp{\|\nabla _i f(X^{k})\|_{(i) \star}}^2}{2 \parens{L^0_i + (1+\zeta) L^1_{\max} \Exp{\norm{\nabla _i f(X^k)}_{(i) \star}}}}
    \end{align*}
    where $\phi(t)\eqdef \frac{(1-\zeta)^2 t^2}{2(\sum_{i=1}^p L^0_i + (1+\zeta) L^1_{\max} t)}$.
    Combining the last inequality with \eqref{eq:sarja} and using the fact that $\phi$ is increasing, we get
    \begin{align*}
        K \phi\parens{\min_{k=0,\ldots,K-1} \sum_{i=1}^p \Exp{\|\nabla _i f(X^{k})\|_{(i) \star}}}
        \leq \sum_{k=0}^{K-1} \phi\parens{\sum_{i=1}^p \Exp{\|\nabla _i f(X^{k})\|_{(i) \star}}}
        \leq \Delta^0.
    \end{align*}
    and hence
    \begin{align*}
        \min_{k=0,\ldots,K-1} \sum_{i=1}^p \Exp{\|\nabla _i f(X^{k})\|_{(i) \star}} \leq \phi^{-1}\left( \frac{\Delta^0}{K} \right),
    \end{align*}
    where $\phi^{-1}$ denotes the inverse function (which exists since $\phi$ is increasing).    
    Therefore, to reach the precision $\min_{k=0,\ldots,K-1} \sum_{i=1}^p \Exp{\|\nabla _i f(X^{k})\|_{(i) \star}} \leq \epsilon$, it suffices to run the algorithm for
    \begin{align*}
        K = \left\lceil \frac{\Delta^0}{\phi(\epsilon)} \right\rceil
        = \left\lceil \frac{2 \Delta^0 \sum_{i=1}^p L^0_i}{(1-\zeta)^2 \epsilon^2} + \frac{2 \Delta^0 (1+\zeta) L^1_{\max}}{(1-\zeta)^2 \epsilon}\right\rceil
    \end{align*}
    iterations.

    \item Alternatively, we can start from inequality \eqref{eq:artjnsd32} and apply \Cref{lemma:ineq} with $x_i = \nicefrac{1}{L_i^1}$, $y_i = (1-\zeta) \Exp{\norm{\nabla _i f(X^k)}_{(i) \star}}$ and $z_i = 2 \parens{L^0_i + (1+\zeta) L^1_i \Exp{\norm{\nabla _i f(X^k)}_{(i) \star}}}$ to obtain
    \begin{eqnarray*}
        \Delta^0 &\geq& \sum_{k=0}^{K-1} \sum_{i=1}^p \frac{(1-\zeta)^2 \Exp{\norm{\nabla _i f(X^k)}_{(i) \star}}^2}{2 \parens{L^0_i + (1+\zeta) L^1_i \Exp{\norm{\nabla _i f(X^k)}_{(i) \star}}}} \\
        &\geq& \sum_{k=0}^{K-1} \frac{\parens{\sum_{i=1}^p \frac{1}{L_i^1} (1-\zeta) \Exp{\norm{\nabla _i f(X^k)}_{(i) \star}}}^2}{2 \sum_{i=1}^p \parens{\frac{L^0_i}{(L_i^1)^2} + (1+\zeta) \frac{1}{L_i^1} \Exp{\norm{\nabla _i f(X^k)}_{(i) \star}}}} \\
        &=& \sum_{t=0}^{K-1} \psi{\parens{\sum_{i=1}^p \frac{1}{L_i^1} \Exp{\norm{\nabla _i f(X^k)}_{(i) \star}}}},
    \end{eqnarray*}
    where $\psi(t) \eqdef \frac{(1-\zeta)^2 t^2}{2 \parens{\sum_{i=1}^p \frac{L^0_i}{(L_i^1)^2} + (1+\zeta) t}}$. Since the function $\psi$ is increasing for $t>0$, $\psi^{-1}$ exists. It follows that
    \begin{eqnarray*}
        \Delta^0 &\geq& \sum_{k=0}^{K-1} \psi{\parens{\sum_{i=1}^p \frac{1}{L_i^1} \Exp{\norm{\nabla _i f(X^k)}_{(i) \star}}}} \\
        &\geq& K \psi{\parens{\min_{k=0,\ldots,K-1} \sum_{i=1}^p \frac{1}{L_i^1} \Exp{\norm{\nabla _i f(X^k)}_{(i) \star}}}},
    \end{eqnarray*}
    and hence
    \begin{eqnarray*}
        \min_{k=0,\ldots,K-1} \sum_{i=1}^p \frac{1}{L_i^1} \Exp{\norm{\nabla _i f(X^k)}_{(i) \star}}
        \leq \psi^{-1}\parens{\frac{\Delta^0}{K}}.
    \end{eqnarray*}
    This in turn means that to reach the precision
    \begin{eqnarray*}
        \min_{k=0,\ldots,K-1} \sum_{i=1}^p \left[\frac{\frac{1}{L_i^1}}{\frac{1}{p} \sum_{j=1}^p \frac{1}{L_j^1}} \norm{\nabla _i f(X^k)}_{(i) \star}\right] \leq \varepsilon,
    \end{eqnarray*}
    it suffices to run the algorithm for
    \begin{align*}
        K &= \left\lceil \frac{\Delta^0}{\psi\parens{\varepsilon \parens{\frac{1}{p} \sum_{j=1}^p \frac{1}{L_j^1}}}} \right\rceil \\
        &= \left\lceil \frac{2 \Delta^0 \sum_{i=1}^p \frac{L^0_i}{(L_i^1)^2}}{(1-\zeta)^2 \varepsilon^2 \parens{\frac{1}{p} \sum_{j=1}^p \frac{1}{L_j^1}}^2} 
        + \frac{2 \Delta^0 (1+\zeta)}{(1-\zeta)^2 \varepsilon \parens{\frac{1}{p} \sum_{j=1}^p \frac{1}{L_j^1}}} \right\rceil
    \end{align*}
    iterations.
\end{enumerate}
\end{proof}

\subsection{Proof of Theorem \ref{theorem:3}}\label{sec:thm_stoch_proof}

We now establish the main result of \Cref{sec:convergence_stochastic}. The guarantees in \Cref{theorem:3} follow from the more general result below.

\begin{theorem}\label{theorem:3_appendix}
    Let Assumptions \ref{ass:generalized-smoothness} and \ref{ass:bounded_var} hold and fix $\varepsilon>0$. Let $X^0,\ldots,X^{K-1}$ be the iterates of \algname{Gluon} (Algorithm \ref{algo:stochastic}) run with \( \beta^k = 1 - (k+1)^{-1/2} \), \( t_i^k = t_i (k+1)^{-3/4} \) for some \( t_i>0\), and $M^0_i=\nabla_i f_{\xi^0}(X^0)$.
    \begin{enumerate}
        \item If $L^1_i=0$, then
        \begin{align*}
            &\min_{k=0,\ldots,K-1} \sum_{i=1}^p t_i \mathbb{E}\left[\| \nabla _i f(X^k) \|_{(i) \star}\right] \\
            &\hspace{1cm}\leq \frac{\Delta^0}{K^{1/4}} + \frac{1}{K^{1/4}} \sum_{i=1}^p \Bigg[ \sigma t_i \left(7+2\sqrt{2 e^2} \log (K)\right) + L^0_i t^2_i \left(\frac{87}{2} + 14\log (K)\right)\Bigg],
        \end{align*}
        \item If $L^1_i\neq0$, then for \( t_i = \frac{1}{12L^1_i} \), we have
        \begin{align*}
            &\min_{k=0,\ldots,K-1} \sum_{i=1}^p \frac{1}{12L^1_i} \mathbb{E}\left[\| \nabla _i f(X^k) \|_{(i) \star}\right] \\
            &\hspace{1cm}\leq \frac{2\Delta^0}{K^{1/4}} + \frac{1}{K^{1/4}} \sum_{i=1}^p \Bigg[ \frac{\sigma}{6L^1_i}\left(7+2\sqrt{2 e^2} \log (K)\right) + \frac{L^0_i}{144(L^1_i)^2}\left(87 +28\log (K)\right)\Bigg],
        \end{align*}
    \end{enumerate}
    where $\Delta^0:=f(X^0) - \inf_{X\in\cS} f(X)$.
\end{theorem}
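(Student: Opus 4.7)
\textbf{Proof proposal for Theorem \ref{theorem:3_appendix}.} The plan is to follow the structure that has become standard for normalized SGD with momentum (as in Cutkosky--Mehta and especially H\"ubler et al., whose technical Lemmas \ref{lemma:tech1}--\ref{lemma:tech2} are the hammers we will use), but lifted to the layer-wise, non-Euclidean, $(L^0,L^1)$-smooth setting of Algorithm \ref{algo:stochastic}. First, I would derive a one-step descent inequality. Applying Lemma \ref{lemma:generalized-smoothness} with $X=X^k$, $Y=X^{k+1}$ and using that the LMO update gives $\|X_i^{k+1}-X_i^k\|_{(i)}\le t_i^k$ and $\langle M_i^k,X_i^{k+1}-X_i^k\rangle_{(i)}=-t_i^k\|M_i^k\|_{(i)\star}$, I would split $\langle\nabla_i f(X^k),X_i^{k+1}-X_i^k\rangle_{(i)}=\langle M_i^k,\cdot\rangle+\langle\nabla_i f(X^k)-M_i^k,\cdot\rangle$ and use Cauchy--Schwarz plus the reverse triangle inequality $\|M_i^k\|_{(i)\star}\ge\|\nabla_i f(X^k)\|_{(i)\star}-\|\varepsilon_i^k\|_{(i)\star}$, where $\varepsilon_i^k\eqdef M_i^k-\nabla_i f(X^k)$. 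This yields, per layer,
\begin{equation*}
f(X^{k+1})\le f(X^k)-t_i^k\|\nabla_i f(X^k)\|_{(i)\star}+2t_i^k\|\varepsilon_i^k\|_{(i)\star}+\tfrac{L^0_i+L^1_i\|\nabla_i f(X^k)\|_{(i)\star}}{2}(t_i^k)^2,
\end{equation*}
summed over $i$. In the $L^1_i=0$ case the quadratic term is benign; in the $L^1_i>0$ case, the choice $t_i=\tfrac{1}{12L^1_i}$ enforces $L^1_i t_i^k\le\tfrac{1}{12}$ for all $k\ge0$, so the $L^1_i(t_i^k)^2\|\nabla_if(X^k)\|/2$ term is absorbed into $-t_i^k\|\nabla_i f(X^k)\|$, leaving a descent coefficient of at least $t_i^k/2$ (which accounts for the extra factor of $2$ in front of $\Delta^0$ in part 2).

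Next, I would control the momentum error $\varepsilon_i^k$. Unrolling the recursion $M_i^k=\beta^k M_i^{k-1}+(1-\beta^k)\nabla_i f_{\xi^k}(X^k)$ and subtracting $\nabla_i f(X^k)$, one obtains
\begin{equation*}
\varepsilon_i^k=\sum_{\tau=0}^{k}\Bigl(\prod_{\kappa=\tau+1}^{k}\beta^\kappa\Bigr)(1-\beta^\tau)\bigl(\nabla_i f_{\xi^\tau}(X^\tau)-\nabla_i f(X^\tau)\bigr)+\sum_{\tau=1}^{k}\Bigl(\prod_{\kappa=\tau}^{k}\beta^\kappa\Bigr)\bigl(\nabla_i f(X^{\tau-1})-\nabla_i f(X^\tau)\bigr).
\end{equation*}
The first (noise) sum is a martingale increment sequence; using Jensen's inequality, conditional independence, and \Cref{ass:bounded_var} (combined with finite-dimensional norm equivalence to handle the non-Euclidean dual norm), its expected $(i)\star$-norm is bounded by $C\sigma\sqrt{\sum_{\tau}(1-\beta^\tau)^2\prod_{\kappa=\tau+1}^k(\beta^\kappa)^2}$. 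The second (drift) sum is handled by \Cref{ass:generalized-smoothness}: $\|\nabla_i f(X^{\tau-1})-\nabla_i f(X^{\tau})\|_{(i)\star}\le(L^0_i+L^1_i\|\nabla_i f(X^{\tau-1})\|_{(i)\star})t_i^{\tau-1}$, contributing $L^0_i\sum_\tau t_i^{\tau-1}\prod\beta^\kappa$ plus an $L^1_i$-weighted gradient-norm sum that, again via the choice $t_i=1/(12L^1_i)$, will be absorbable into the descent.

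With these two ingredients in hand, I would telescope the descent inequality from $k=0$ to $K-1$, take total expectation, and move the error terms to the right, yielding an inequality of the form
\begin{equation*}
\sum_{k=0}^{K-1}\sum_{i=1}^p c_i\, t_i^k\,\mathbb{E}\|\nabla_i f(X^k)\|_{(i)\star}\le\Delta^0+\sum_{i=1}^p\Bigl(2\sum_{k=0}^{K-1}t_i^k\,\mathbb{E}\|\varepsilon_i^k\|_{(i)\star}+\tfrac{L^0_i}{2}\sum_{k=0}^{K-1}(t_i^k)^2\Bigr),
\end{equation*}
with $c_i=1$ in case 1 and $c_i\ge 1/2$ in case 2. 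Inserting the noise and drift bounds and invoking Lemma \ref{lemma:tech2}(i) on $\sum_k t_i^k\sqrt{\sum_\tau(1-\beta^\tau)^2\prod(\beta^\kappa)^2}$ and Lemma \ref{lemma:tech2}(ii) on $\sum_k t_i^k\sum_\tau t_i^\tau\prod\beta^\kappa$ collapses the right-hand side into exactly the logarithmic expressions in the theorem statement. The simple estimate $\sum_{k=0}^{K-1}t_i^k\ge t_i\int_0^K(s+1)^{-3/4}ds\gtrsim t_i K^{1/4}$ converts the running-sum bound into a $\min_k$ bound with the claimed $K^{-1/4}$ rate, and the initialization $M_i^0=\nabla_i f_{\xi^0}(X^0)$ kills the boundary term at $k=0$ so the noise-sum estimate starts cleanly.

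The main obstacle I expect is the coupling between the descent term and the drift part of the error in the $L^1_i>0$ regime: the drift term contains $L^1_i\|\nabla_i f(X^{\tau-1})\|_{(i)\star}t_i^{\tau-1}$, which after multiplication by $t_i^k$ and summation gives a double sum in the gradient norm that must be carefully reabsorbed into the left-hand side alongside the $L^1_i(t_i^k)^2\|\nabla_if(X^k)\|/2$ term from the smoothness inequality; the specific constant $\tfrac{1}{12}$ in $t_i=\tfrac{1}{12L^1_i}$ is calibrated exactly so that after applying Lemma \ref{lemma:tech2}(ii) to the coupling, the total coefficient of $\mathbb{E}\|\nabla_i f(X^k)\|$ remains at least $t_i^k/2$, giving the factor $\tfrac{1}{12L^1_i}$ on the left of the bound. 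A secondary technicality is justifying the noise variance bound in the $(i)\star$-norm; this is handled (up to constants absorbed in the $\lesssim$) by finite-dimensional norm equivalence, exactly as flagged in the paper's remark following \Cref{ass:bounded_var}.
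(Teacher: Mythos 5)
Your proposal follows the same skeleton as the paper's argument: the one-step descent via Lemma~\ref{lemma:generalized-smoothness} and the LMO property, the decomposition of $M_i^k-\nabla_i f(X^k)$ into a martingale noise sum and a smoothness-controlled drift sum by unrolling the momentum recursion, telescoping, invoking Lemma~\ref{lemma:tech2}(i) for the noise part and Lemma~\ref{lemma:tech2}(ii) for the $L^0_i$ drift part, the $\sum_k t_i^k \gtrsim t_i K^{1/4}$ conversion to a $\min$-bound, and the calibration $t_i=\tfrac{1}{12 L^1_i}$ to absorb the $L^1_i$-weighted terms with a factor-of-two loss in front of $\Delta^0$. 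You also correctly flag the non-Euclidean subtlety in expanding the second moment of the martingale sum, which the paper likewise handles only via the norm-equivalence remark after Assumption~\ref{ass:bounded_var}.

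One concrete detail would fail if executed as written: you propose to apply Lemma~\ref{lemma:tech2}(ii) to the coupling term
$I_3 = 2L^1_i\sum_{k=0}^{K-1}t_i^k\sum_{\tau=1}^k \bigl(\prod_{\kappa=\tau}^k\beta^\kappa\bigr)\,t_i^\tau\,\mathbb{E}\|\nabla_i f(X^\tau)\|_{(i)\star}$, but Lemma~\ref{lemma:tech2}(ii) collapses the entire double sum $\sum_k t^k\sum_\tau t^\tau\prod\beta^\kappa$ into a constant $7t^2(3+\log K)$, so it cannot retain the gradient-norm weights $\mathbb{E}\|\nabla_i f(X^\tau)\|_{(i)\star}$ that must survive inside a single outer sum in order to be reabsorbed into the left-hand side. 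The paper instead swaps the order of summation in $I_3$ and applies Lemma~\ref{lemma:tech1} (with $a=\tau+1$, $b=K$, $p=3/4$, $q=1/2$) to the \emph{inner} sum $\sum_{k=\tau}^{K-1}(k+1)^{-3/4}\prod_{\kappa=\tau}^k\beta^\kappa$, producing a bounded factor $e^{2(\sqrt{2}-1)}\tau^{-1/4}\le e^{2(\sqrt{2}-1)}$ and hence $I_3 \le 2e^{2(\sqrt{2}-1)} L^1_i t_i \sum_\tau t_i^\tau\,\mathbb{E}\|\nabla_i f(X^\tau)\|_{(i)\star}$, exactly the form needed for the absorption step; together with the $\tfrac{L^1_i}{2}(t_i^k)^2\|\nabla_i f(X^k)\|$ term the absorbed coefficient is $C L^1_i t_i$ with $C = 2e^{2(\sqrt{2}-1)}+\tfrac12 \le 5.1$, which the choice $t_i=\tfrac{1}{12L^1_i}$ keeps strictly below $\tfrac12$. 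Once this substitution is made, your plan agrees with the paper's proof.
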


\begin{proof}
We again start with the result in Lemma~\ref{lemma:generalized-smoothness} with $X=X^k$ and $Y=X^{k+1}$, obtaining
\begin{eqnarray*}
    f(X^{k+1}) &\leq& f(X^k) + \left\langle \nabla f(X^k), X^{k+1}-X^k\right\rangle + \sum_{i=1}^p \frac{L^0_i + L^1_i\|\nabla _i f(X^k)\|_{(i) \star}}{2} \|X^k_i-X^{k+1}_i\|^2_{(i)}\\
    &=& f(X^k) + \sum_{i=1}^p \left[ \left\langle \nabla _i f(X^k), X_i^{k+1}-X_i^k\right\rangle_{(i)} + \frac{L^0_i + L^1_i\|\nabla _i f(X^k)\|_{(i) \star}}{2} \|X^k_i-X^{k+1}_i\|^2_{(i)}\right]\\
    &=& f(X^k) + \sum_{i=1}^p \Bigg[ \left\langle M_i^k, X_i^{k+1}-X_i^k\right\rangle_{(i)} + \left\langle \nabla _i f(X^k) - M_i^k, X_i^{k+1}-X_i^k\right\rangle_{(i)}\Bigg]\\
    &&+ \sum_{i=1}^p \frac{L^0_i + L^1_i\|\nabla _i f(X^k)\|_{(i) \star}}{2}\|X^k_i-X^{k+1}_i\|^2_{(i)}.
\end{eqnarray*}

Applying the Cauchy-Schwarz inequality, we have
\begin{eqnarray*}
    f(X^{k+1}) &\leq& f(X^k) + \sum_{i=1}^p \Bigg[ \left\langle M_i^k, X_i^{k+1}-X_i^k\right\rangle_{(i)} + \| \nabla _i f(X^k) - M_i^k \|_{(i) \star} \| X_i^{k+1}-X_i^k \|_{(i)}\Bigg]\\
    &&+\sum_{i=1}^p \frac{L^0_i + L^1_i\|\nabla _i f(X^k)\|_{(i) \star}}{2}\|X^k_i-X^{k+1}_i\|^2_{(i)}.
\end{eqnarray*}
Now, the update rule (\ref{eq:update_rule_stoch}) and the definition of the dual norm $\|\cdot\|_{(i) \star}$ give
\begin{align*}
\|X^k_i-X^{k+1}_i\|^2_{(i)}\leq \left(t^k_i\right)^2
\end{align*}
and
\begin{align*}
\left\langle M_i^k, X_i^{k+1}-X_i^k\right\rangle = \left\langle M_i^k, \mathrm{LMO}_{\mathcal{B}_i^k} \left( M_i^k \right)-X_i^k\right\rangle = - t_i^k \max_{\|X_i\|_{(i)} \leq 1} \left\langle M_i^k, X_i\right\rangle = - t_i^k \| M_i^k \|_{(i) \star}.
\end{align*}

Consequently, 
\begin{eqnarray*}
&&\hspace{-6mm}f(X^{k+1}) \\
&\leq& f(X^k) + \sum_{i=1}^p \left[ - t_i^k \| M_i^k \|_{(i) \star} + t^k_i \| \nabla _i f(X^k) - M_i^k \|_{(i) \star}  + \frac{L^0_i + L^1_i\|\nabla _i f(X^k)\|_{(i) \star}}{2} \left(t^k_i\right)^2 \right]\\
&=& f(X^k) + \sum_{i=1}^p \Bigg[ - t_i^k \| M_i^k - \nabla _i f(X^k) + \nabla _i f(X^k) \|_{(i) \star} + t^k_i \|M_i^k - \nabla _i f(X^k)\|_{(i) \star} \Bigg]\\
&& + \sum_{i=1}^p \frac{L^0_i + L^1_i\|\nabla _i f(X^k)\|_{(i) \star}}{2} \left(t^k_i\right)^2 \\
&\leq& f(X^k) + \sum_{i=1}^p \left[ - t_i^k \| \nabla _i f(X^k) \|_{(i) \star} + 2t^k_i \|M_i^k - \nabla _i f(X^k)\|_{(i) \star}\right] \\
&& + \sum_{i=1}^p \frac{L^0_i + L^1_i\|\nabla _i f(X^k)\|_{(i) \star}}{2} \left(t^k_i\right)^2 .
\end{eqnarray*}

Taking expectations, we obtain
\begin{align*}
\mathbb{E}[f(X^{k+1})] \leq \mathbb{E}[f(X^k)] + \sum_{i=1}^p \Bigg[ - t_i^k \mathbb{E}[\| \nabla _i f(X^k) \|_{(i) \star}]& + 2t^k_i \mathbb{E}\left[\left\|M_i^k - \nabla_i f(X^k)\right\|_{(i) \star}\right]\\ &+ \frac{L^0_i + L^1_i\mathbb{E}[\|\nabla _i f(X^k)\|_{(i) \star}]}{2} \left(t^k_i\right)^2 \Bigg].
\end{align*}

Telescoping the last inequality gives
\begin{align}\label{eq:main_eq_momentum}
\sum_{i=1}^p \sum_{k=0}^{K-1} t_i^k \mathbb{E}[\| \nabla _i f(X^k) \|_{(i) \star}] \leq \Delta^0 + \sum_{i=1}^p \Bigg[ &2 \sum_{k=0}^{K-1} t^k_i \mathbb{E}\left[\left\|M_i^k - \nabla_i f(X^k)\right\|_{(i) \star}\right]\\ &+ \sum_{k=0}^{K-1} \frac{L^0_i}{2}\left(t^k_i\right)^2 
+ \sum_{k=0}^{K-1} \frac{L^1_i}{2} \mathbb{E}[\|\nabla _i f(X^k)\|_{(i) \star}]\left(t^k_i\right)^2 \Bigg],\notag
\end{align}
where $\Delta^0:=f(X^0) - \inf_{X \in \cS} f(X)$.

Now, inspired by the analysis in \citet{hubler2024parameter}, we introduce the following notation: $\mu_i^k:=M_i^k - \nabla_i f(X^k)$, $\gamma_i^k:=\nabla_i f_{\xi^k}(X^k) - \nabla_i f(X^k)$, $\alpha^k=1-\beta^k$, $\beta^{a: b}:=\prod_{k=a}^b \beta^k$ and $S_i^k:=\nabla_i f(X^{k-1}) - \nabla_i f(X^{k})$. Then, we can rewrite the algorithm's momentum update rule as
\begin{align*}
M_i^k &= \beta^k M^{k-1}_i+(1-\beta^k)\nabla_i f_{\xi^k}(X^k)\\
&=\beta^k \left(\mu_i^{k-1}+\nabla_i f(X^{k-1})\right)+(1-\beta^k)\left(\gamma_i^k+\nabla_i f(X^k)\right) \\
&= \nabla_i f\left(X^k\right)+\alpha^k \gamma_i^k+\beta^k S_i^k+\beta^k \mu_i^{k-1}.
\end{align*}

This yields
\begin{align*}
\mu_i^k &=M_i^k-\nabla_i f\left(X^k\right) \\
&=\alpha^k \gamma_i^k+\beta^k S_i^k+\beta^k \mu_i^{k-1} \\
&=\sum_{\tau=1}^{k} \beta^{(\tau+1): k} \alpha^{\tau} \gamma_i^{\tau}+\sum_{\tau=1}^k \beta^{\tau: k} S_i^\tau+\beta^{1: k} \mu^0_i\\
&=\sum_{\tau=0}^{k} \beta^{(\tau+1): k} \alpha^{\tau} \gamma_i^{\tau}+\sum_{\tau=1}^k \beta^{\tau: k} S_i^\tau,
\end{align*}
where the last line follows from the fact that $M^0_i=\nabla_i f_{\xi^0}(X^0)$ and $\beta^0=0$.
Thus,
\begin{align*}
\mathbb{E}\left[\left\|M_i^k - \nabla_i f(X^k)\right\|_{(i) \star}\right]
&=\mathbb{E}\left[\left\|\mu_i^k\right\|_{(i) \star}\right] \\
&\leq \mathbb{E}\left[\left\|\sum_{\tau=0}^{k} \beta^{(\tau+1): k} \alpha^{\tau} \gamma_i^{\tau}\right\|_{(i) \star}\right]+\sum_{\tau=1}^k \beta^{\tau: k} \mathbb{E}\left[\left\|S_i^\tau\right\|_{(i) \star}\right]\\
&= \sqrt{\sum_{\tau=0}^{k} \left(\beta^{(\tau+1): k} \alpha^{\tau}\right)^2 \mathbb{E}\left[\left\|\gamma_i^{\tau}\right\|_{(i) \star}^2\right]}+\sum_{\tau=1}^k \beta^{\tau: k} \mathbb{E}\left[\left\|S_i^\tau\right\|_{(i) \star}\right],
\end{align*}
where in the last equality we used the fact that for all \( q < l \)
\begin{align*}
    \mathbb{E} \left[ ({\gamma^l_i})^\top \gamma^q_i \right] &= \mathbb{E} \left[ \mathbb{E} \left[ ({\gamma^l_i})^\top \gamma^q_i \mid X^l_i \right] \right] = \mathbb{E} \left[ \mathbb{E} \left[ \gamma^l_i \mid X^l_i \right]^\top \gamma^q_i \right] \\
    &= \mathbb{E} \left[ \left( \mathbb{E} \left[ \nabla_i f_{\xi^l}(X^l) - \nabla_i f(X^l) \mid X^l_i \right] \right)^\top \gamma^q_i \right] = 0,
\end{align*}
Using Assumptions~\ref{ass:generalized-smoothness} and \ref{ass:bounded_var}, we get
\begin{align*}
\mathbb{E}\left[\left\|\gamma_i^{\tau}\right\|_{(i) \star}^2\right] = \mathbb{E} \Bigg[ \underbrace{\mathbb{E}\left[\left\|\gamma_i^{\tau}\right\|_{(i) \star}^2 \mid X^{\tau}_i \right]}_{\leq \sigma^2}\Bigg] \leq \sigma^2
\end{align*}
and
\begin{align*}
\left\|S_i^\tau\right\|_{(i) \star} \leq \left( L^0_i + L^1_i \| \nabla _i f(X^{\tau}) \|_{(i) \star} \right) \|X^{\tau+1}_i - X^{\tau}_i \|_{(i)} \leq \left( L^0_i + L^1_i \| \nabla _i f(X^{\tau}) \|_{(i) \star} \right) t^{\tau}_i.
\end{align*}

Therefore,
\begin{eqnarray*}
\mathbb{E}\left[\left\|M_i^k - \nabla_i f(X^k)\right\|_{(i) \star}\right] &\leq& \sigma \sqrt{\sum_{\tau=0}^{k} \left(\beta^{(\tau+1): k} \alpha^{\tau} \right)^2} + L^0_i \sum_{\tau=1}^k \beta^{\tau: k} t^{\tau}_i \\
&&+ L^1_i \sum_{\tau=1}^k \beta^{\tau: k} t^{\tau}_i \mathbb{E} \left[\| \nabla _i f(X^{\tau}) \|_{(i) \star}\right].
\end{eqnarray*}

Combining the last inequality with (\ref{eq:main_eq_momentum}) gives
\begin{align}\label{eq:main_eq_momentum_2}
\sum_{i=1}^p \sum_{k=0}^{K-1} t_i^k \mathbb{E}[\| \nabla _i f(X^k) \|_{(i) \star}] \leq \Delta^0 +\sum_{i=1}^p \Bigg[ & \underbrace{2 \sigma \sum_{k=0}^{K-1} t^k_i \sqrt{\sum_{\tau=0}^{k} \left(\beta^{(\tau+1): k} \alpha^{\tau}\right)^2}}_{=:I_1} + \underbrace{2 L^0_i \sum_{k=0}^{K-1} t^k_i \sum_{\tau=1}^k \beta^{\tau: k} t^{\tau}_i}_{=:I_2}\notag\\
&+ \underbrace{2 L^1_i \sum_{k=0}^{K-1} t^k_i \sum_{\tau=1}^k \beta^{\tau: k} t^{\tau}_i \mathbb{E} \left[\| \nabla _i f(X^{\tau}) \|_{(i) \star}\right]}_{=:I_3}\notag\\
&+ \underbrace{\frac{L^0_i}{2} \sum_{k=0}^{K-1} \left(t^k_i\right)^2}_{=:I_4} + \frac{L^1_i}{2} \sum_{k=0}^{K-1} \left(t^k_i\right)^2 \mathbb{E}\left[\|\nabla _i f(X^k)\|_{(i) \star}\right]\Bigg].
\end{align}

Let us now upper-bound each term \( I_i \), \( i = 1, 2, 3, 4\).

\( I_1 \): using Lemma~\ref{lemma:tech2}, we obtain
\begin{align*}
I_1 \leq \sigma t_i\left(7+2\sqrt{2 e^2} \log (K)\right).
\end{align*}

\( I_2 \): using Lemma~\ref{lemma:tech2}, we obtain
\begin{align*}
I_2 \leq 14 L^0_i t_i^2\left(3 +\log (K)\right).
\end{align*}

\( I_3 \): rearranging the sums and using Lemma~\ref{lemma:tech1} with $a=\tau+1$, $b=K$, $p=3/4$ and $q=1/2$, we have
\begin{align*}
I_3 &= 2 L^1_i \sum_{k=0}^{K-1} t^k_i \sum_{\tau=1}^k \beta^{\tau: k} t^{\tau}_i \mathbb{E} \left[\| \nabla _i f(X^{\tau}) \|_{(i) \star}\right] \\
&= 2 L^1_i \sum_{\tau=1}^{K-1} t^\tau_i \left(\sum_{k=\tau}^{K-1} t_i^k \beta^{\tau: k}\right) \mathbb{E} \left[\| \nabla _i f(X^{\tau}) \|_{(i) \star}\right]\\ 
&= 2 L^1_i \sum_{\tau=1}^{K-1} t^\tau_i t_i \left(\sum_{k=\tau}^{K-1} (k+1)^{-3/4} \beta^{\tau: k}\right) \mathbb{E} \left[\| \nabla _i f(X^{\tau}) \|_{(i) \star}\right]\\
&\leq 2 L^1_i \sum_{\tau=1}^{K-1} t^\tau_i t_i \tau^{-1/4} \underbrace{e^{2\left((\tau+1)^{1/2}-\tau^{1/2}\right)}}_{\leq e^{2(\sqrt{2}-1)} \ \text{for} \ \tau \geq 1} \mathbb{E} \left[\| \nabla _i f(X^{\tau}) \|_{(i) \star}\right]\\
&\leq 2 e^{2(\sqrt{2}-1)} L^1_i \sum_{\tau=1}^{K-1} t^\tau_i t_i \tau^{-1/4} \mathbb{E} \left[\| \nabla _i f(X^{\tau}) \|_{(i) \star}\right]\\
&\leq 2 e^{2(\sqrt{2}-1)} L^1_i \sum_{k=0}^{K-1} t^k_i t_i \mathbb{E} \left[\| \nabla _i f(X^k) \|_{(i) \star}\right].
\end{align*}

\( I_4 \):
\begin{align*}
I_4 &= \frac{L^0_i}{2} \sum_{k=0}^{K-1} \left(t^k_i\right)^2 \leq \frac{L^0_i}{2} \sum_{k=0}^{\infty} \left(t^k_i\right)^2 = \frac{L^0_i}{2} t_i^2 \sum_{k=0}^{\infty} (1+k)^{-3/2} \\
&\leq \frac{L^0_i}{2} t_i^2 \left( 1 + \int_1^{\infty} \frac{1}{z^{3/2}} \, dz \right) = \frac{3L^0_i}{2} t_i^2.
\end{align*}

Combining the upper-bounds for \( I_i \), \( i = 1, 2, 3, 4\) with (\ref{eq:main_eq_momentum_2}) gives
\begin{align*}
\sum_{i=1}^p \sum_{k=0}^{K-1} t_i^k \mathbb{E}[\| \nabla _i f(X^k) \|_{(i) \star}] \leq \Delta^0 +\sum_{i=1}^p \Bigg[ & \sigma t_i\left(7+2\sqrt{2 e^2} \log (K)\right) + 14 L^0_i t_i^2\left(3 +\log (K)\right)\\
&+ 2 e^{2(\sqrt{2}-1)} L^1_i \sum_{k=0}^{K-1} t^k_i t_i \mathbb{E} \left[\| \nabla _i f(X^k) \|_{(i) \star}\right]\\
&+ \frac{3L^0_i}{2} t_i^2 + \frac{L^1_i}{2} \sum_{k=0}^{K-1} \left(t^k_i\right)^2 \mathbb{E}[\|\nabla _i f(X^k)\|_{(i) \star}]\Bigg].
\end{align*}

Using the fact that \( t^k_i = t_i (1+k)^{-3/4} \leq t_i \), and denoting \( C := 2e^{2(\sqrt{2} - 1)} + \frac{1}{2} \leq 5.1\), we get
\begin{align*}
\sum_{i=1}^p \sum_{k=0}^{K-1} t_i^k \mathbb{E}[\| \nabla _i f(X^k) \|_{(i) \star}] \leq \Delta^0 +\sum_{i=1}^p \Bigg[ & \sigma t_i\left(7+2\sqrt{2 e^2} \log (K)\right) + 14 L^0_i t_i^2\left(\frac{87}{28} +\log (K)\right)\\
&+ C L^1_i t_i \sum_{k=0}^{K-1} t^k_i \mathbb{E} \left[\| \nabla _i f(X^k) \|_{(i) \star}\right]\Bigg].
\end{align*}

Now, let us consider two options: (1) $L^1_i = 0$ for all $i \in \{1, \dots, p\}$ and (2) $L^1_i \neq 0$, for all $i \in \{1, \dots, p\}$.

\paragraph{Case 1: $L^1_i = 0$, $i = 1, \dots, p$.}

In this case,
\begin{align*}
\sum_{i=1}^p \sum_{k=0}^{K-1} t_i^k \mathbb{E}[\| \nabla _i f(X^k) \|_{(i) \star}] \leq \Delta^0 +\sum_{i=1}^p \Bigg[ & \sigma t_i\left(7+2\sqrt{2 e^2} \log (K)\right) + 14 L^0_i t_i^2\left(\frac{87}{28} +\log (K)\right)\Bigg],
\end{align*}
and therefore,
\begin{eqnarray*}
&&\hspace{-2cm}\min_{k=0,\ldots,K-1} \sum_{i=1}^p t_i \mathbb{E}[\| \nabla _i f(X^k) \|_{(i) \star}] \\
&\leq& \frac{1}{K} \sum_{k=0}^{K-1} \sum_{i=1}^p t_i \mathbb{E}[\| \nabla _i f(X^k) \|_{(i) \star}]\\
&\leq& \frac{1}{K^{1/4}} \sum_{k=0}^{K-1} \sum_{i=1}^p t_i (1+k)^{-3/4} \mathbb{E}[\| \nabla _i f(X^k) \|_{(i) \star}]\\
&=& \frac{1}{K^{1/4}} \sum_{k=0}^{K-1} \sum_{i=1}^p t^k_i \mathbb{E}[\| \nabla _i f(X^k) \|_{(i) \star}]\\
&\leq& \frac{\Delta^0}{K^{1/4}} + \frac{1}{K^{1/4}} \sum_{i=1}^p \Bigg[ \sigma t_i \left(7+2\sqrt{2 e^2} \log (K)\right) + L^0_i t^2_i \left(\frac{87}{2} + 14\log (K)\right)\Bigg].
\end{eqnarray*}

\paragraph{Case 2:  $L^1_i \neq 0$, $i = 1, \dots, p$.}

Let us choose $t_i=\frac{1}{12L^1_i}$. Then
\begin{align*}
\sum_{i=1}^p \sum_{k=0}^{K-1} t_i^k \mathbb{E}[\| \nabla _i f(X^k) \|_{(i) \star}] \leq 2\Delta^0 +\sum_{i=1}^p \Bigg[ 2\sigma t_i\left(7+2\sqrt{2 e^2} \log (K)\right) + L^0_i t_i^2\left(87 +28\log (K)\right)\Bigg],
\end{align*}
and hence 
\begin{eqnarray*}
&&\hspace{-1.1cm}\min_{k=0,\ldots,K-1} \sum_{i=1}^p \frac{1}{12L^1_i} \mathbb{E}[\| \nabla _i f(X^k) \|_{(i) \star}] \\
&\leq& \frac{1}{K} \sum_{k=0}^{K-1} \sum_{i=1}^p t_i \mathbb{E}[\| \nabla _i f(X^k) \|_{(i) \star}]\\
&\leq& \frac{1}{K^{1/4}} \sum_{k=0}^{K-1} \sum_{i=1}^p t_i (1+k)^{-3/4} \mathbb{E}[\| \nabla _i f(X^k) \|_{(i) \star}]\\
&=& \frac{1}{K^{1/4}} \sum_{i=1}^p \sum_{k=0}^{K-1} t_i^k \mathbb{E}[\| \nabla _i f(X^k) \|_{(i) \star}]\\
&\leq& \frac{2\Delta^0}{K^{1/4}} + \frac{1}{K^{1/4}} \sum_{i=1}^p \Bigg[ \frac{\sigma}{6L^1_i}\left(7+2\sqrt{2 e^2} \log (K)\right) + \frac{L^0_i}{144(L^1_i)^2}\left(87 +28\log (K)\right)\Bigg].
\end{eqnarray*}

\end{proof}

\newpage

\section{Additional experimental results and details} \label{appendix:more_exp}

\subsection{Experimental details}\label{sec:exp_details}

All experiments for the \texttt{NanoGPT} model are conducted using PyTorch\footnote{PyTorch Documentation. Available at: \url{https://pytorch.org/docs/stable/index.html}} with Distributed Data Parallel (DDP)\footnote{Distributed Data Parallel (DDP) in PyTorch. Available at: \url{https://pytorch.org/docs/stable/notes/ddp.html}} across 4 NVIDIA A100 GPUs (40GB each). For the \texttt{CNN} experiments, training is performed on a single NVIDIA A100 GPU (40GB). The training and evaluation pipelines are implemented using open-source codebases \citep{jordan2024cifar10, jordan2024moddednanogpt, code_scion}, with all modifications clearly documented and properly referenced where applicable.

For LMO-based methods, we compute inexact LMOs using the Newton–Schulz iteration when an analytical solution is unavailable (e.g., for SVD-type updates), following the approach proposed by \citet{jordan2024muon}. This method provides a computationally efficient approximation of the required orthogonalization while preserving the convergence behavior of the overall algorithm.

\subsection{Fitting $L^0_i$ and $L^1_i$}\label{sec:nanogpt_l0l1}
To minimize the Euclidean error between the true value \( \hat{L}_i[k] \) and its approximation \( \hat{L}_i^{\text{approx}}[k] \), while penalizing underestimation, we incorporate a hinge-like penalty term. Specifically, we fit $L^0_i$ and $L^1_i$ by minimizing the loss function
\begin{align}\label{eq:l0l1_loss}
    \mathcal{L}_i\parens{L^0_i, L^1_i} := \sum_{k=0}^{K-1} \left( \hat{L}_i[k] - \hat{L}_i^{\text{approx}}[k] \right)^2 + \lambda \sum_{k=0}^{K-1} \max\left(0, \hat{L}_i[k] - \hat{L}_i^{\text{approx}}[k]\right)^2.
\end{align}

The first term of $\mathcal{L}_i$ captures the standard Euclidean (squared) error, while the second term introduces an additional penalty proportional to the amount of underestimation (i.e., when \( \hat{L}_i[k] > \hat{L}_i^{\text{approx}}[k] \)). The hyperparameter \( \lambda \geq 0 \) controls the strength of this penalty.

\subsection{Training NanoGPT on FineWeb.}

In this section, we present additional results and experimental details for the experiment described in the main text, which involves training a \texttt{NanoGPT} model on the \texttt{FineWeb} dataset using the \algname{unScion} optimizer.

\subsubsection{Empirical validation of \Cref{ass:generalized-smoothness}}

We begin by presenting additional results for the experiment described in \Cref{sec:nanogpt-fineweb}, aimed at empirically validating \Cref{ass:generalized-smoothness}. We plot the estimated \emph{trajectory smoothness}
\begin{align*}
    \hat{L}_i[k] \eqdef \frac{\|\nabla_i f_{\xi^{k+1}} (X^{k+1}) - \nabla_i f_{\xi^k} (X^{k}) \|_{(i)\star}}{\|X_i^{k+1} - X_i^{k}\|_{(i)}}
\end{align*}
and its approximation
$$\hat{L}_i^{\text{approx}}[k] \eqdef L_i^0 + L_i^1 \|\nabla_i f_{\xi^{k+1}}(X^{k+1})\|_{(i) \star}$$
as functions of the iteration index $k$, where $L^0_i, L^1_i\geq0$ are fitted using the procedure described in \Cref{sec:nanogpt_l0l1}.

Figures~\ref{fig:1}, \ref{fig:2}, and \ref{fig:3_appndx} show results for parameter groups from the embedding layer and from the 4th and~8th transformer blocks. Similar patterns are observed across all layers. In each case, we see a strong agreement between $\hat{L}_i[k]$ and $\hat{L}_i^{\text{approx}}[k]$, suggesting that \Cref{ass:generalized-smoothness} holds approximately along the optimization trajectory.
\begin{figure}[H]
    \centering
    \begin{subfigure}{0.32\textwidth}
        \includegraphics[width=\textwidth]{plots/nanogpt/main_exp/plot7_0.pdf}
    \end{subfigure}
    \begin{subfigure}{0.32\textwidth}
        \includegraphics[width=\textwidth]{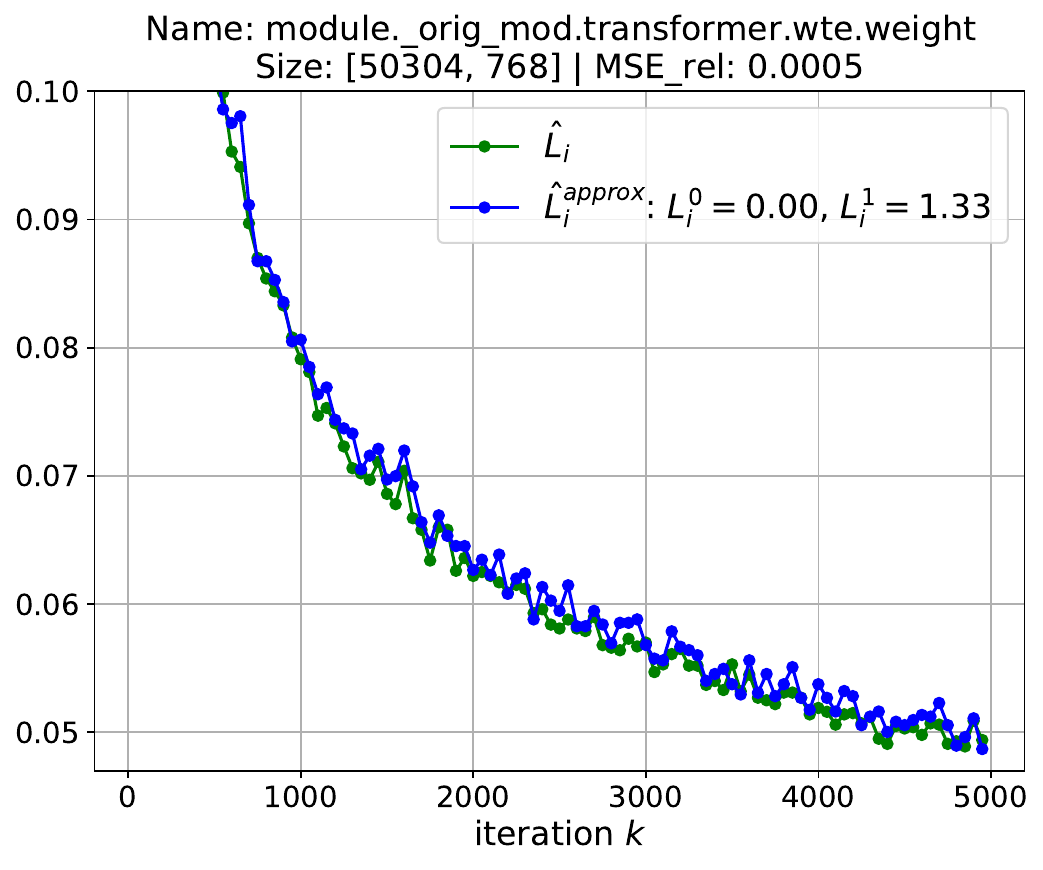}
    \end{subfigure}
    \caption{\small Validation of layer-wise $(L^0, L^1)$-smoothness for the group of parameters from the embedding layer of \texttt{NanoGPT-124M} along \algname{unScion} training trajectories. The group norm is \(\|\cdot\|_{(p)} = n_p \|\cdot\|_{1 \to \infty} \), with fitted values $L_p^0 \approx 0$, $L_p^1 \approx 1.3$. The same plot is shown twice with  different $y$-axis limits.}
    \label{fig:1}
\end{figure}
\begin{figure}[H]
    \centering
    \begin{subfigure}{0.32\textwidth}
        \includegraphics[width=\textwidth]{plots/nanogpt/main_exp/plot1.pdf}
    \end{subfigure}
    \hfill
    \begin{subfigure}{0.32\textwidth}
        \includegraphics[width=\textwidth]{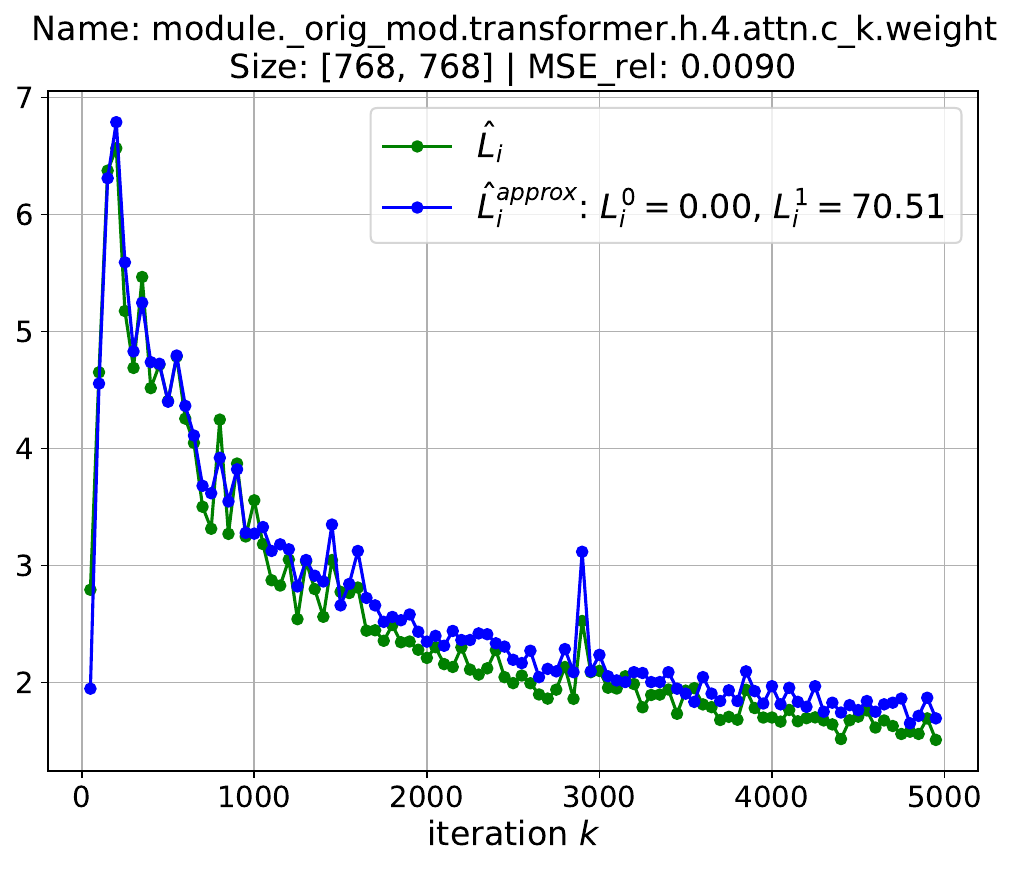}
    \end{subfigure}
    \hfill
    \begin{subfigure}{0.32\textwidth}
        \includegraphics[width=\textwidth]{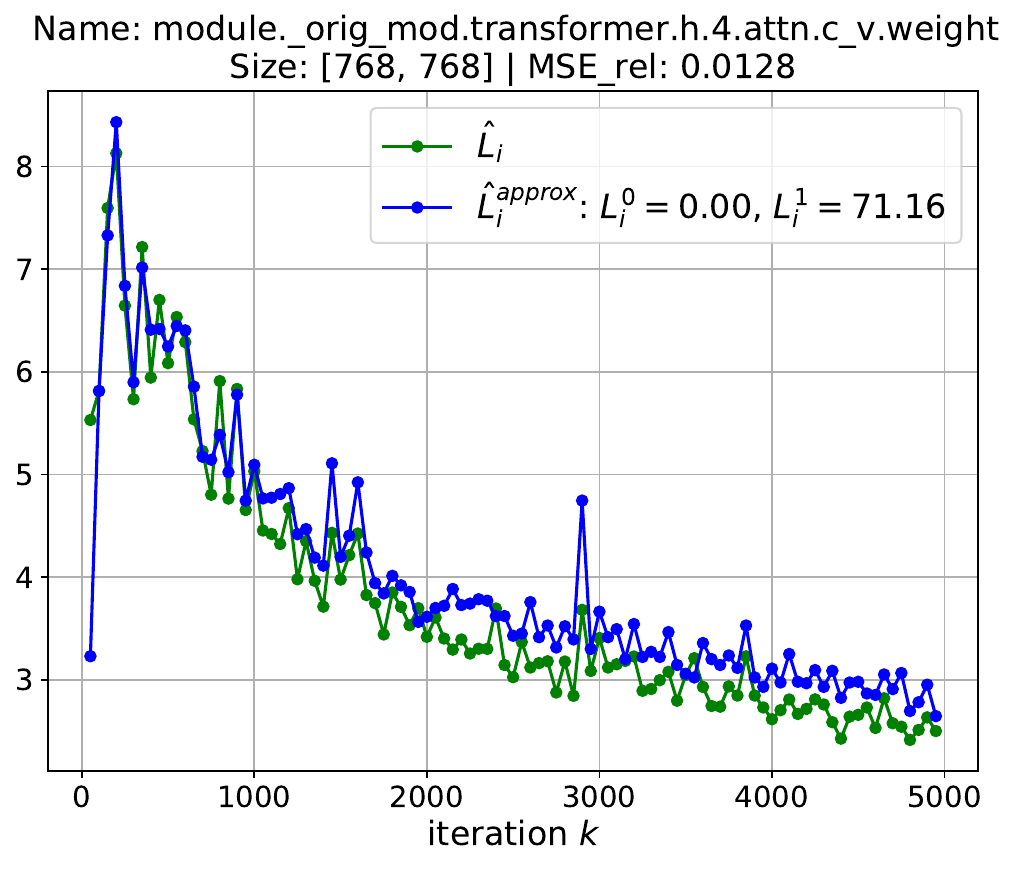}
    \end{subfigure}
    
    
    \begin{subfigure}{0.32\textwidth}
        \includegraphics[width=\textwidth]{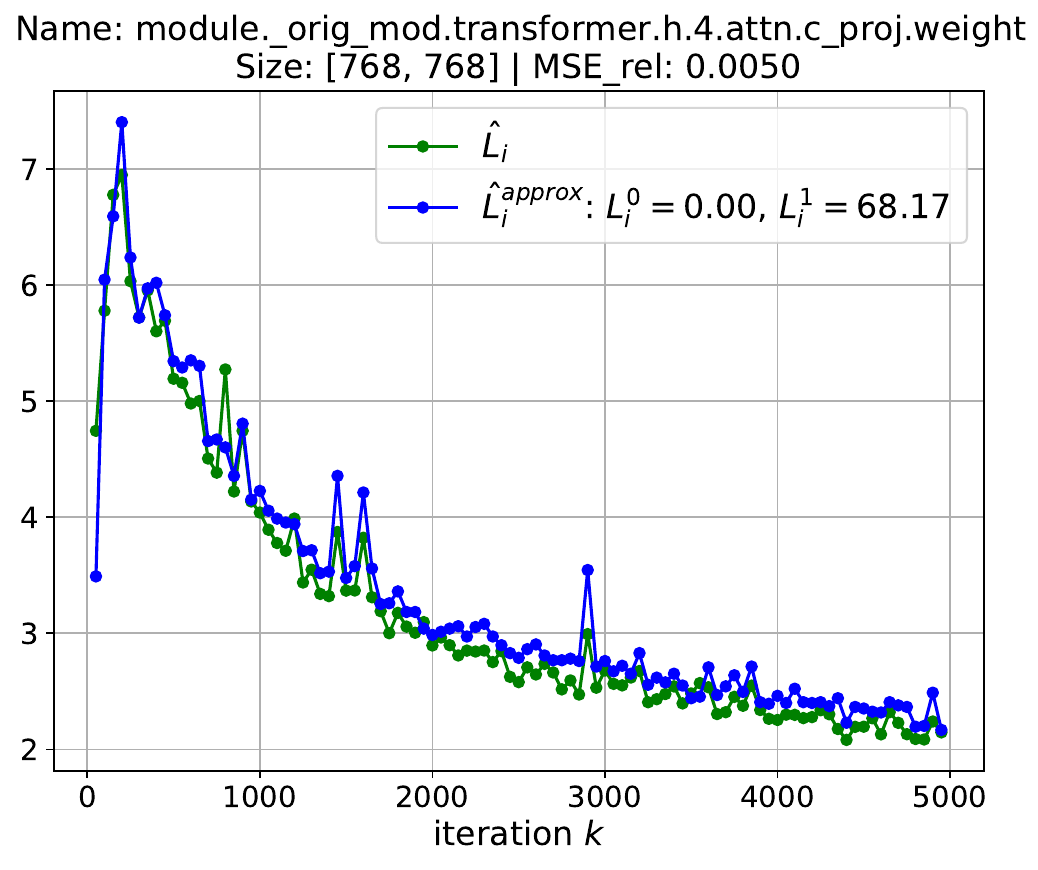}
    \end{subfigure}
    \hfill
    \begin{subfigure}{0.32\textwidth}
        \includegraphics[width=\textwidth]{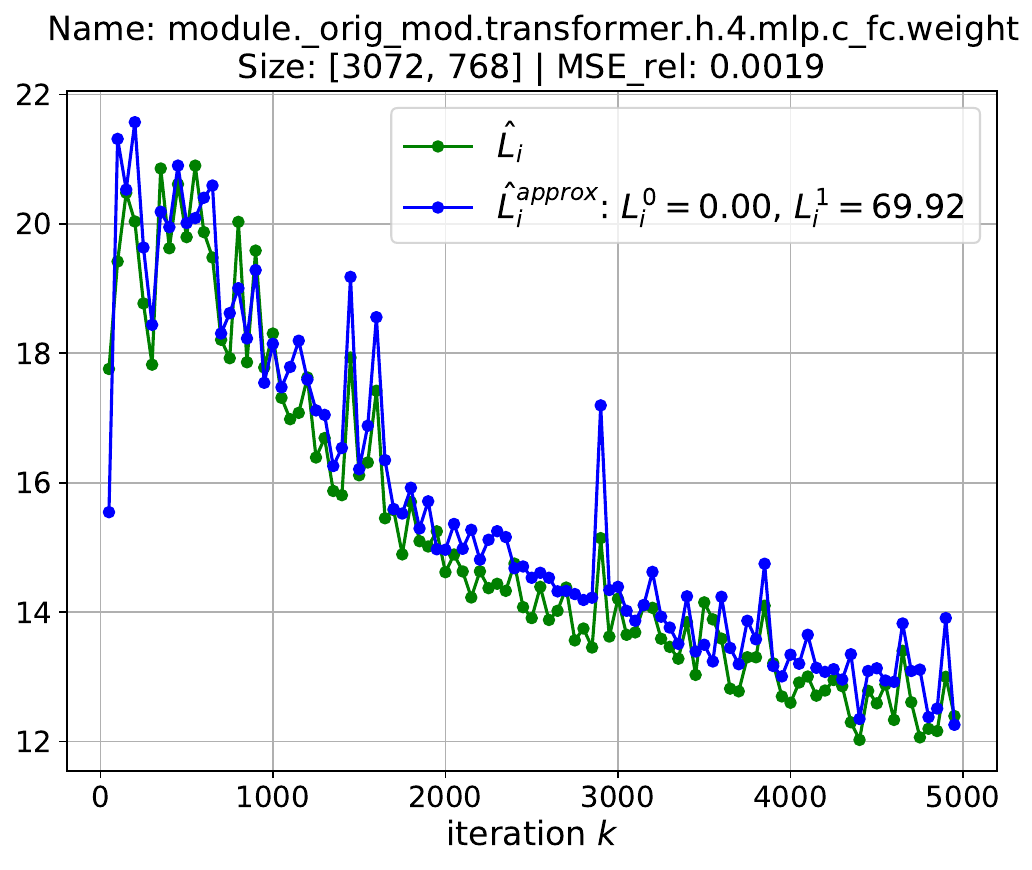}
    \end{subfigure}
    \hfill
    \begin{subfigure}{0.32\textwidth}
        \includegraphics[width=\textwidth]{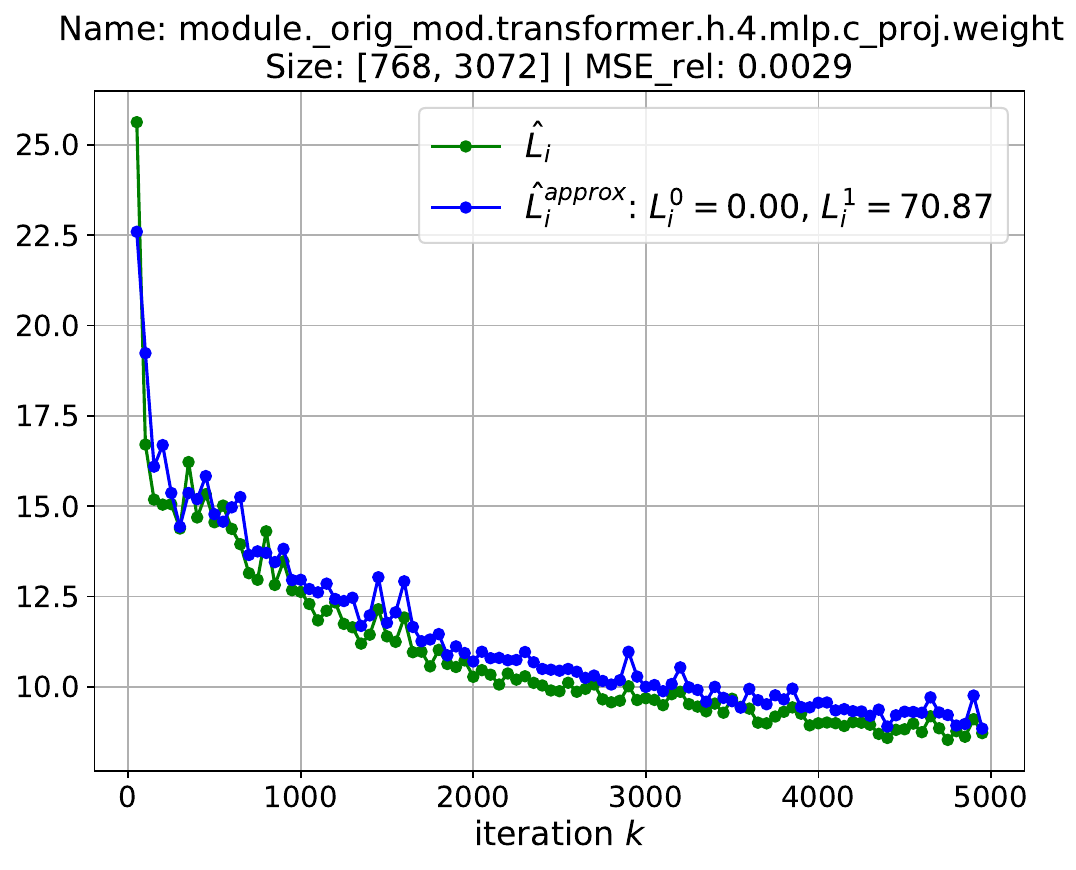}
    \end{subfigure}
    \caption{\small Validation of layer-wise $(L^0, L^1)$-smoothness for the group of parameters from the 4th transformer block of \texttt{NanoGPT-124M} along \algname{unScion} training trajectories. The group norms are \( \|\cdot\|_{(i)} = \sqrt{\nicefrac{n_i}{m_i}} \|\cdot\|_{2 \to 2} \), with fitted values $L_i^0 \approx 0$, $L_i^1 \approx 70$.}
    \label{fig:2}
\end{figure}

\begin{figure}[h]
    \centering
    \begin{subfigure}{0.32\textwidth}
        \includegraphics[width=\textwidth]{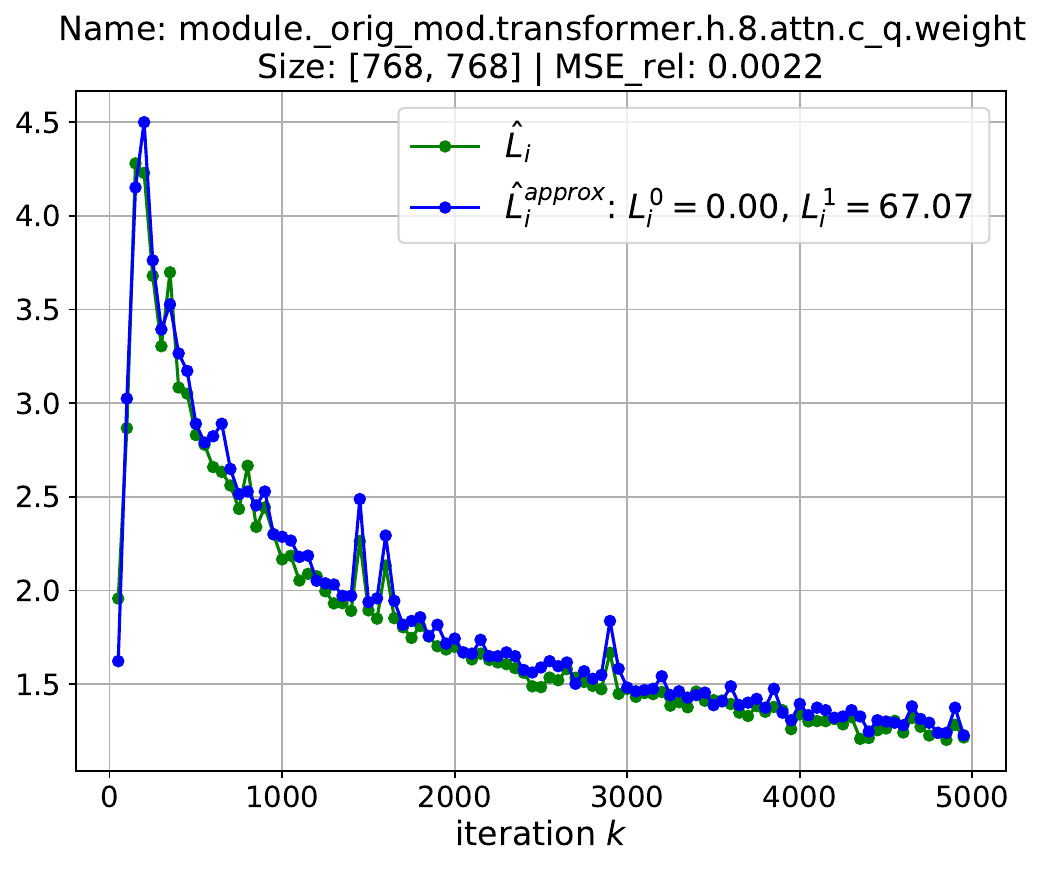}
    \end{subfigure}
    \hfill
    \begin{subfigure}{0.32\textwidth}
        \includegraphics[width=\textwidth]{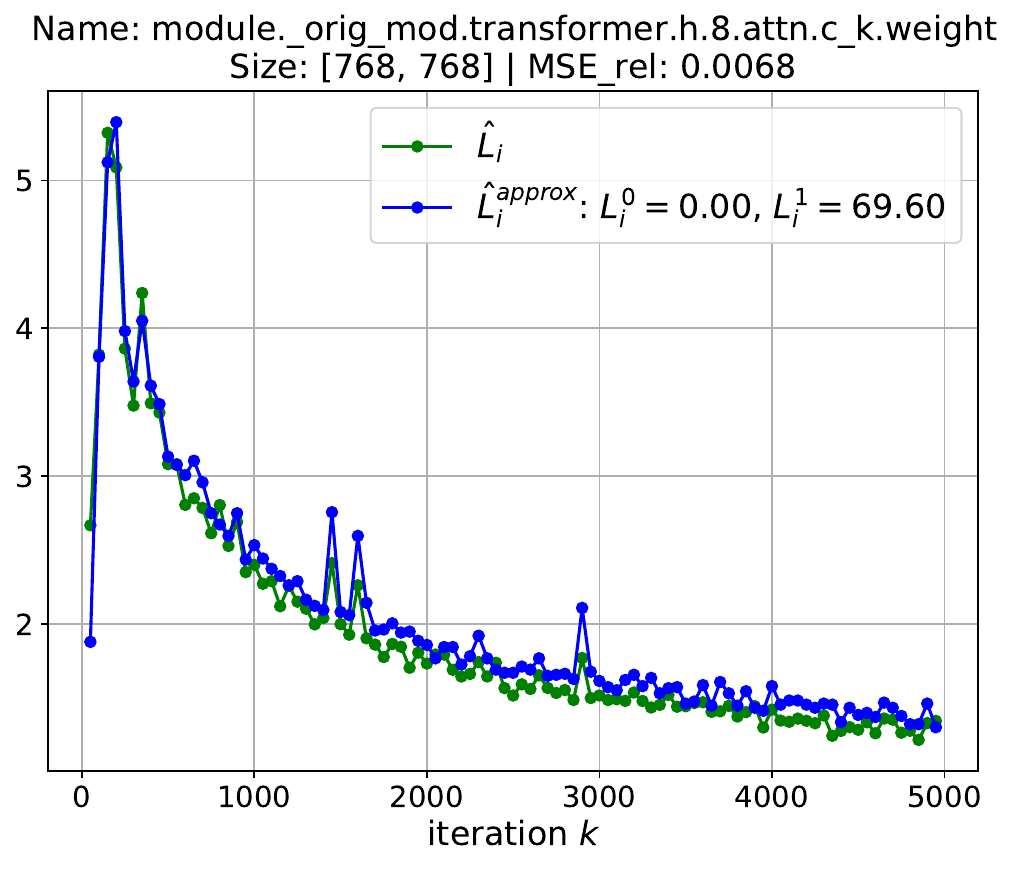}
    \end{subfigure}
    \hfill
    \begin{subfigure}{0.32\textwidth}
        \includegraphics[width=\textwidth]{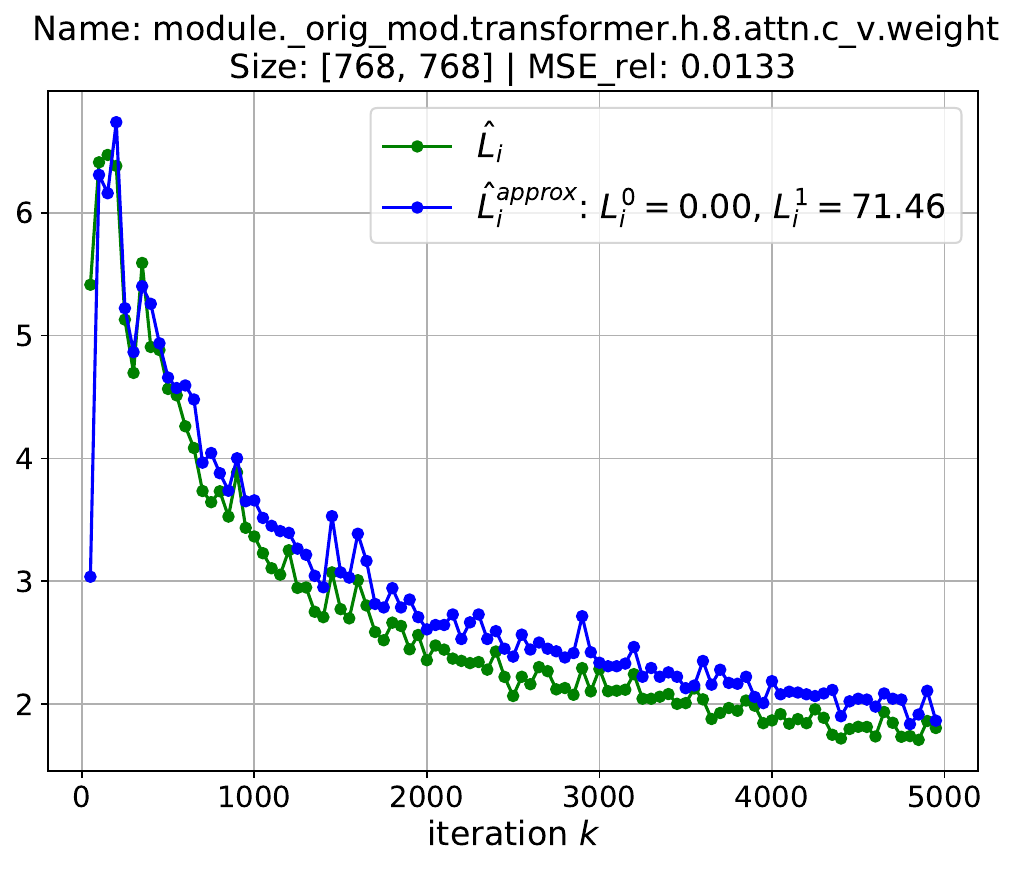}
    \end{subfigure}
    
    
    \begin{subfigure}{0.32\textwidth}
        \includegraphics[width=\textwidth]{plots/nanogpt/main_exp/plot4_1.pdf}
    \end{subfigure}
    \hfill
    \begin{subfigure}{0.32\textwidth}
        \includegraphics[width=\textwidth]{plots/nanogpt/main_exp/plot5_1.pdf}
    \end{subfigure}
    \hfill
    \begin{subfigure}{0.32\textwidth}
        \includegraphics[width=\textwidth]{plots/nanogpt/main_exp/plot6_1.pdf}
    \end{subfigure}
    
    \caption{Validation of layer-wise $(L^0, L^1)$-smoothness for the group of parameters from the 8th transformer block of \texttt{NanoGPT-124M} along \algname{unScion} training trajectories. The group norms are \( \|\cdot\|_{(i)} = \sqrt{\nicefrac{n_i}{m_i}} \|\cdot\|_{2 \to 2} \), with fitted values $L_i^0 \approx 0$, $L_i^1 \approx 70$.}
    \label{fig:3_appndx}
\end{figure}

\newpage
\subsubsection{Generalized smoothness under Euclidean vs. specialized norms} \label{appendix:gensmooth_euclid}

In this experiment, we compare how well the layer-wise $(L^0, L^1)$-smoothness assumption is satisfied under the standard Euclidean norms \( \|\cdot\|_2 \) for each parameter block, as opposed to the specialized norms described in~(\ref{eq:scion_llm}). We adopt the same training setup as in Section~\ref{sec:nanogpt-fineweb}, plotting the estimated trajectory smoothness \( \hat{L}_i \) and its approximation \( \hat{L}_i^{\text{approx}} \) along the training trajectories across several parameter groups. Unlike previous sections, here we do not penalize instances where \( \hat{L}_i > \hat{L}_i^{\text{approx}} \) in order to find the best approximation (i.e., $\lambda = 0$ in \eqref{eq:l0l1_loss}). Additionally, when using the standard Euclidean norm \( \|\cdot\|_2 \) for approximation, we exclude the first point, as it could distort the result.

We evaluate the quality of each approximation using the relative mean squared error ($\mathrm{MSE}_i^{\mathrm{rel}}$, denoted MSE\_rel in the figures), defined as
\begin{align*}
    \text{MSE}_i^{\text{rel}}:= \frac{1}{K} \sum_{i=1}^{K} \left( \frac{\hat{L}_i[k] - \hat{L}_i^{\text{approx}}[k]}{\hat{L}_i[k]} \right)^2,
\end{align*}
where a lower value indicates a better fit.

As shown in Figures~\ref{fig:specific_norms_mse} and~\ref{fig:euclideum_norms_mse}, both visually and in terms of $\mathrm{MSE}_i^{\mathrm{rel}}$, using specialized norms for each group of parameters provides a better approximation than the standard Euclidean norm~\( \|\cdot\|_2 \). Notably, the relative mean squared error $\mathrm{MSE}_i^{\mathrm{rel}}$ is consistently an order of magnitude lower under specialized norms.

\begin{figure}[h]
\vspace{-30pt}
    \centering
    \begin{subfigure}{0.32\textwidth}
        \includegraphics[width=\textwidth]{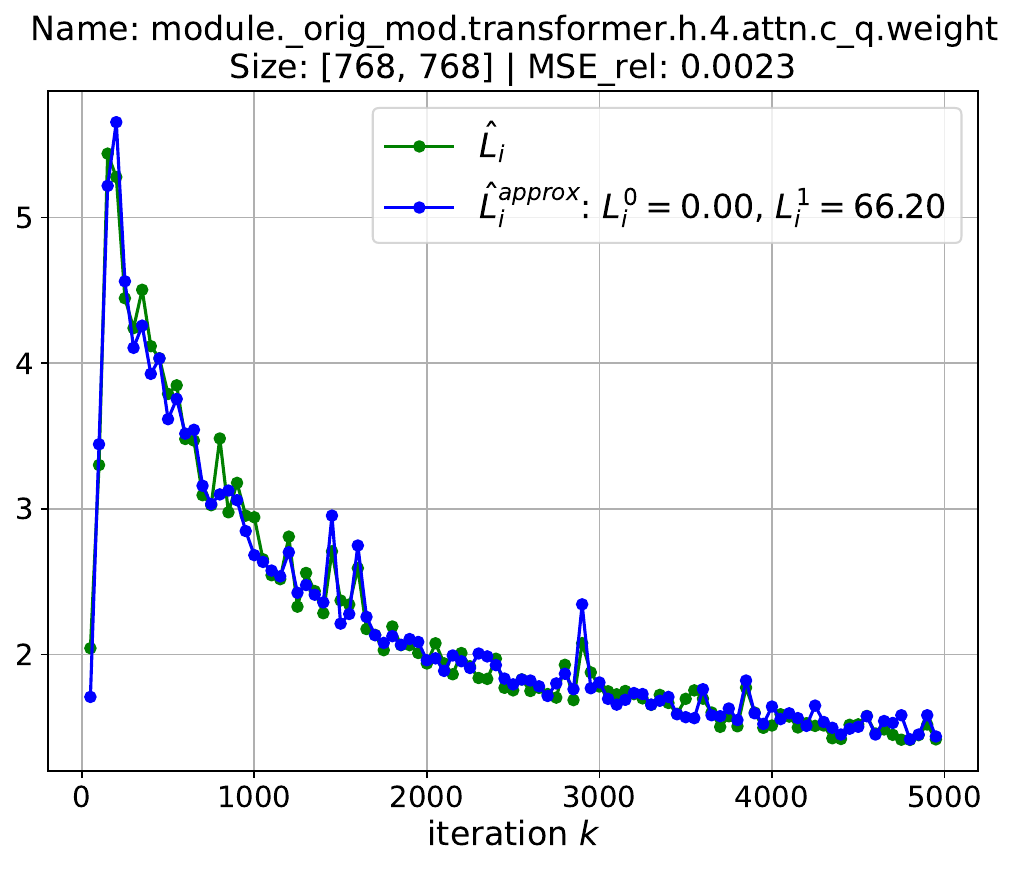}
        \caption{\small $\mathrm{MSE}_i^{\mathrm{rel}} = 0.0023$}
    \end{subfigure}
    \hfill
    \begin{subfigure}{0.32\textwidth}
        \includegraphics[width=\textwidth]{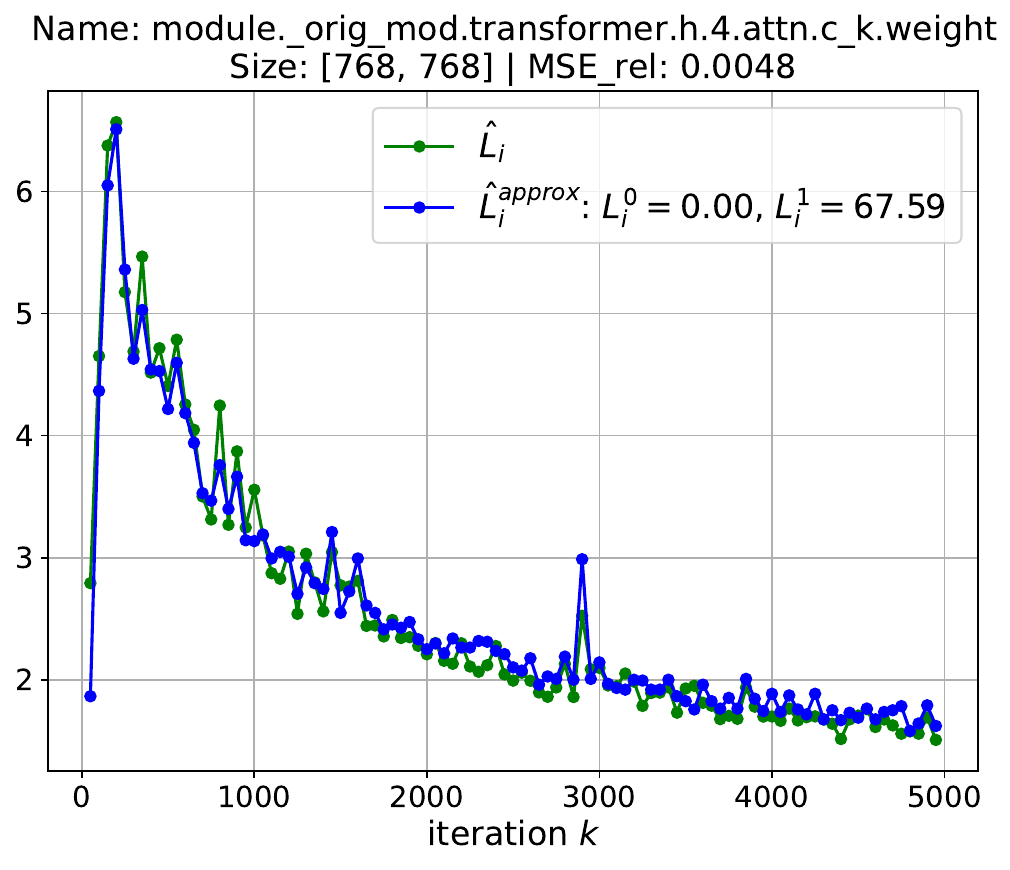}
        \caption{\small $\mathrm{MSE}_i^{\mathrm{rel}} = 0.0048$}
    \end{subfigure}
    \hfill
    \begin{subfigure}{0.32\textwidth}
        \includegraphics[width=\textwidth]{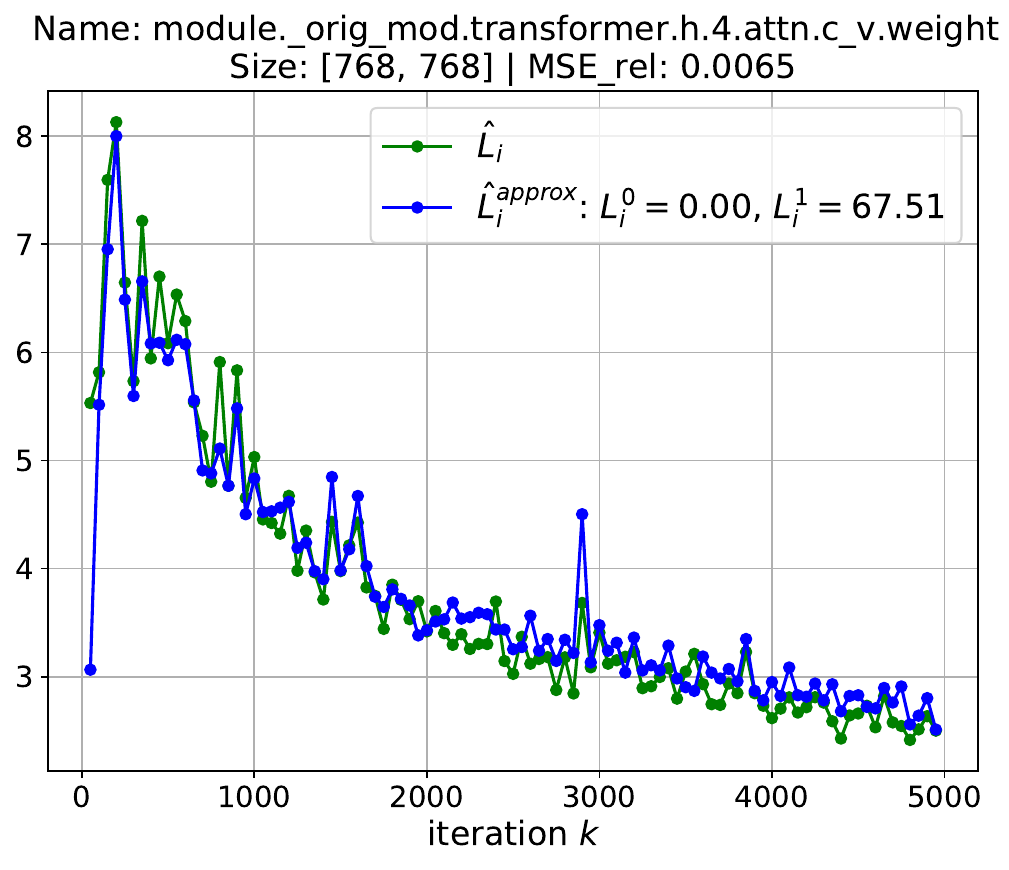}
        \caption{\small $\mathrm{MSE}_i^{\mathrm{rel}} = 0.0065$}
    \end{subfigure}
    
    
    \begin{subfigure}{0.32\textwidth}
        \includegraphics[width=\textwidth]{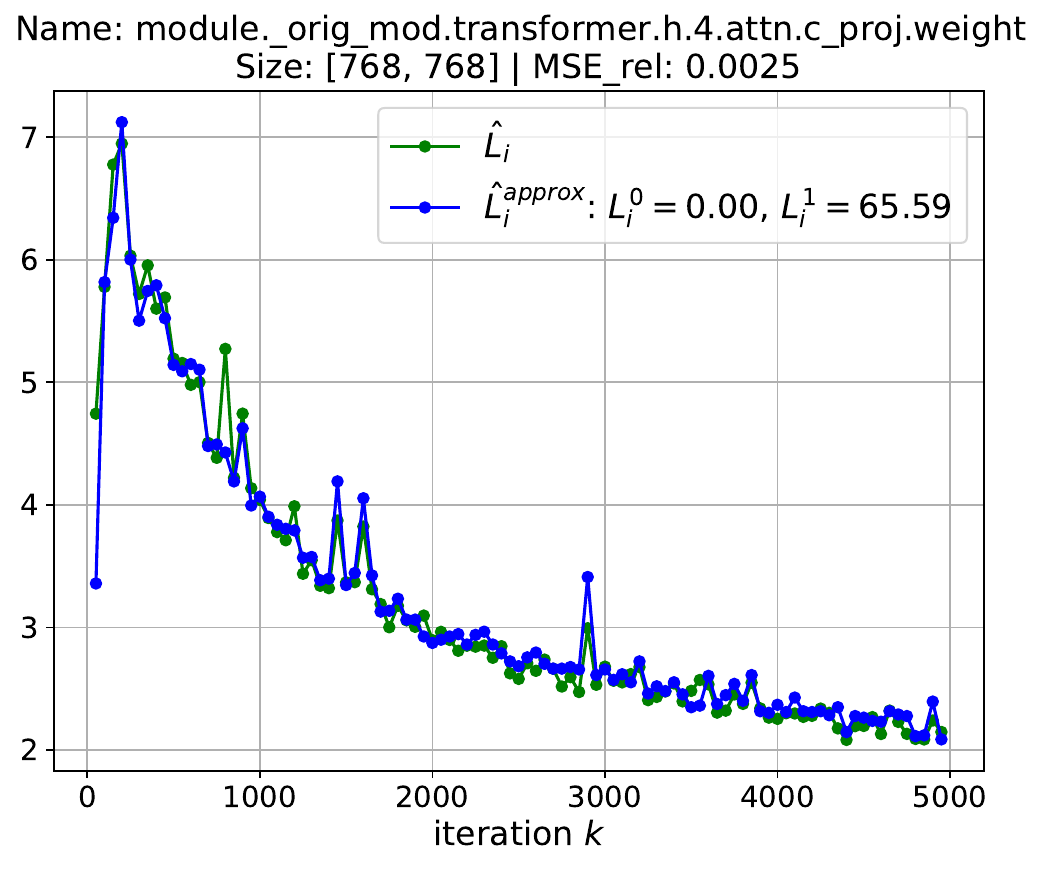}
        \caption{\small $\mathrm{MSE}_i^{\mathrm{rel}} = 0.0025$}
    \end{subfigure}
    \hfill
    \begin{subfigure}{0.32\textwidth}
        \includegraphics[width=\textwidth]{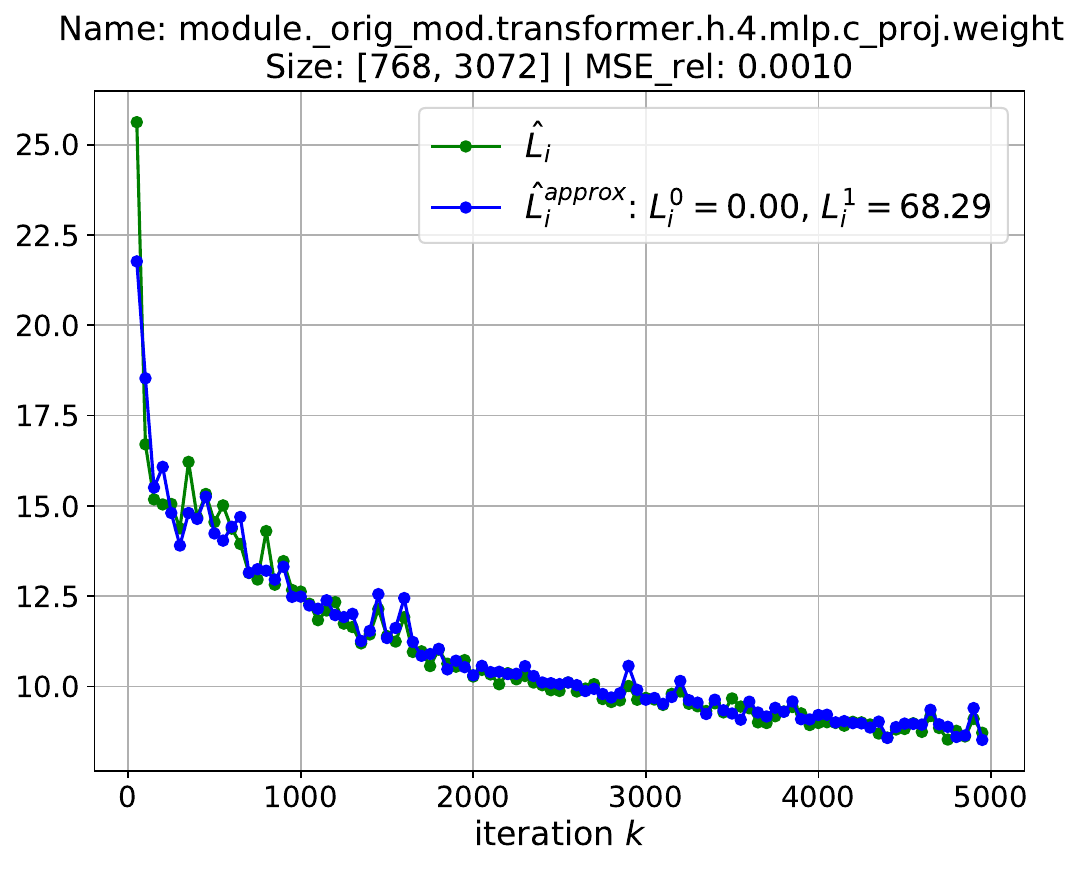}
        \caption{\small $\mathrm{MSE}_i^{\mathrm{rel}} = 0.001$}
    \end{subfigure}
    \hfill
    \begin{subfigure}{0.32\textwidth}
        \includegraphics[width=\textwidth]{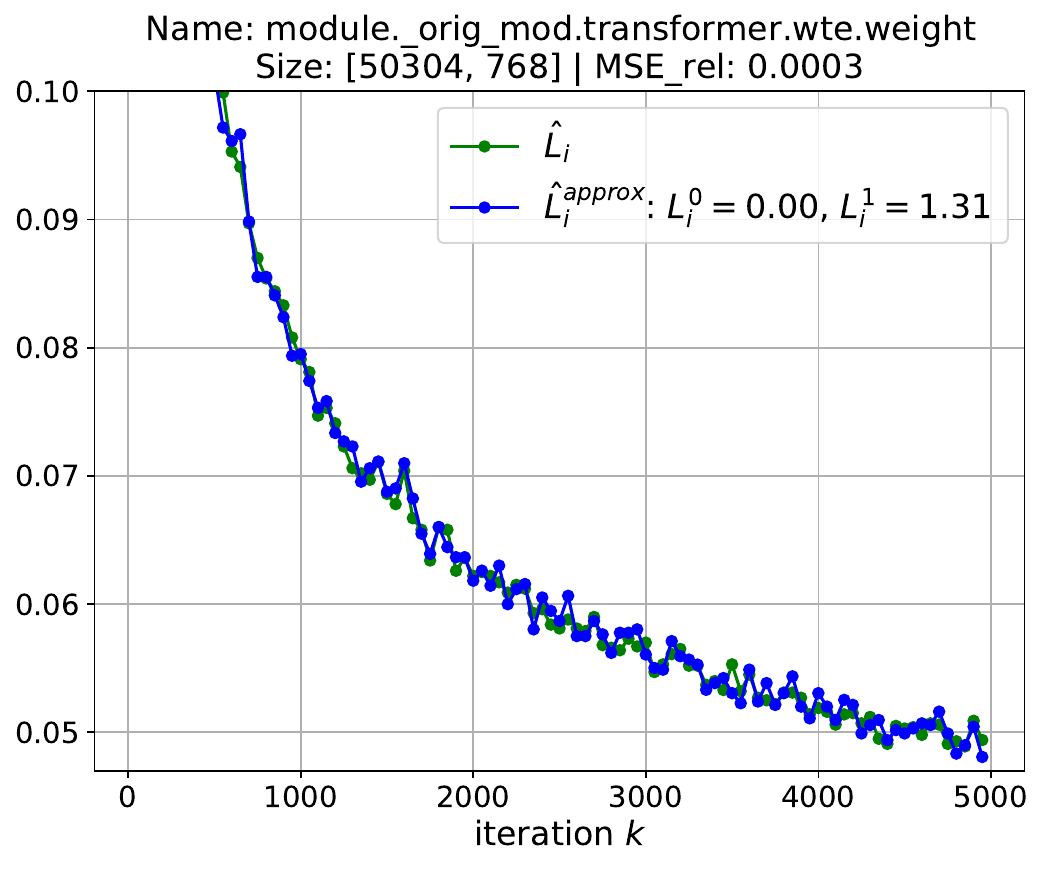}
        \caption{\small $\mathrm{MSE}_i^{\mathrm{rel}} = 0.0003$}
    \end{subfigure}
    
    \caption{\small Validation of layer-wise $(L^0, L^1)$-smoothness for different groups of parameters in \texttt{NanoGPT-124M} along training trajectories of \algname{unScion} using the specialized norm choices defined in~(\ref{eq:scion_llm}).}
    \label{fig:specific_norms_mse}
\end{figure}

\begin{figure}[H]
    \centering
    \begin{subfigure}{0.32\textwidth}
        \includegraphics[width=\textwidth]{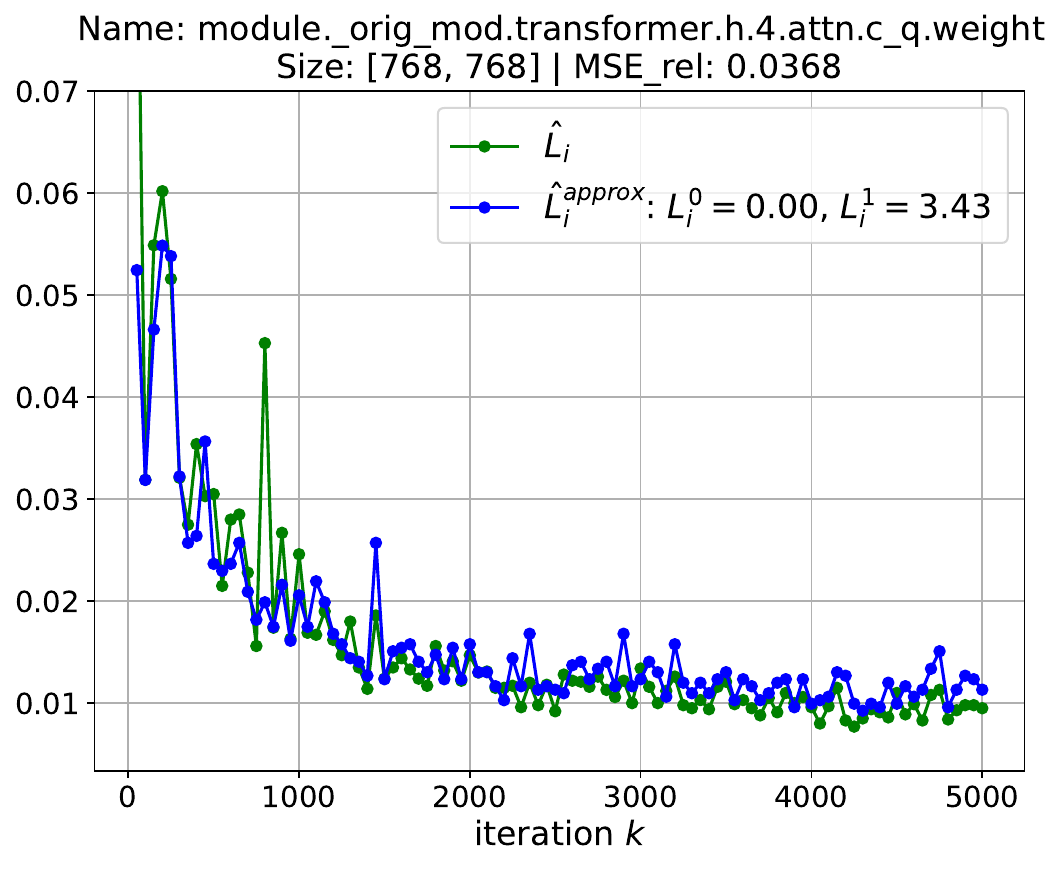}
        \caption{\small $\mathrm{MSE}_i^{\mathrm{rel}} = 0.0368$}
    \end{subfigure}
    \hfill
    \begin{subfigure}{0.32\textwidth}
        \includegraphics[width=\textwidth]{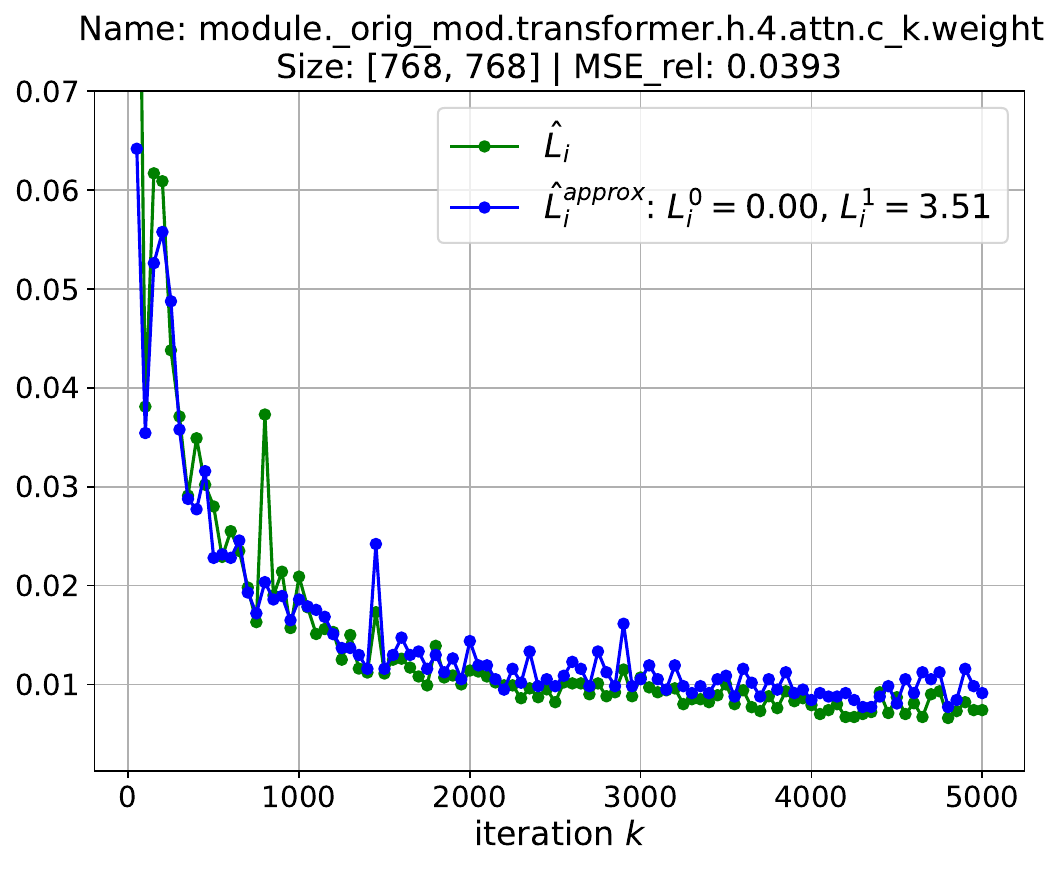}
        \caption{\small $\mathrm{MSE}_i^{\mathrm{rel}} = 0.0393$}
    \end{subfigure}
    \hfill
    \begin{subfigure}{0.32\textwidth}
        \includegraphics[width=\textwidth]{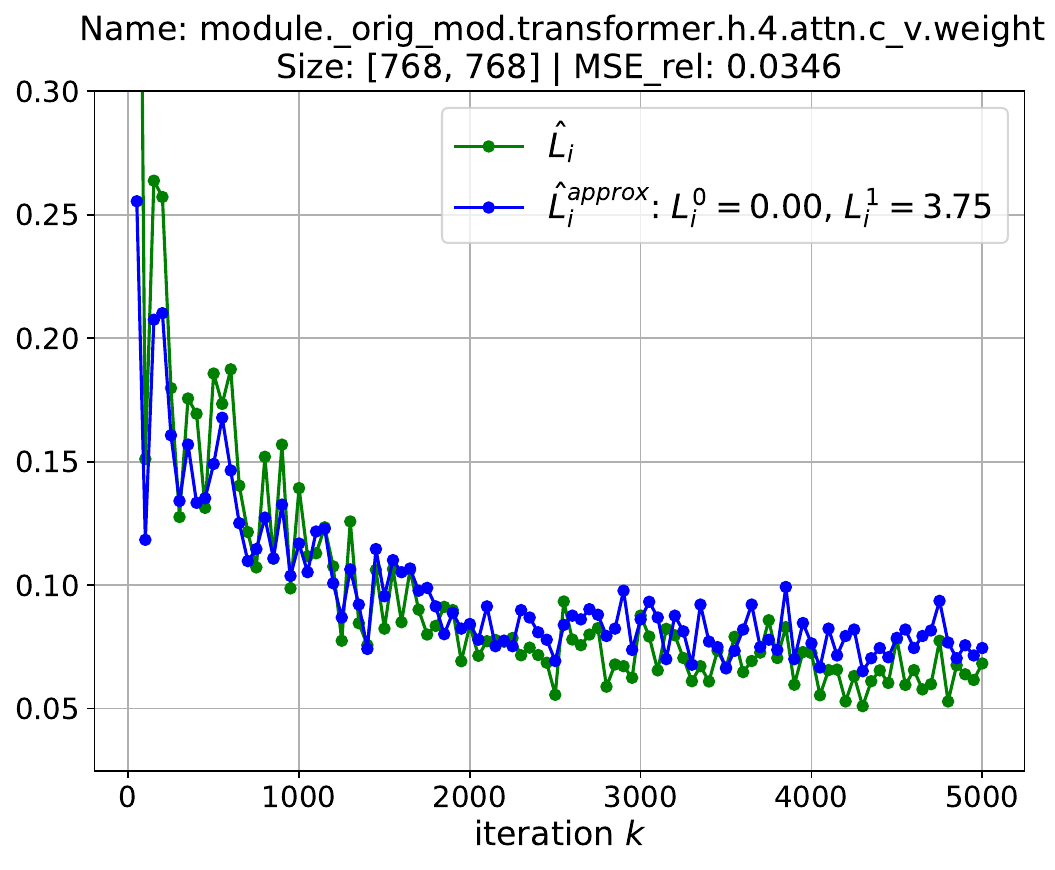}
        \caption{\small $\mathrm{MSE}_i^{\mathrm{rel}} = 0.0346$}
    \end{subfigure}
    
    
    \begin{subfigure}{0.32\textwidth}
        \includegraphics[width=\textwidth]{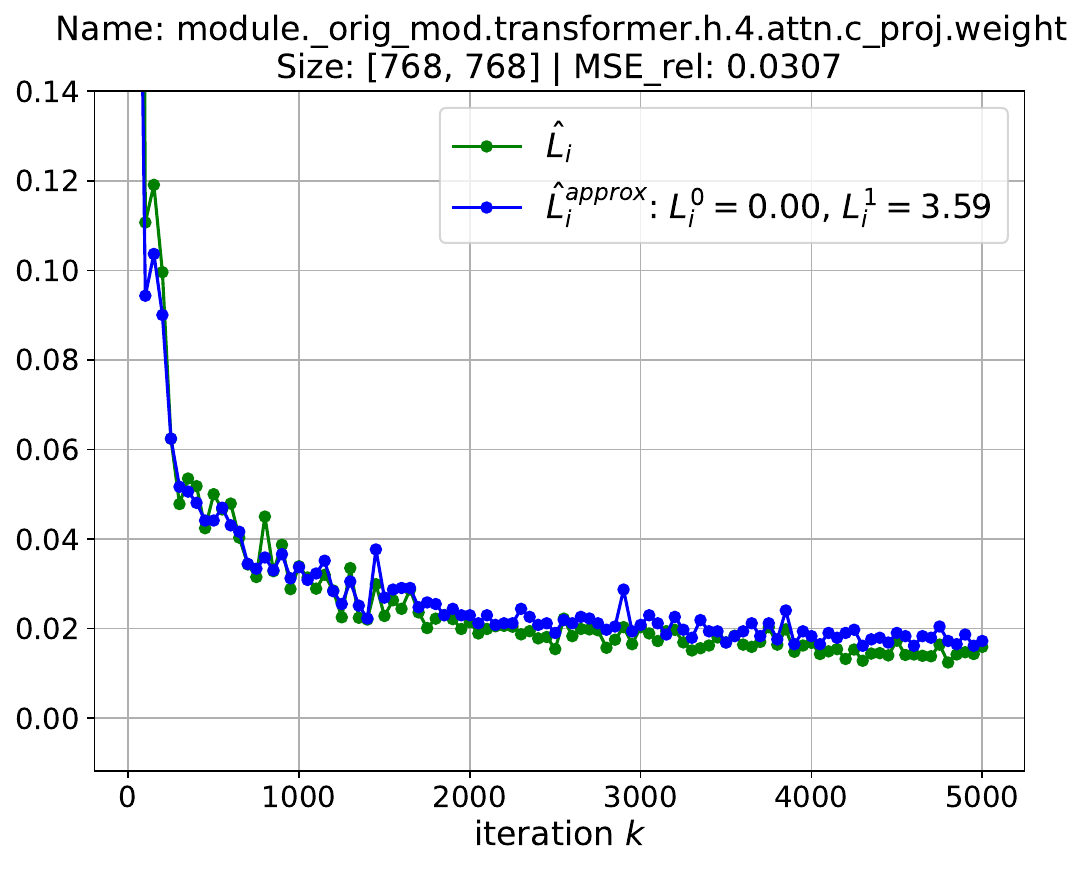}
        \caption{\small $\mathrm{MSE}_i^{\mathrm{rel}} = 0.0307$}
    \end{subfigure}
    \hfill
    \begin{subfigure}{0.32\textwidth}
        \includegraphics[width=\textwidth]{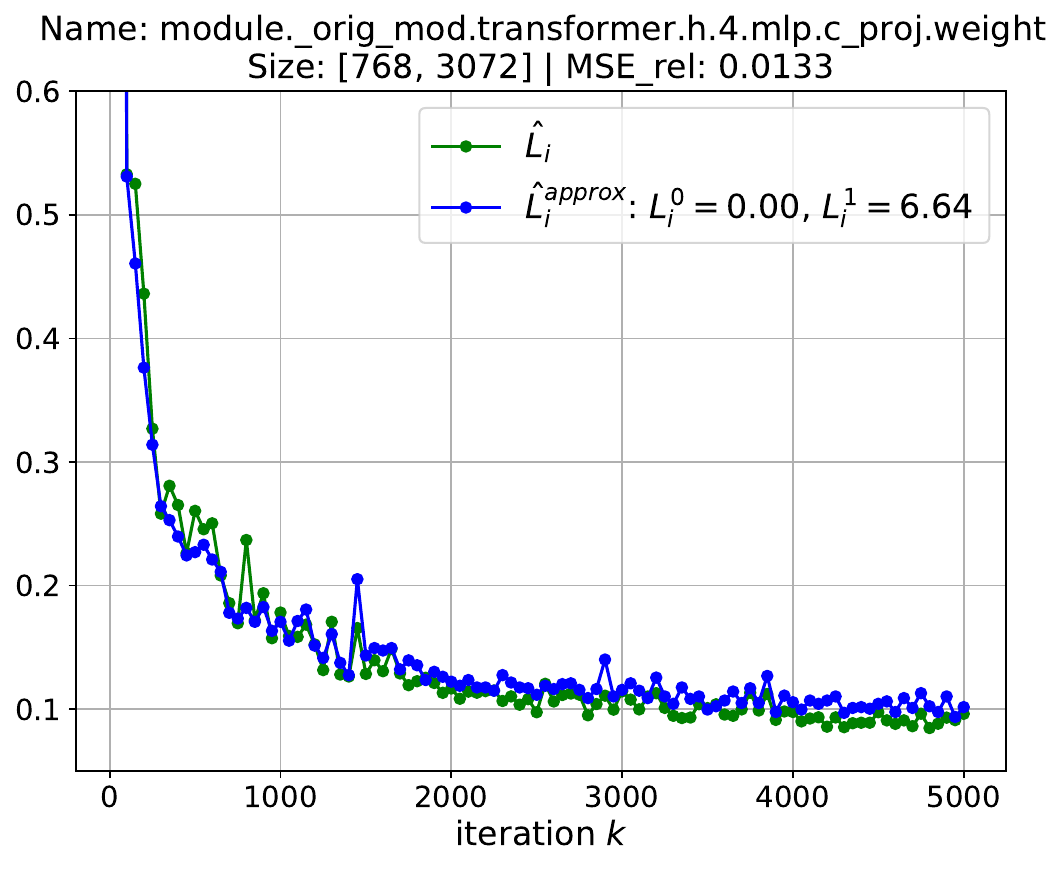}
        \caption{\small $\mathrm{MSE}_i^{\mathrm{rel}} = 0.0133$}
    \end{subfigure}
    \hfill
    \begin{subfigure}{0.32\textwidth}
        \includegraphics[width=\textwidth]{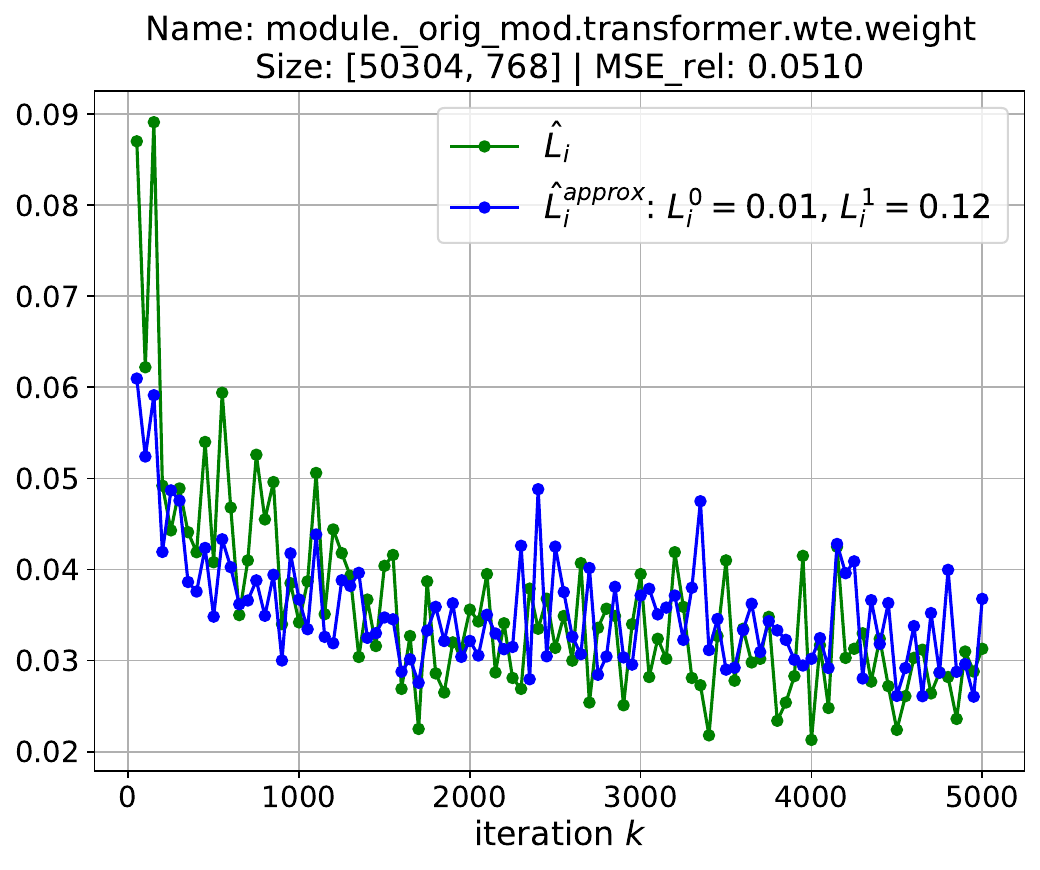}
        \caption{\small $\mathrm{MSE}_i^{\mathrm{rel}} = 0.051$}
    \end{subfigure}
    
    \caption{\small Validation of layer-wise $(L^0, L^1)$-smoothness for different groups of parameters in \texttt{NanoGPT-124M} along training trajectories of \algname{unScion} using the standard Euclidean norm \( \|\cdot\|_2 \).}
    \label{fig:euclideum_norms_mse}
\end{figure}

\subsubsection{Learning rate transfer from AdamW}
\label{sec:Adam_exp}

We now aim to verify layer-wise $(L^0, L^1)$-smoothness following the approach used in Section~\ref{sec:nanogpt-fineweb}, but employing the \algname{AdamW} optimizer. We use hyperparameters specified in \citet[Table 7]{pethick2025training}. In Figure~\ref{fig:Adam}, we present the results for the estimated trajectory smoothness $\hat{L}_i$ and its approximation $\hat{L}_i^{\text{approx}}$ across several parameter groups along the training trajectories. Notably, for the group of parameters from the embedding layer $X_p$ (the last plot in Figure~\ref{fig:Adam}), the fitted value of $L^1_p$ is approximately 20–30 times smaller than in other groups. Since in all plots we observe that $L^0_i \ll L^1_i \|\nabla_i f_{\xi^k}(X^k)\|_{(i) \star}$, \Cref{theorem:1} implies that $t^k_i \approx \nicefrac{1}{L^k_i}$. Thus, \( t^k_p \) should be 20–30 times larger than \( t^k_i \) for \( i = 1, \dots, p-1 \), which is consistent with the tuned parameters from \citet[Table 7]{pethick2025training}.

This insight provides an efficient and principled method for initializing learning rates in \algname{Scion}. Smoothness statistics collected during standard \algname{AdamW} training (which is commonly used for training LLMs) can serve as a strong prior, allowing practitioners to directly incorporate structure-aware choices, such as larger stepsizes for embedding layers, into their tuning process.
Importantly, computing these statistics is computationally inexpensive, introducing minimal additional cost.

\begin{figure}[h]
    \centering
    \begin{subfigure}{0.32\textwidth}
        \includegraphics[width=\textwidth]{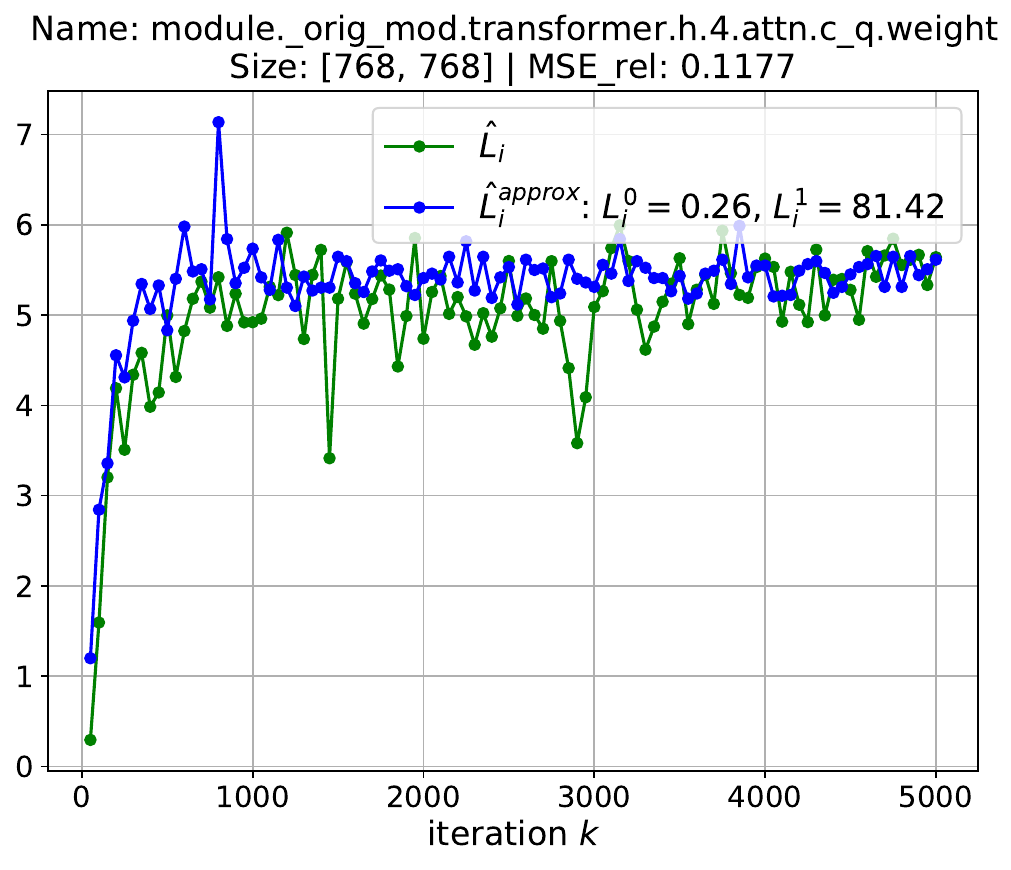}
    \end{subfigure}
    \hfill
    \begin{subfigure}{0.32\textwidth}
        \includegraphics[width=\textwidth]{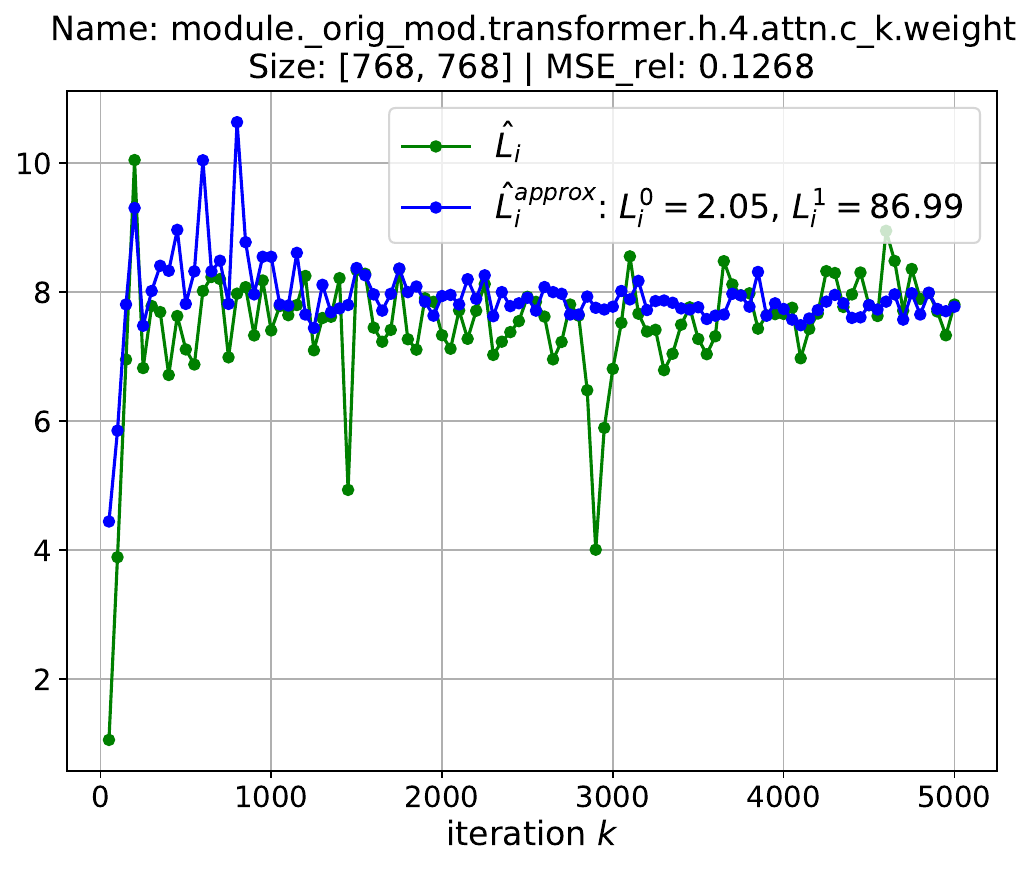}
    \end{subfigure}
    \hfill
    \begin{subfigure}{0.32\textwidth}
        \includegraphics[width=\textwidth]{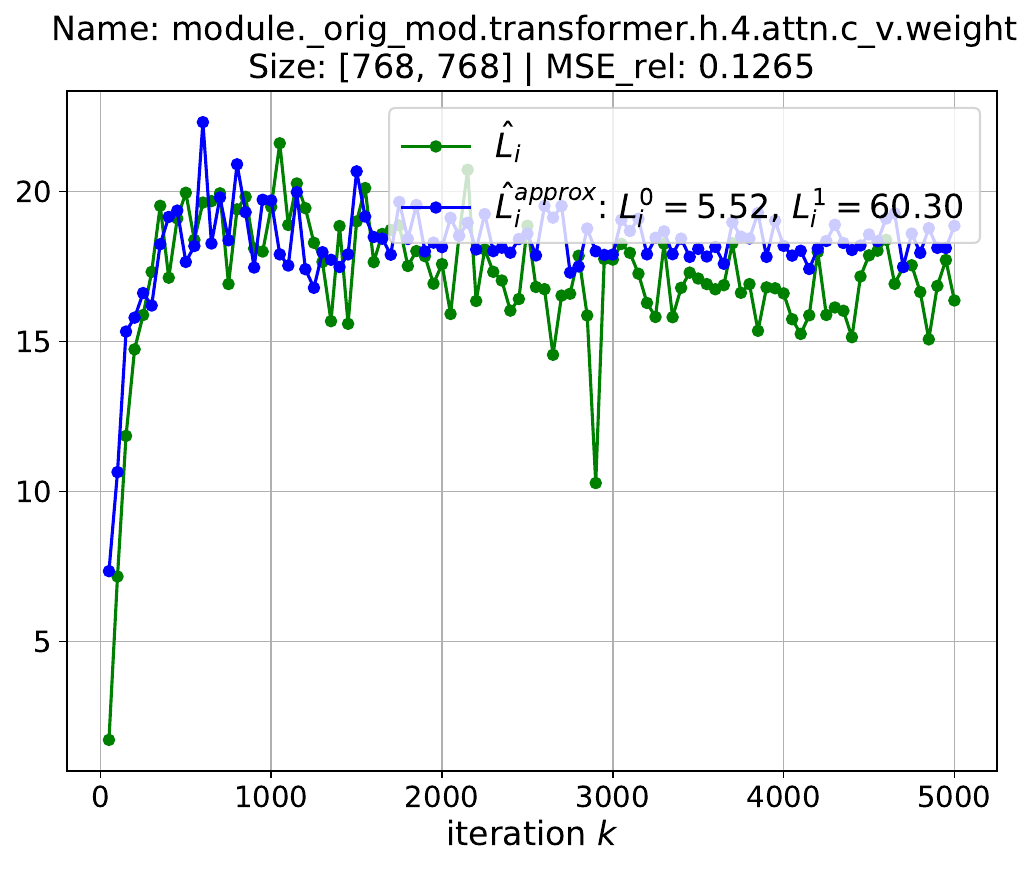}
    \end{subfigure}
    
    \vspace{0.5cm}
    
    \begin{subfigure}{0.32\textwidth}
        \includegraphics[width=\textwidth]{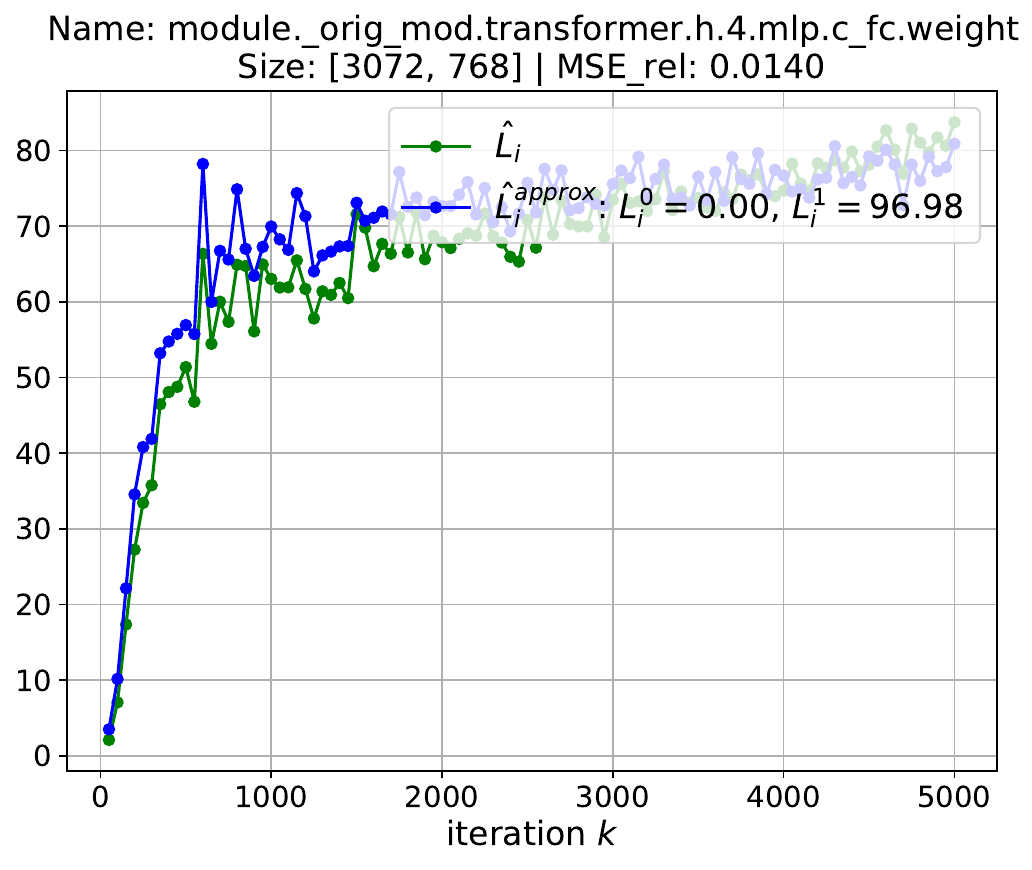}
    \end{subfigure}
    \hfill
    \begin{subfigure}{0.32\textwidth}
        \includegraphics[width=\textwidth]{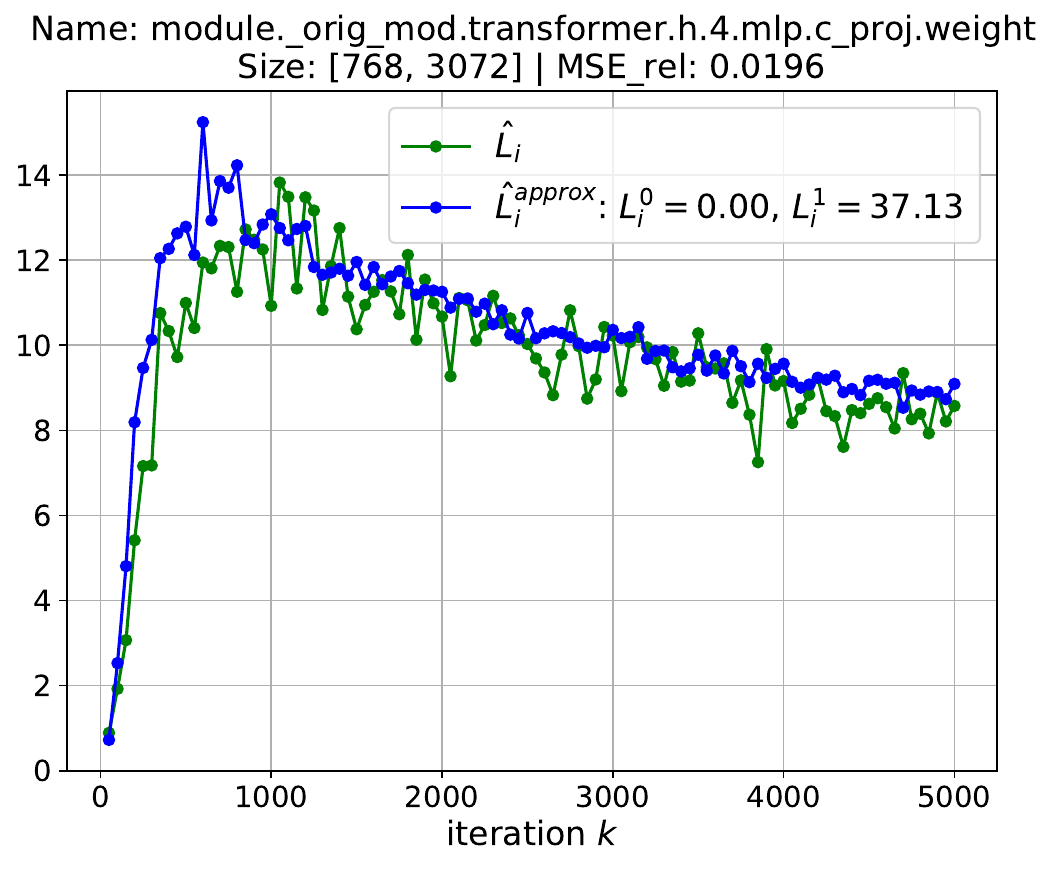}
    \end{subfigure}
    \hfill
    \begin{subfigure}{0.32\textwidth}
        \includegraphics[width=\textwidth]{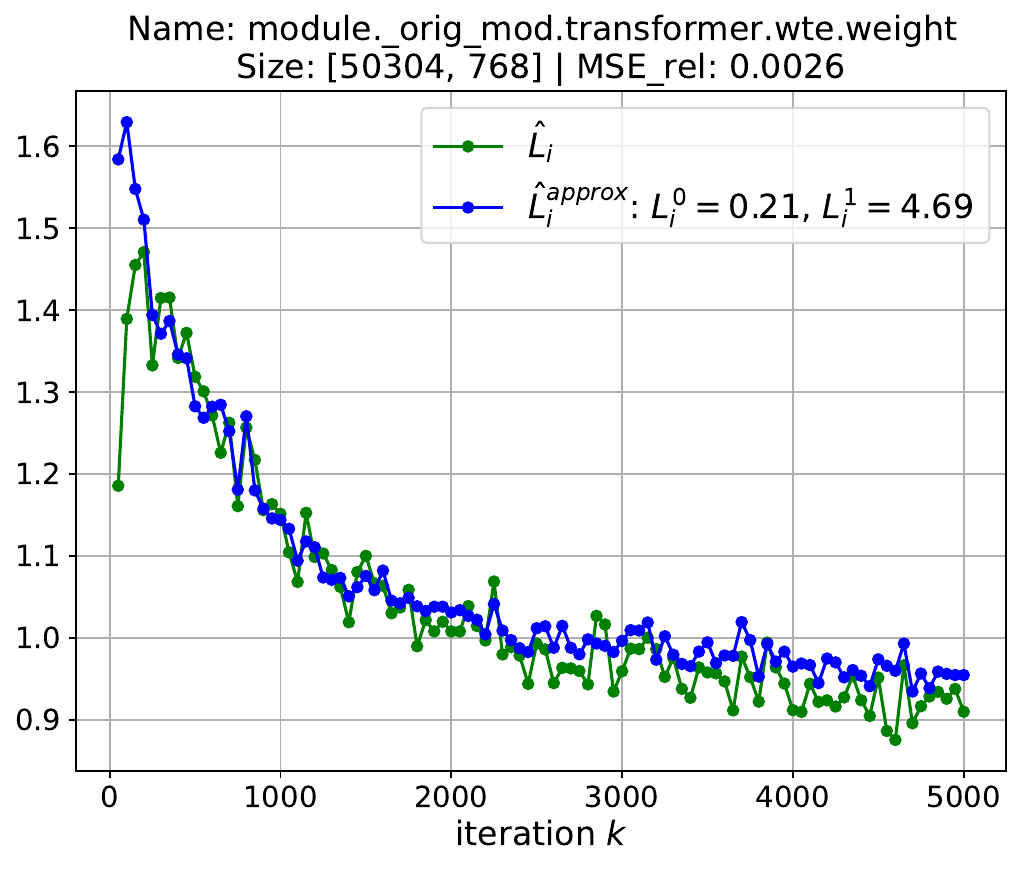}
    \end{subfigure}
    
    \caption{\small Validation of layer-wise $(L^0, L^1)$-smoothness for different groups of parameters in \texttt{NanoGPT-124M} along \algname{AdamW} training trajectories.}
    \label{fig:Adam}
\end{figure}

\subsection{Training CNN on CIFAR-10} \label{appendix:cifar}

In this section, we provide additional results for the experiments described in \Cref{sec:cnn}, where a \texttt{CNN} model is trained on the \texttt{CIFAR-10} dataset using the \algname{unScion} optimizer.

\paragraph{Full-batch (deterministic) gradients.}

\begin{figure}[t]
    \centering
    \begin{subfigure}{0.32\textwidth}
        \includegraphics[width=\textwidth]{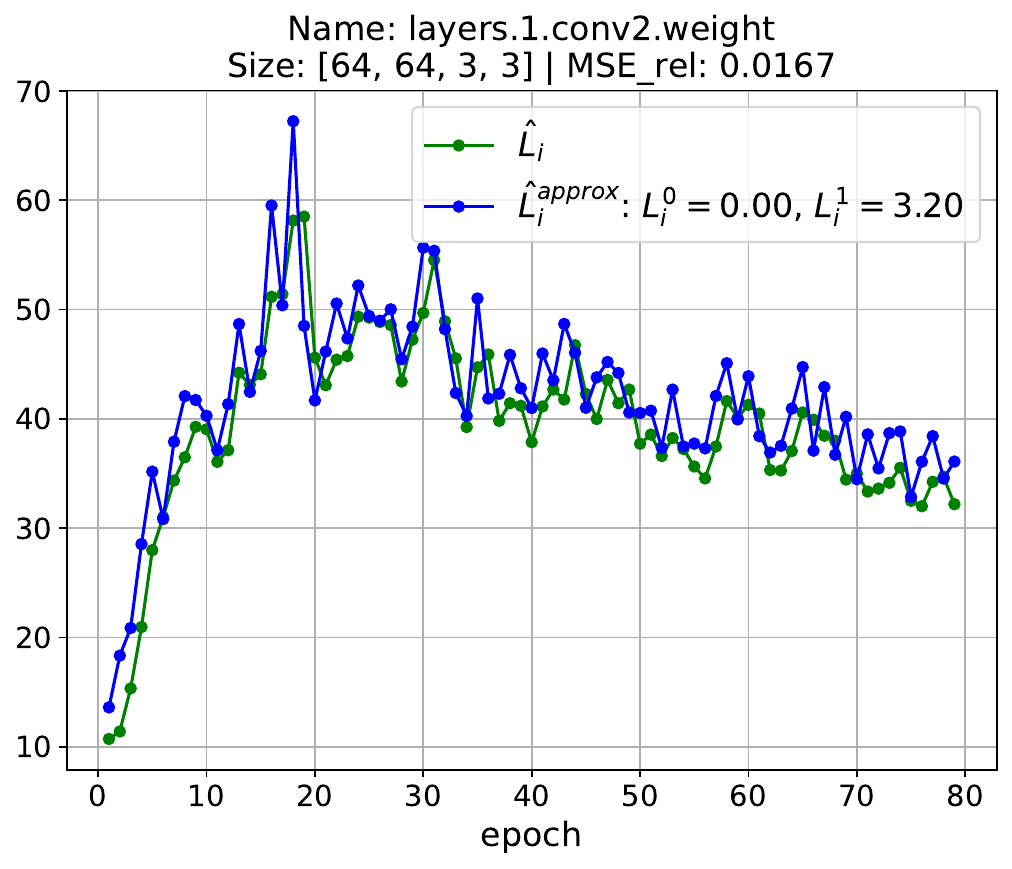}
    \end{subfigure}
    \hfill
    \begin{subfigure}{0.32\textwidth}
        \includegraphics[width=\textwidth]{plots/cnn/plot2_fb.pdf}
    \end{subfigure}
    \hfill
    \begin{subfigure}{0.32\textwidth}
        \includegraphics[width=\textwidth]{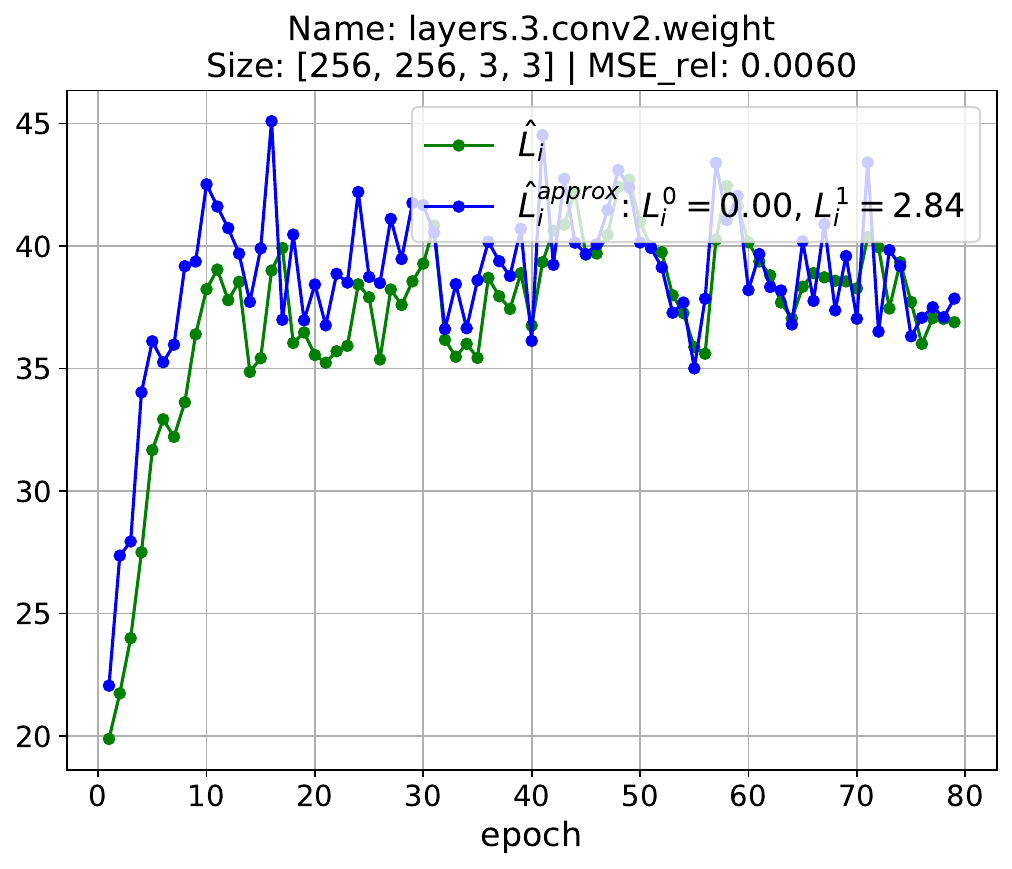}
    \end{subfigure}
    
    \vspace{0.5cm}
    
    \begin{subfigure}{0.32\textwidth}
        \includegraphics[width=\textwidth]{plots/cnn/plot4_fb.pdf}
    \end{subfigure}
    \hfill
    \begin{subfigure}{0.32\textwidth}
        \includegraphics[width=\textwidth]{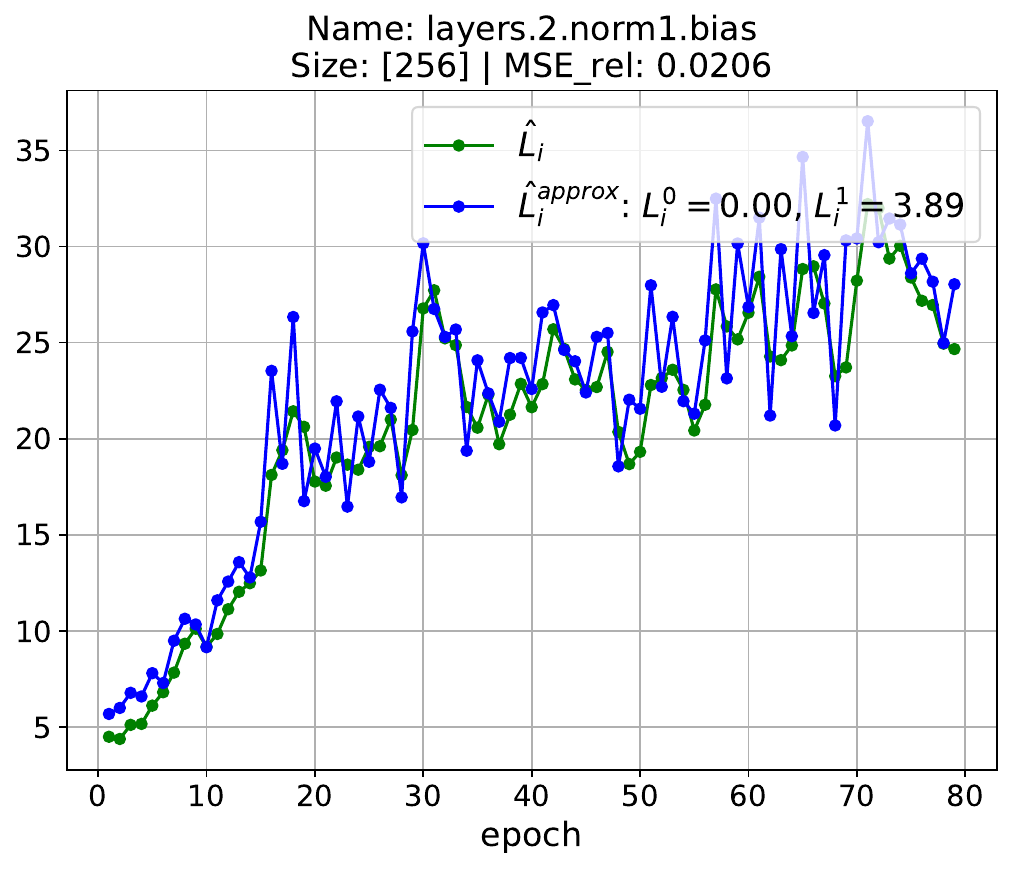}
    \end{subfigure}
    \hfill
    \begin{subfigure}{0.32\textwidth}
        \includegraphics[width=\textwidth]{plots/cnn/plot6_fb.pdf}
    \end{subfigure}
    
    \caption{\small Validation of layer-wise $(L^0, L^1)$-smoothness for different groups of parameters of a \texttt{CNN} model along the training trajectories of \algname{unScion} with \textbf{full-batch gradients}. The norms used for each group are as follows: \(\|\cdot\|_{(i)} = \sqrt{\nicefrac{1}{C_i^{out}}}\|\cdot\|_2\) for biases, \( \|\cdot\|_{(i)} = k^2 \sqrt{\nicefrac{C_i^{in}}{C_i^{out}}} \|\cdot\|_{2 \to 2} \) for conv, and \(\|\cdot\|_{(p)} = n_p \|\cdot\|_{1 \to \infty}\) for the last group \( X_p \), associated with classification head weights.}
    \label{fig:5_appendix}
\end{figure}

We begin with presenting additional results in the deterministic setting. Figure~\ref{fig:5_appendix} shows the estimated trajectory smoothness
\begin{align*}
    \hat{L}_i[k] \eqdef \frac{\|\nabla_i f (X^{k+1}) - \nabla_i f (X^{k}) \|_{(i)\star}}{\|X_i^{k+1} - X_i^{k}\|_{(i)}}
\end{align*}
and its approximation
\begin{align*}
    \hat{L}_i^{\text{approx}}[k] \eqdef L_i^1 \|\nabla_i f (X^{k+1})\|_{(i)\star}
\end{align*}
(where we set $L^0_i = 0$) for a broader selection of parameter groups than shown in the main text. The results further support the validity of Assumption~\ref{ass:generalized-smoothness} with $L^0_i = 0$.

\paragraph{Stochastic gradients.}

\begin{figure}[h]
    \centering
    \begin{subfigure}{0.32\textwidth}
        \includegraphics[width=\textwidth]{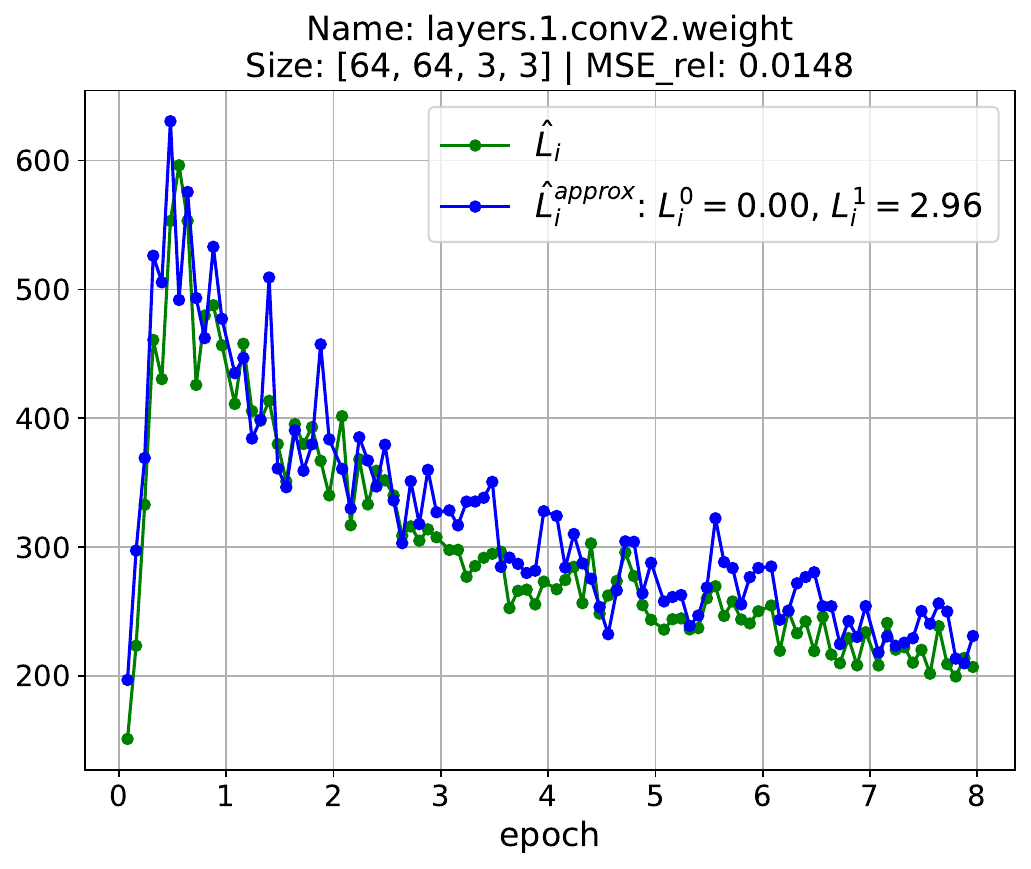}
    \end{subfigure}
    \hfill
    \begin{subfigure}{0.32\textwidth}
        \includegraphics[width=\textwidth]{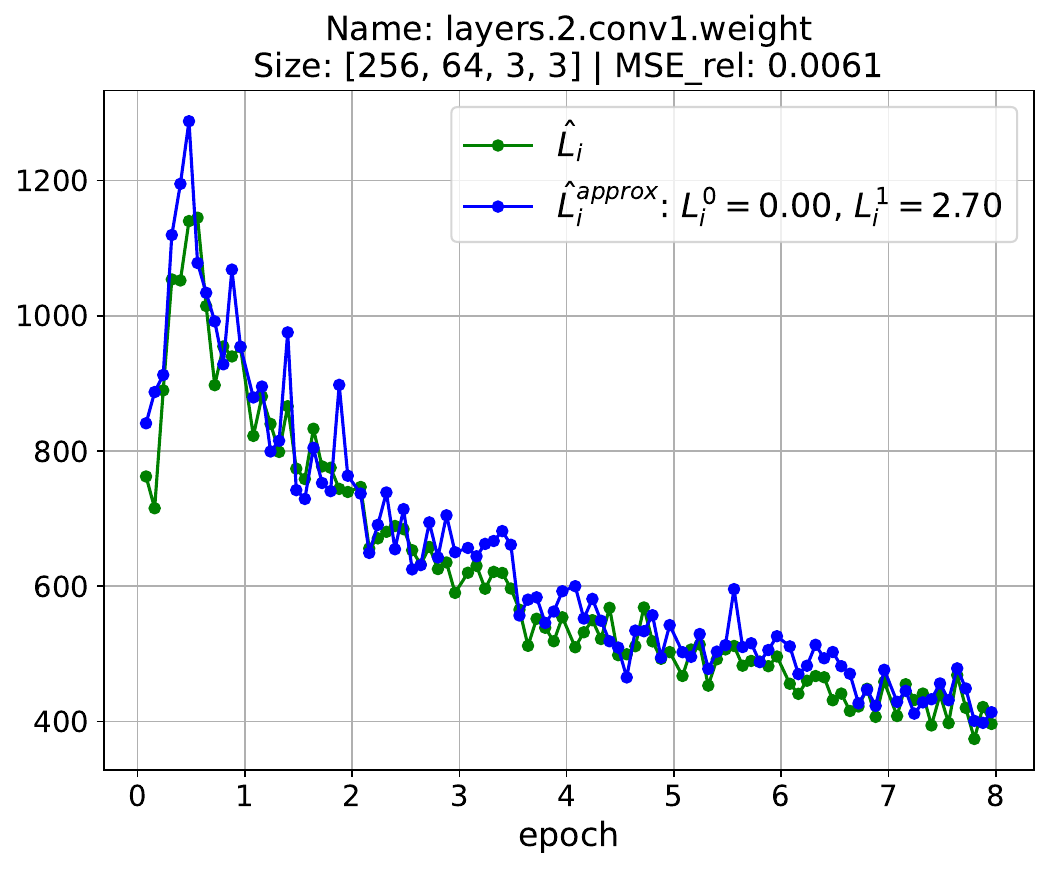}
    \end{subfigure}
    \hfill
    \begin{subfigure}{0.32\textwidth}
        \includegraphics[width=\textwidth]{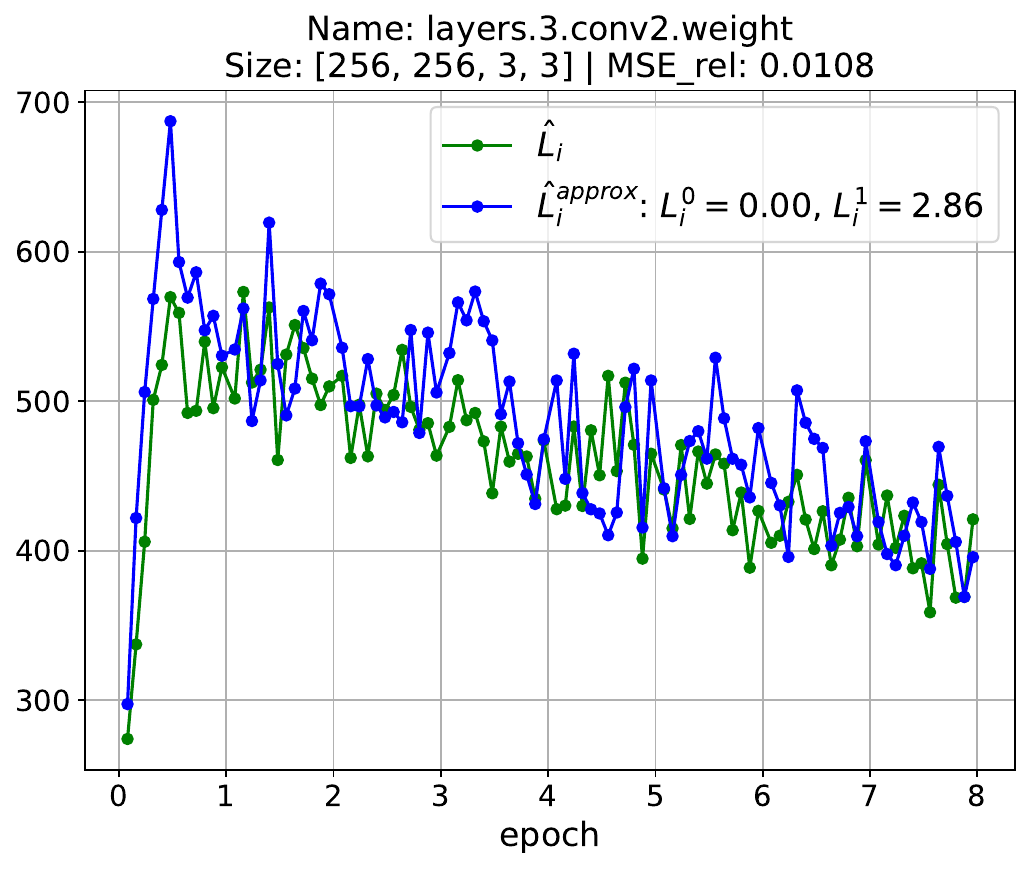}
    \end{subfigure}
    
    \vspace{0.5cm}
    
    \begin{subfigure}{0.32\textwidth}
        \includegraphics[width=\textwidth]{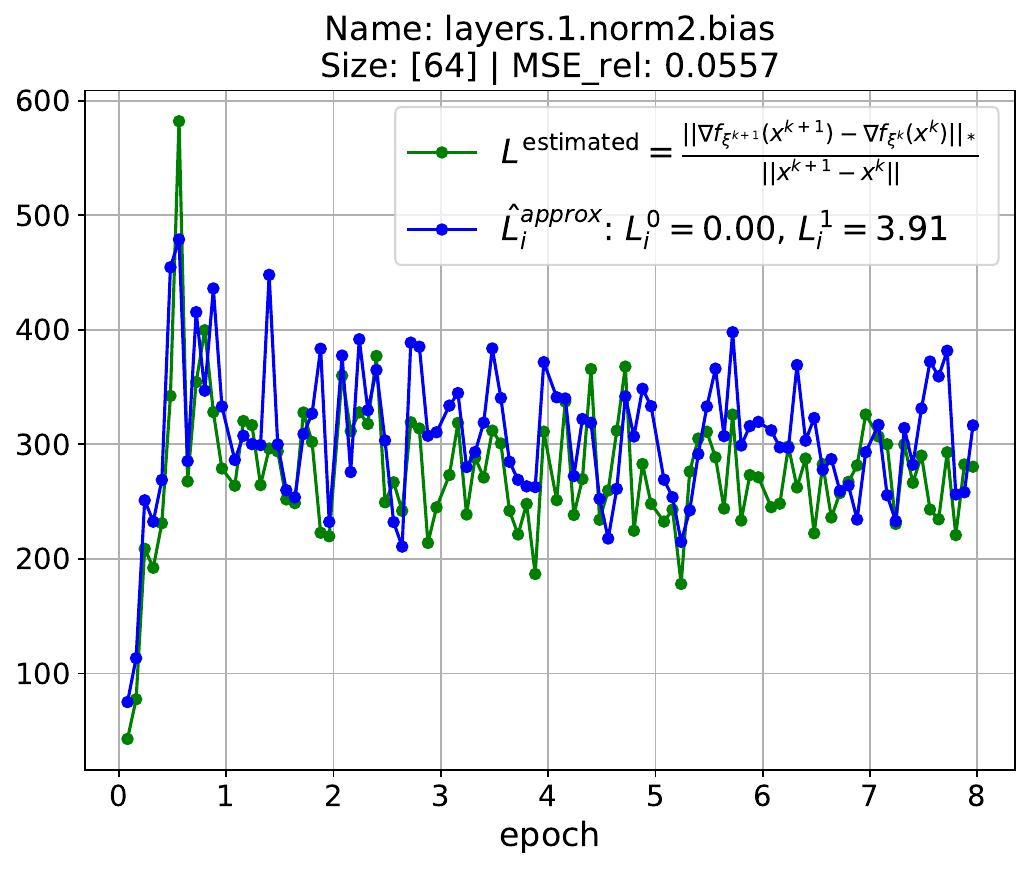}
    \end{subfigure}
    \hfill
    \begin{subfigure}{0.32\textwidth}
        \includegraphics[width=\textwidth]{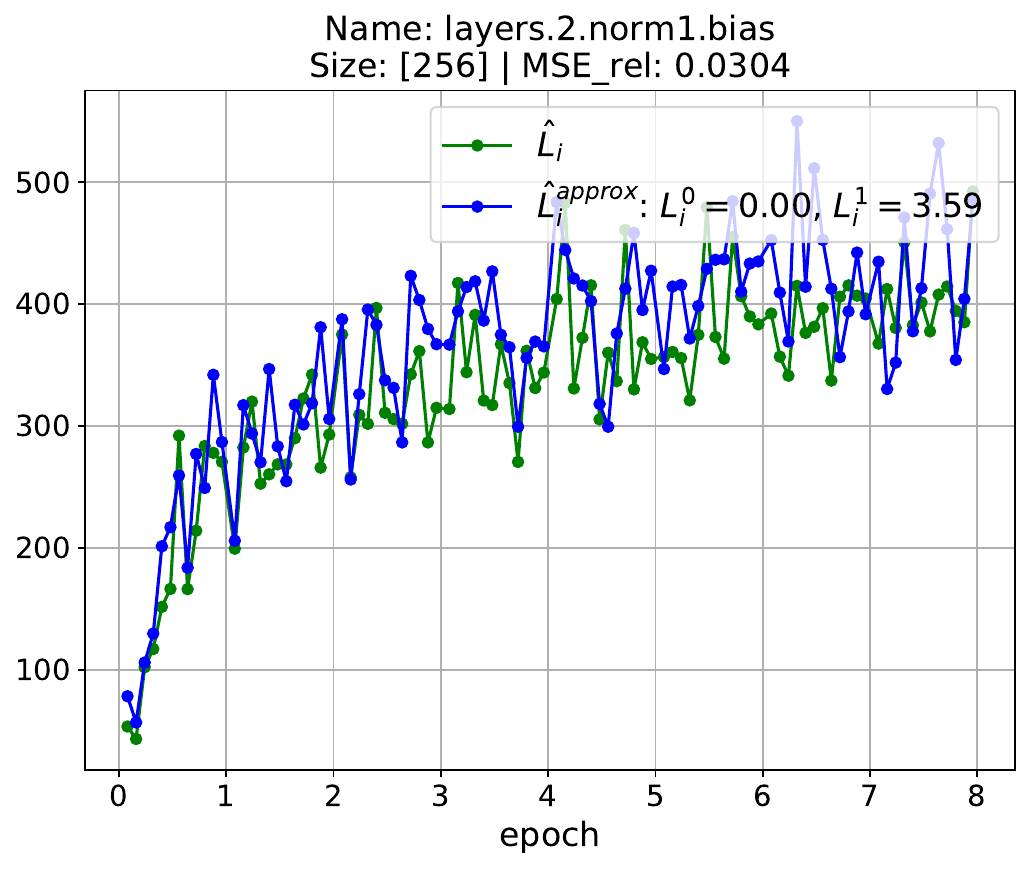}
    \end{subfigure}
    \hfill
    \begin{subfigure}{0.32\textwidth}
        \includegraphics[width=\textwidth]{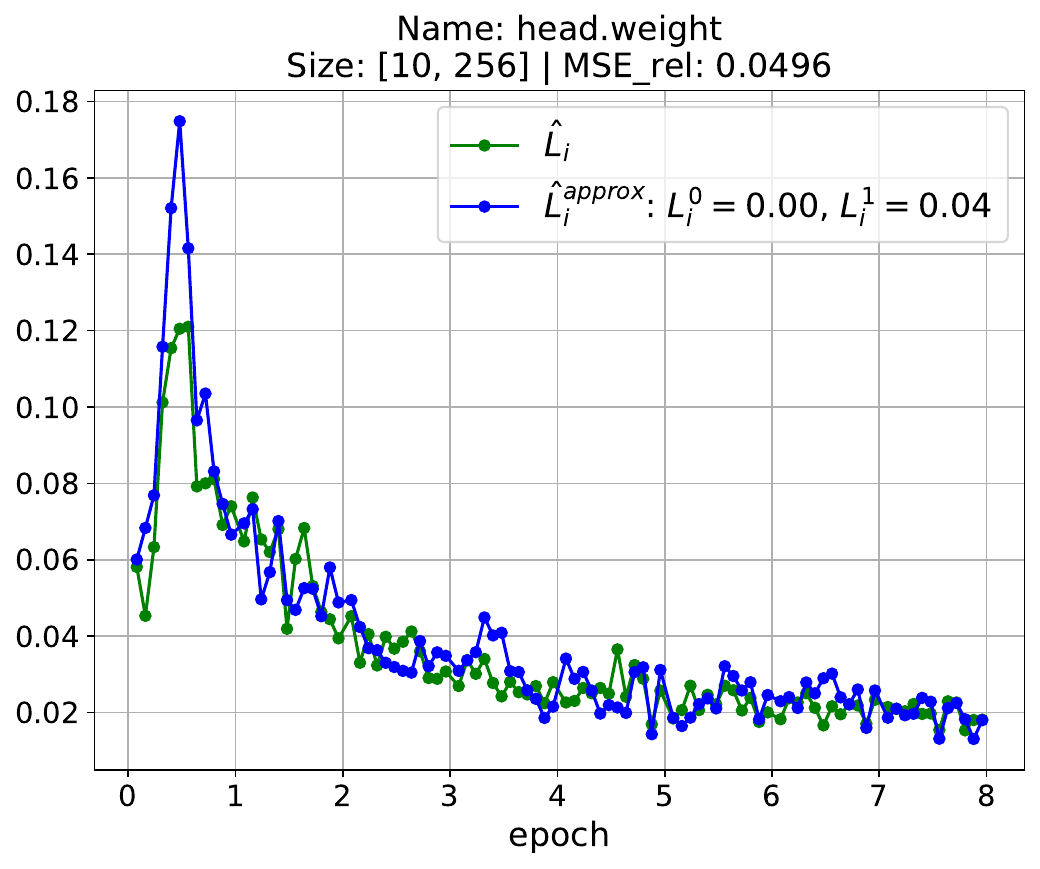}
    \end{subfigure}
    
    \caption{\small Validation of layer-wise $(L^0, L^1)$-smoothness for different groups of parameters of a \texttt{CNN} model along the training trajectories of \algname{unScion} with \textbf{stochastic gradients}. The norms used for each group are as follows: \(\|\cdot\|_{(i)} = \sqrt{\nicefrac{1}{C_i^{out}}}\|\cdot\|_2\) for biases, \( \|\cdot\|_{(i)} = k^2 \sqrt{\nicefrac{C_i^{in}}{C_i^{out}}} \|\cdot\|_{2 \to 2} \) for conv, and \(\|\cdot\|_{(p)} = n_p \|\cdot\|_{1 \to \infty}\) for the last group \( X_p \), associated with classification head weights.}
    \label{fig:4}
\end{figure}

Here, we report results for analogous experiments in the stochastic setting, using noisy gradients \( \nabla_i f_{\xi^k} \). We use momentum as in \citet[Table 10]{pethick2025training}, but do not apply a linear decay schedule.
In Figure~\ref{fig:4}, we plot
\[
\hat{L}_i[k] = \frac{\|\nabla_i f_{\xi^{k+1}}(X^{k+1}) - \nabla_i f_{\xi^k}(X^k)\|_{(i)\star}}{\|X_i^{k+1} - X_i^k\|_{(i)}}, \qquad \hat{L}_i^{\text{approx}}[k] = L^1_i \|\nabla_i f_{\xi^{k+1}}(X^{k+1})\|_{\star},
\]
again setting $L^0_i = 0$. Despite the added variance, we still observe that the stochastic trajectory roughly adheres to Assumption~\ref{ass:generalized-smoothness}.

\end{document}